\newcommand{\R}{\mathbb{R}}
\newcommand{\PP}{\mathbb{P}}
\newcommand{\E}{\mathbb{E}}
\newcommand{\cD}{\mathcal{D}}
\newcommand{\cN}{\mathcal N}
\newcommand{\cR}{\mathcal R}
\newcommand{\tr}{\textnormal{tr}}
\newcommand{\orr}[1]{\overrightarrow{#1}}
\newcommand{\orl}[1]{\overleftarrow{#1}}
\newcommand{\fwdX}[1]{\orr{X_{#1}}}
\newcommand{\bwdX}[1]{\orl{X}_{#1}}
\newcommand{\bwdhatX}[1]{\orl{\hat{X}}_{#1}}
\newcommand{\bwdtildeX}[1]{\orl{\Tilde{X}}_{#1}}
\theoremstyle{plain}
\newtheorem{theorem}{Theorem}[section]
\newtheorem{proposition}[theorem]{Proposition}
\newtheorem{corollary}[theorem]{Corollary}
\newtheorem{lemma}[theorem]{Lemma}
\DeclareMathOperator*{\argmin}{arg\,min}
\definecolor{ForestGreen}{cmyk}{0.91,0,0.88,0.12}
\colorlet{pierrem}{ForestGreen}
\colorlet{claireb}{RoyalBlue}
\definecolor{Blue}{rgb}{0.0,0.72,0.92}
\colorlet{romu}{Blue}
\icmltitlerunning{Optimal Stopping in Latent Diffusion Models}
\begin{document}
\twocolumn[
\icmltitle{Optimal Stopping in Latent Diffusion Models}
 \begin{icmlauthorlist}
    \icmlauthor{Yu-Han Wu}{lpsm,gdm}
    \icmlauthor{Quentin Berthet}{gdm}
    \icmlauthor{G\'erard Biau}{lpsm}
    \icmlauthor{Claire Boyer}{lmo}
    \icmlauthor{Romuald Elie}{gdm}
    \icmlauthor{Pierre Marion}{inria}
  \end{icmlauthorlist}
  \icmlaffiliation{lpsm}{LPSM, Sorbonne Universit\'e}
  \icmlaffiliation{gdm}{Google DeepMind}
  \icmlaffiliation{lmo}{LMO, Universit\'e Paris-Saclay}
  \icmlaffiliation{inria}{INRIA, ENS-PSL, Paris}
  \icmlcorrespondingauthor{Yu-Han Wu}{yhwu@google.com}
  \icmlcorrespondingauthor{Pierre Marion}{pierre.marion@inria.fr}
  \icmlkeywords{Machine Learning, ICML}

  \vskip 0.3in
]
\printAffiliationsAndNotice{}
\begin{abstract}

We identify and analyze a surprising phenomenon of \textit{Latent} Diffusion Models (LDMs) where the final steps of the diffusion can \textit{degrade} sample quality. In contrast to conventional arguments that justify early stopping for numerical stability, this phenomenon is intrinsic to the dimensionality reduction in LDMs. We provide a principled explanation by analyzing the interaction between latent dimension and stopping time. Under a Gaussian framework with linear autoencoders, we characterize the conditions under which early stopping is needed to minimize the distance between generated and target distributions. More precisely, we show that lower-dimensional representations benefit from earlier termination, whereas higher-dimensional latent spaces require later stopping time. We further establish that the latent dimension interplays with other hyperparameters of the problem such as constraints in the parameters of score matching. Crucially, this framework suggests that the reconstruction quality of the autoencoder alone can serve as a proxy to estimate the potential performance of the full LDM. Experiments on synthetic and real datasets illustrate these properties, underlining that early stopping can improve generative quality. Together, our results offer a theoretical foundation for understanding how the latent dimension influences the sample quality, and highlight stopping time as a key hyperparameter in LDMs.
\end{abstract}

\section{Introduction}
\label{sec:intro} 
A pivotal advancement in the evolution of diffusion models is the introduction of the Latent Diffusion Model \citep[LDM,][]{rombach2022high}. 
Instead of performing the computationally intensive diffusion process in the high-dimensional pixel space, LDMs first compress the data into a lower-dimensional latent space using a pretrained autoencoder \citep[AE,][]{kingma2013auto}. The diffusion steps then occur within this more manageable latent representation, significantly reducing computational requirements and training time without a meaningful loss of quality. %
Once the generative process is complete, a decoder maps the resulting latent vector back into a full-resolution image. %
The standard training pipeline for LDMs operates as a two-stage process: an AE is trained first to compress the data, followed by the training of a diffusion model within the resulting latent space.

While recent research has focused on optimizing AE training to yield better latent representations for diffusion \citep[e.g.,][]{kouzelis2025eq, zhou2025alias}, selecting the appropriate latent dimension remains a critical challenge. 
The dimensionality must be fixed during the initial AE training phase, yet identifying the dimension that yields the optimal tradeoff between generation quality and compute cost typically requires the resource-intensive process of training a subsequent LDM for each candidate.

Furthermore, distinct behavioral differences exist between pixel-space and latent-space diffusion. 
As shown in Figure~\ref{fig:intro-comparision-images}, latent-generated images appear to visibly stabilize in the final diffusion steps, in contrast to pixel-space diffusion, where the final steps are crucial for removing remaining noise. 
This challenges the common assumption that LDMs yield the best samples at the final timestep, suggesting instead that early stopping may actually improve image quality, as the decoder may introduce high-frequency artifacts in the final stages.

\begin{figure}
    \centering
    \includegraphics[width=\linewidth]{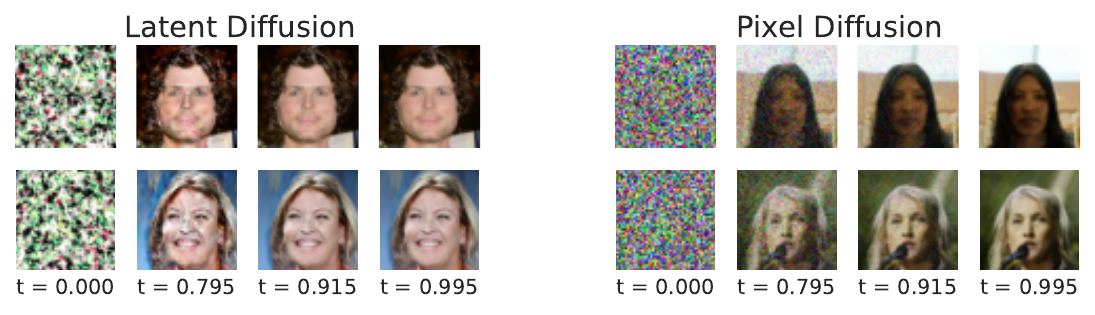}
    \caption{Samples generated with a latent diffusion model (LDM) and a pixel-space diffusion. In the LDM, the before-last sample is nearly denoised and indistinguishable from the final one, whereas in the pixel-space model stronger  noise remains at that timestep. See Appendix \ref{app:exp} for more examples.}
    \label{fig:intro-comparision-images} 
\end{figure}

\paragraph{Contributions and organization.} Our goal in this paper is to investigate the optimal stopping time and latent dimensionality for LDMs. More specifically, our contributions are as follows:
\begin{itemize}[left=2.5em]
    \item Working within a Gaussian framework, we explicitly derive the Wasserstein-2 distance, which is equivalent to the Fr\'echet distance in the Gaussian setting, between the true data distribution and the backward diffusion. 
    In particular, we highlight that the diffusion process is identical to injecting noise into the latent representations prior to decoding (Section \ref{sec:notations}).
    \item We determine the optimal latent dimension and stopping time for diagonal and general covariances. In particular, for data on a linear subspace, we prove that projecting onto that subspace is optimal. This reveals a trade-off: early generation steps favor lower dimensions, while later steps require higher dimensions for faithful reconstruction (Section \ref{sec:main-result}).
    \item When the score is learned by a restricted class of parametrized models (i.e., when the weights of the model are capped), we further establish the existence of an optimal latent projection that optimizes the backward diffusion, and we investigate how its dimension depends on the model class constraint and the data covariance (Section \ref{sec:erm}).
    \item Our theoretical analysis suggests the existence of an optimal interval for each LDM, which can be identified solely by examining the corresponding noisy AEs. Building on this insight, we conduct experiments on ImageNet-256 to demonstrate that these results also have predictive power in practical settings, i.e., we are able to predict the optimal stopping time and optimal interval for different LDMs by the FID curves of the corresponding version of noisy AEs. We train a diverse set of AEs and their corresponding LDMs, ensuring a controlled comparison by maintaining a consistent total parameter count across models. We then show that the Fr\'echet inception distance (FID) of the decoded noisy latent representations closely tracks that of the backward latent diffusion process. Notably, the FID curve of different LDMs cross at the same time as their corresponding version of noisy AEs (Section~\ref{sec:experiment}).
\end{itemize}

\section{Related work}
\label{sec:related-work}
\paragraph{Learning low-dimensional data with diffusion models.} Riemannian Diffusion Models, introduced by \citet{huang2022riemannian, de2022riemannian}, generalize the diffusion process to operate on Riemannian manifolds and preserve a known geometric structure by design. Subsequent theoretical work has analyzed the behavior of standard Denoising Diffusion Probabilistic Models (DDPMs) under the manifold hypothesis, demonstrating that they can implicitly adapt to the data's intrinsic dimension without explicit knowledge of the manifold \citep{tang2024adaptivity, george2025analysis}. 

Further improvements in computational and memory efficiency were introduced by LDMs \citep{rombach2022high} by first training a compression model to transform images into a  lower-dimensional latent space, from which the original data can be reconstructed at high fidelity. In practice, this approach is implemented with a regularized VAE \citep{esser2021taming}. The LDM is then trained in the latent space. Building on this core concept, LDMs have been extended to new domains, such as the generation of high-resolution videos \citep{blattmann2023align}. Furthermore, extensive research has focused on improving LDM's sampling quality, including methods like aligning encoded images with DINOv2 representations \citep{yu2024representation}, and enhancing the robustness of the latent space through explicit or implicit equivariance constraints \citep{kouzelis2025eq, skorokhodov2025improving, zhou2025alias}.
In contrast to standard diffusion models, theoretical properties of LDMs have been little studied; in this work, we investigate the connection of the latent dimension with diffusion stopping time and score matching regularization.
\paragraph{Optimal stopping time of diffusion models.} 
Focusing on a theoretical analysis of this phenomenon, \citet{achilli2025memorization} investigate the optimal stopping time for diffusion models under the assumption that the data is concentrated on a low-dimensional manifold, a concept formalized by the Hidden Manifold Model \citep{goldt2020modeling}. Closer to our contribution is the work of \citet{hurault2025score}. They also investigate the scenario where the true data distribution is Gaussian. Their analysis focuses on learning the score function using SGD, and allows them to determine an optimal stopping time. However, the study of these authors is limited to the diffusion model and did not consider the two-stage architecture of LDMs. Furthermore, the relationship between the data dimension and the derived optimal stopping time remained unexplored in their findings. In contrast, our work directly investigates the influence of the latent dimension on the optimal stopping time by incorporating an autoencoder into the diffusion model framework. We also demonstrate the need of early stopping without discretization of the backward diffusion process.
Finally, recent findings suggest that full decoding (at $t=0$) does not always yield the lowest FID \citep[e.g., ][]{jayasumana2024rethinking}. While this previous work attributed this behavior to be an artifact of the FID metric, in this work we challenge the common assumption that LDMs yield the best samples at the final timestep, suggesting instead that early stopping may actually improve image quality.

\section{Notations and problem setup}
\label{sec:notations}
This section introduces the mathematical formalism of diffusion models in the considered setting.

\paragraph{Latent Diffusion Models.} Let $p_0$ be an unknown distribution in $\R^D$. With a slight abuse of notation, we use in the following the same notation for a distribution and its density function. %
The goal of diffusion models is to generate new observations following $p_0$, given an i.i.d.~sample $(X_1, \hdots, X_n)$ drawn from $p_0$. The mechanism is as follows. Given a final diffusion time $T>0$, a latent dimension $d \leq D$, a semi-orthogonal projection matrix $P \in \R^{d \times D}$, and a scalar function $w:~[0, T]\to~\R$, the latent forward variance-preserving (VP)-SDE \citep{song2020score} is defined by
\begin{equation}
\label{eq:SDE-forward}
d P \fwdX t = -w_t^2 P \fwdX tdt + \sqrt{2w_t^2}d P \orr{W_t},\quad P\fwdX 0\sim P_\#p_0,
\end{equation}
where $\orr{W_t}$ is a standard $D$-dimensional Brownian motion. The role of the matrix $P$ is to perform linear dimension reduction. Two special cases are of interest: first, if $d=D$ and $P$ is the identity matrix, we recover the standard formulation of diffusion models. Second, if $P$ projects on the first few principal components of the sample covariance matrix, this amounts to performing principal component analysis \citep[PCA,][]{jolliffe2002principal}. This projection is equivalent to linear autoencoders \citep{plaut2018principal}, and there exists a pseudo-inverse $P^+\in\R^{D\times d}$ which allows us to map sample back to $\R^D$. We leave to future work to extend the analysis of this phenomenon to other noise schedules such as EDMs \citep{karras2022elucidating}.

Letting $s_P$ be the score function of $P\fwdX t$, i.e., $s_P(x,t) = \nabla\log p_P(x, t)$ where $p_P (\cdot, t)$ is the density function of $P\fwdX t$, the forward diffusion can be reversed in time using the backward process
\begin{align}
\label{eq:SDE-backward}
d P \bwdX t &= (w_{T-t}^2 P \bwdX t + 2w_{T-t}^2s_P(P \bwdX t, T-t))dt\nonumber \\
&\quad+ \sqrt{2w_{T-t}^2}dP\orl{W_t}, \quad P \bwdX 0 \sim P_\# p_T,
\end{align}
where $\orl{W_t}$ is a standard $D$-dimensional Brownian motion. This means that the marginal distribution of $P\bwdX{T-t}$ matches the marginal distribution of $P\fwdX t$ \citep{anderson1982reverse}. %
Hence running the backward diffusion allows to generate a sample from $\bwdX{T} \sim P_\#p_0$,  and then the pseudo-inverse $P^+$ can be used to map the generated sample back to $\R^D$. Importantly, this procedure requires knowledge of $s_P$, which can be estimated using the training sample.

\paragraph{Problem setup.} We assume that the data distribution $p_0$ is a centered $D$-dimensional Gaussian with independent components, i.e., 
\begin{equation}
\label{eq:independent-gaussian}
p_0=\cN(0, \Sigma)
\end{equation}

This specific setting simplifies our study but shall still provide important insights for more general distributions. We consider a hierarchy of latent spaces with increasing dimension $d$ from $1$ to $D$. This is formalized by defining the matrix $P$ in the VP-SDE \eqref{eq:SDE-forward} as the semi-orthogonal projection matrix $P_dO^\top$ where $P_d$ is the projection matrix onto the first $d$ dimensions and $O$ an orthonormal matrix. In this Gaussian framework, the marginal distributions of the backward process are Gaussian with an explicit covariance matrix:
\begin{lemma}
The distribution of $P_dO^\top \bwdX {T-t}$ is 
\begin{align}
\label{eq:add-noise}
P_dO^\top \bwdX{T-t} &\overset{d}{=}a_tP_dO^\top\fwdX0 + b_tZ\\
&\sim\cN(0, a_t^2I_d+b_t^2P_d O^\top\Sigma OP_d^\top)\,,\nonumber
\end{align}
where $Z\sim\cN(0, I_d), a_t=\sqrt{1-b_t^2}$ and $b_t=e^{-\int_0^tw_t^2dt}$, and its score is
\[
s_{P_d}(x, t) = -(a_t^2I_d+b_t^2P_d\Sigma P_d^\top)^{-1}x,\quad x\in\R^d.
\] 
\end{lemma}
Here, the pseudo-inverse of $P$ is $P^+ = OP_d^\top$, and the decoded backward distribution in the original data space is therefore given by $Q\bwdX {T-t} \sim p_{d, t, O}(\Sigma)$, where $Q = OP_d^\top P_d O^\top$ and  %
\[
p_{d, t, O}(\Sigma):= \cN(0, a_t^2Q+b_t^2 Q \Sigma Q)\, .
\]
In practical scenarios, the true covariance matrix $\Sigma$ is unknown and must be estimated from data. The derivation above only depends on $p_0$ through its covariance matrix $\Sigma$. Thus, by estimating the covariance matrix with $\hat\Sigma$, the estimated backward diffusion process follows a Gaussian distribution given by (see Appendix \ref{app:discussion} for a formal discussion)
\[
p_{d, t, O}(\hat\Sigma) = \cN(0, a_t^2 Q+b_t^2 Q\hat\Sigma Q)\,.
\]
Crucially, the expression \eqref{eq:add-noise} together with the pseudo-inverse $P^+$ reveal that the generative process can be reinterpreted as a ``noisy autoencoder'': a model that encodes the data, injects noise into the latent representation, and decodes it. While this equivalence between the learned noisy autoencoder and score-based diffusion model breaks when moving beyond Gaussian data (or restricting the class of score functions as in Section \ref{sec:erm}), the insights from this scenario still apply to more complex data, as we will illustrate numerically in Section \ref{sec:experiment}.

Our goal is to quantify the discrepancy between the target distribution $p_0$ and the decoded backward distribution $p_{d, t, O}(\Sigma)$ as a function of the latent dimension $d$ and time $t$. To this end, we utilize the Wasserstein-2 distance \citep{villani2008optimal}, which corresponds to the Fr\'echet distance in the Gaussian setting \citep{heusel2017gans}:
\begin{align}
\label{eq:frechet-distance}
d_F^2(\cN(\mu_1, \Sigma_1)&, \cN(\mu_2, \Sigma_2)) = \|\mu_1-\mu_2\|_2^2\nonumber\\
&\quad+\tr(\Sigma_1+\Sigma_2 - 2(\Sigma_2^{1/2}\Sigma_1\Sigma_2^{1/2})^{1/2}).
\end{align}
With a slight abuse of notation, we denote by $d_F(X, Y)$ the distance between the distributions of the random variables $X$ and $Y$. We select this metric since it is the \textit{de facto} standard to evaluate generative models; unlike the Kullback–Leibler (KL) divergence, which is ill-defined for distributions with degenerate support, the Fr\'echet distance remains valid in our projection-based setup. Furthermore, we expect related metrics such as Sliced-Wasserstein (SW) and Maximum Mean Discrepancy (MMD) distances to exhibit similar behavior---see Appendix~\ref{app:exp} for experimental results with these two metrics. For simplicity, we define our primary quantity of interest as:
\[
\Delta_{d, t, O}(\Sigma) := d_F^2(p_0, p_{d, T-t}(\Sigma)) \, .
\]
Whenever $O=I_D$ we simply denote it by 
$\Delta_{d, t}(\Sigma) := \Delta_{d, t, I_D}(\Sigma).$
We note that $\Delta_{d, t, O}(\Sigma)$ is defined for $t\in [0, T]$ and $d \in \{1, \dots, D\}$.
We will also be interested in the plug-in estimator $\Delta_{d, t}(\hat\Sigma) = d_F^2(p_0, p_{d, T-t}(\hat\Sigma))$.

\section{Optimal dimension reduction and stopping time}
\label{sec:main-result}
In this section, we address the important question of how dimensionality reduction affects the diffusion process with respect to the intrinsic geometric structure of the data. Our analysis focuses on selecting the rank $d$ of the projection matrix $P_d$ and the stopping time of the diffusions, by examining the dependence of $\Delta_{d, t, O}(\Sigma)$ and $\Delta_{d, t, O}(\hat \Sigma)$ on $d$ and~$t$. We begin by the simpler case where the covariance matrix is assumed to be diagonal before moving on to a general covariance matrix (with unknown eigenvectors).

\subsection{Non-monotonicity of the Fr\'echet distance}
In this subsection and the next one, we focus on the scenario where $p_0$ has independent components, i.e. $\Sigma = \textnormal{diag}(\sigma_1^2, \hdots, \sigma_D^2)$, and diagonal estimators $\hat\Sigma=\textnormal{diag}(\hat\sigma_1^2,\hdots,\hat\sigma_D^2)$, where the diagonal elements are the sample variances $\sigma_k^2=\frac1n\sum_{i=1}^nX_{ik}^2$. We assume that $n$ is sufficiently large such that $\hat\sigma_1^2\geq\hdots\geq\hat\sigma_D^2>0$ holds with high probability. In addition, we  consider $O=I_D$.

This subsection examines the non-monotonic behavior of the Fr\'echet distance as a function of diffusion timesteps, challenging the intuitive expectation of monotonic evolution. The common belief of monotonicity \citep{jayasumana2024rethinking} implies that a stopping time closer to $T$
consistently yields a smaller Fr\'echet distance.
First, we derive a necessary and sufficient condition for this non-monotonicity to occur in the scenario where the target distribution is Gaussian, as in \eqref{eq:independent-gaussian}. The proof of this result, as well as those of the subsequent ones, can be found in the Appendix.
\begin{proposition}
\label{prop:non-monotonic-distance}
For $d\in\{1,\hdots,D\}$, the Fr\'echet distance $\Delta_{d, t}(\Sigma)$ is non-increasing with respect to $t$. On the other hand, $\Delta_{d, t}(\hat\Sigma)$ is non-increasing if and only if 
\begin{equation}
\label{eq:condition-non-monotonic-distance}
\sum_{d'=1}^d (1- \frac{\sigma_{d'}}{\hat \sigma_{d'}})(1 - \hat\sigma_{d'}^2) \geq 0.
\end{equation}
\end{proposition}

Roughly speaking, the variance of each backward diffusion component $P_d\bwdhatX t$ scales monotonically from an initial value close to 1 ($a_T^2+b_T^2\hat{\sigma}_{d'}^2 \approx 1$) to the estimated variance $\hat{\sigma}_{d'}^2$ or any values satisfying~\eqref{eq:condition-non-monotonic-distance}, e.g., when $\hat\sigma_{d'}^2$ are given by an oracle. The distance $\Delta_{d, t}(\hat\Sigma)$ is therefore minimized when the process variance is closest to the true set of variances $(\sigma_{d'}^2)_{1 \leq d' \leq d}$, which happens before time $T$ under condition \eqref{eq:condition-non-monotonic-distance}. %

For a clearer understanding, consider the scenario where $p_0$ is a distribution lying in a linear subspace that is isomorphically equivalent to $\R^{d_0}$. In other words, suppose that $\sigma_{D}=\hdots=\sigma_{d_0+1}= 0$, and also $\hat\sigma_{D}=\hdots=\hat\sigma_{d_0+1}= 0$. Let us first consider the case where there is no projection, i.e., $d=D$. Then, the left-hand side of \eqref{eq:condition-non-monotonic-distance} can be rewritten %
\begin{align*}
\sum_{d'=1}^D (1-\frac{\sigma_{d'}}{\hat\sigma_{d'}})(1-\hat\sigma_{d'}^2) &=\sum_{d'=1}^{d_0} (1-\frac{\sigma_{d'}}{\hat\sigma_{d'}})(1-\hat\sigma_{d'}^2)\\
&\quad+D-d_0.
 \end{align*}
For large enough $n$ and with high probability, $|\sigma_{d'} - \hat\sigma_{d'}|\leq\hat\sigma_{d'}$ for every $d'\in\{1,\hdots, d_0\}$, we may deduce that
\begin{align*}
\sum_{d'=1}^D (1-\frac{\sigma_{d'}}{\hat\sigma_{d'}})(1-\hat\sigma_{d'}^2)\geq D- \big(2 + \max_{d'\in\{1,\hdots, d_0\}} \hat \sigma_{d'}^2\big)d_0.
\end{align*}
The last term is positive as long as the ambient dimension $D$ is large enough. Therefore, in this context, $d_F(\bwdhatX t, \orr X_0)$ is non-increasing.

However, if projecting the diffusion onto the $d_0$-dimensional linear subspace in which the data distribution lies, the $D-d_0$ term in the computation above vanishes, and we are left with the sum up to $d_0$. Then the behavior of the Fr\'echet distance is linked to how the model estimates the variances of the data. If, for most $d'$, the sign of $1-\sigma_{d'}/\hat\sigma_{d'}$ matches the sign of $1 - \hat\sigma_{d'}^2$, the Fr\'echet distance exhibits monotonic behavior. Conversely, if most of the signs differ, the Fr\'echet distance is non-monotonic. In addition, for a given estimation error, non-monotonicity is more likely to occur when the latent dimension is small.

This insight suggests that early stopping can improve the backward diffusion process, bringing the generated distribution closer to the data distribution. We next ask the reverse question: given a stopping time $t$, what is the optimal latent dimension?

\subsection{Optimal projection at time \texorpdfstring{$t$}{t}}

In this subsection, we continue the study of the interaction of the dimension of projection and the stopping time. 
\begin{wrapfigure}[13]{r}{0.4\linewidth}
\includegraphics[width=\linewidth]{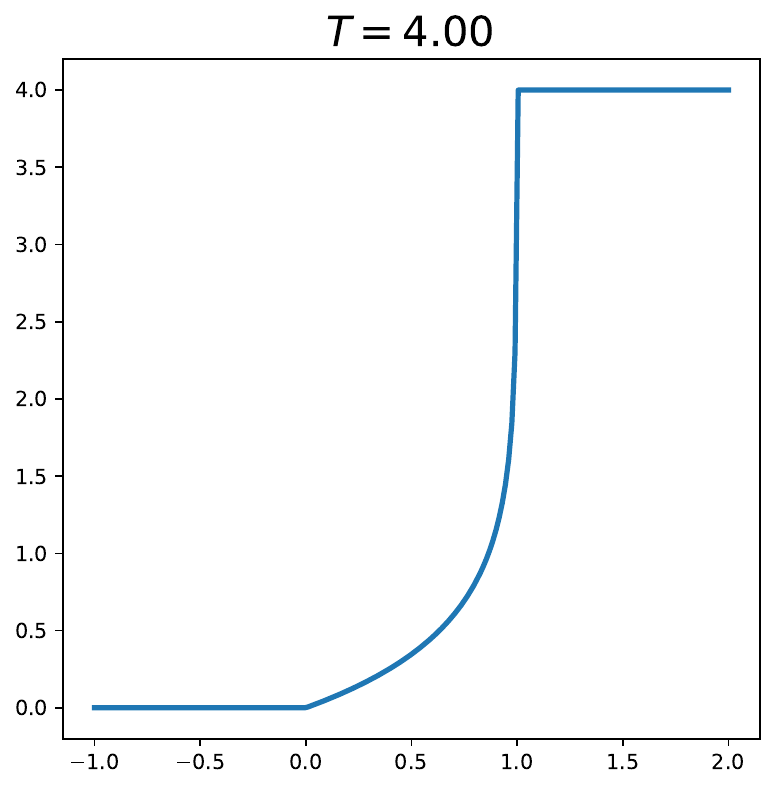}
    \captionof{figure}{$\bar{a}^{-2}$ in the Ornstein-Uhlenbeck process.}
    \label{fig:bar_a_minus2}
\end{wrapfigure}
In contrast to the previous subsection, we show that for each fixed time $t$, there exists an optimal projection $P_d$. %
We still consider Gaussian data with independent components 
\eqref{eq:independent-gaussian}. 
Recall that $a$ is defined in \eqref{eq:forward-distribution} and that it is an increasing map from $[0,T]$ to $[0, a_T]$.
We then let $\bar{a}^{-2}:\R\cup\{\infty\}\to[0, T]$ be the extended inverse function of $a^2$ (see plot in Figure \ref{fig:bar_a_minus2}), meaning that
\begin{equation}\label{eq:bar-a-minus2}
      \bar{a}^{-2}(x) = \begin{cases} 
        0, & \text{for } x < 0, \\ 
        a^{-2}(x), & \text{for } x\in[0, a_T^2],\\ 
        T, & \text{for } x \in (a_T^2, \infty].
      \end{cases}
\end{equation}
In particular, for $t\in[0, T]$, $\bar{a}^{-2}(a_t^2) = t$. For $d\in\{2,\hdots, D\}$, we then let
\[
t_d = T - \bar{a}^{-2}\Big(\frac{3 \sigma_d^2}{(1 - \sigma_d^2)_+}\Big)\]
and
\[\hat t_d = T - \bar{a}^{-2}\Big(\frac{4 \sigma_d^2-\hat\sigma_d^2}{(1 - \hat\sigma_d^2)_+}\Big).
\]
By convention, we let $\hat t_1 = t_1=0$ and $\hat t_{D+1} = t_{D+1}=T$. Observe that the times $t_d$ are in increasing order and between $0$ and $T$. 
Given these time partitions, we can characterize the optimal projection dimension, both for the exact backward process and  the one incorporating score estimation, with the aim of minimizing the distance between the generated and target distributions.
\begin{proposition}
\label{prop:min_wasserstein_projected} Assume that $0<\sigma_D< \cdots < \sigma_1$. Then, for $d\in\{1,\hdots, D\}$ and~$t\in[t_d, t_{d+1})$,
\[
d = \argmin_{d'\in\{1, \hdots, D\}}\Delta_{d', t}(\Sigma)
\]
Furthermore, with high probability, the $\hat \sigma_d$ and the $\hat t_d$ are well-ordered. In this case,
for $t\in[\hat t_{d}, \hat t_{d+1})$ 
\[
d = \argmin_{d'\in\{1,\hdots, D\}}\Delta_{d', t}(\hat\Sigma)
\]
\end{proposition}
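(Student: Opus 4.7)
The plan is to reduce both minimizations to scalar comparisons by exploiting the diagonal structure. Since $\Sigma$ and the covariance of $P_d^\top P_d \bwdX t$, namely $\textnormal{diag}(a_{T-t}^2+b_{T-t}^2\sigma_1^2,\dots,a_{T-t}^2+b_{T-t}^2\sigma_d^2,0,\dots,0)$, are both diagonal, they commute and the Fr\'echet formula \eqref{eq:frechet-distance} collapses to
\[
d_F^2(P_d^\top P_d \bwdX t,\fwdX 0) = \sum_{d'=1}^{d}\bigl(\sqrt{a_{T-t}^2+b_{T-t}^2\sigma_{d'}^2}-\sigma_{d'}\bigr)^2 + \sum_{d'=d+1}^{D}\sigma_{d'}^2,
\]
with the analogous expression for $P_d\bwdhatX t$ obtained by replacing the inner $\sigma_{d'}$ by $\hat\sigma_{d'}$ while the outer $\sigma_{d'}$, coming from the target distribution, remains unchanged. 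The key observation is that this sum is telescoping in $d$, so the optimization over $d$ reduces to analyzing the sign of the one-step increment coordinatewise.

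For the exact backward process, I would define $\Delta_d(t) := d_F^2(P_{d+1}^\top P_{d+1}\bwdX t,\fwdX 0) - d_F^2(P_d^\top P_d\bwdX t,\fwdX 0)$. A direct computation gives $\Delta_d(t) = v - 2\sigma_{d+1}\sqrt{v}$ with $v = a_{T-t}^2 + b_{T-t}^2\sigma_{d+1}^2$, so $\Delta_d(t)\le 0$ iff $\sqrt{v}\le 2\sigma_{d+1}$, i.e.\ $v\le 4\sigma_{d+1}^2$. Using $b_{T-t}^2 = 1-a_{T-t}^2$, this rearranges into $a_{T-t}^2(1-\sigma_{d+1}^2)\le 3\sigma_{d+1}^2$. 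Applying the extended inverse $\bar a^{-2}$, whose boundary conventions in \eqref{eq:bar-a-minus2} are precisely designed to accommodate the degenerate cases $\sigma_{d+1}\ge 1$ and ratios exceeding $a_T^2$, gives exactly the threshold $t\ge t_{d+1}$.

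Next, I would conclude via the ordering of the cutoffs. The map $\sigma\mapsto 3\sigma^2/(1-\sigma^2)_+$ is strictly increasing on $[0,1)$ and equal to $+\infty$ for $\sigma\ge 1$, so the strict ordering $\sigma_1>\cdots>\sigma_D$ forces $0=t_1\le t_2\le\cdots\le t_D\le t_{D+1}=T$. For $t\in[t_d,t_{d+1})$ and any $d'<d$ one has $t\ge t_{d'+1}$, so $\Delta_{d'}(t)\le 0$ and the Fr\'echet distance does not increase when passing from dimension $d'$ to $d'+1$; symmetrically, for any $d'\ge d$ one has $t<t_{d'+1}$, so $\Delta_{d'}(t)\ge 0$ and the distance does not decrease beyond dimension $d$. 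This yields optimality of $P_d$.

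The hat case follows the same pattern. The one-step increment evaluated with $\hat\sigma_{d+1}$ in the covariance and $\sigma_{d+1}$ as the target satisfies $\hat\Delta_d(t)\le 0$ iff $a_{T-t}^2\le (4\sigma_{d+1}^2-\hat\sigma_{d+1}^2)/(1-\hat\sigma_{d+1}^2)_+$, which recovers $\hat t_{d+1}$. The main step I expect to require care, and the only genuine obstacle beyond bookkeeping, is that monotonicity of $d\mapsto \hat t_d$ is not automatic: the threshold depends jointly on $\sigma_d$ and $\hat\sigma_d$, and neither strict ordering alone of the $\hat\sigma_d$ nor of the $\sigma_d$ suffices. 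I would handle this via concentration of the empirical variances: as $n\to\infty$ the quantities $\hat\sigma_d$ converge to $\sigma_d$, so $(4\sigma_d^2-\hat\sigma_d^2)/(1-\hat\sigma_d^2)_+$ converges to the strictly decreasing sequence $3\sigma_d^2/(1-\sigma_d^2)_+$, guaranteeing that both $(\hat\sigma_d)$ and $(\hat t_d)$ are well-ordered with probability tending to one. Once this well-orderedness is granted, the interval argument from the exact case applies verbatim.
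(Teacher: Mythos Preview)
Your proposal is correct and follows essentially the same route as the paper: compute the Fr\'echet distance explicitly via the diagonal structure, analyze the one-step increment $\Delta_d(t)$ to obtain the threshold $t_{d+1}$, and conclude via the monotonicity of the cutoffs. The paper's proof is slightly terser on the hat case (it simply asserts that well-orderedness holds with high probability for large $n$), whereas you give a bit more detail on the concentration argument, but the underlying strategy is identical.
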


\begin{figure}[ht]
    \hfill
    \includegraphics[width=.40\linewidth]{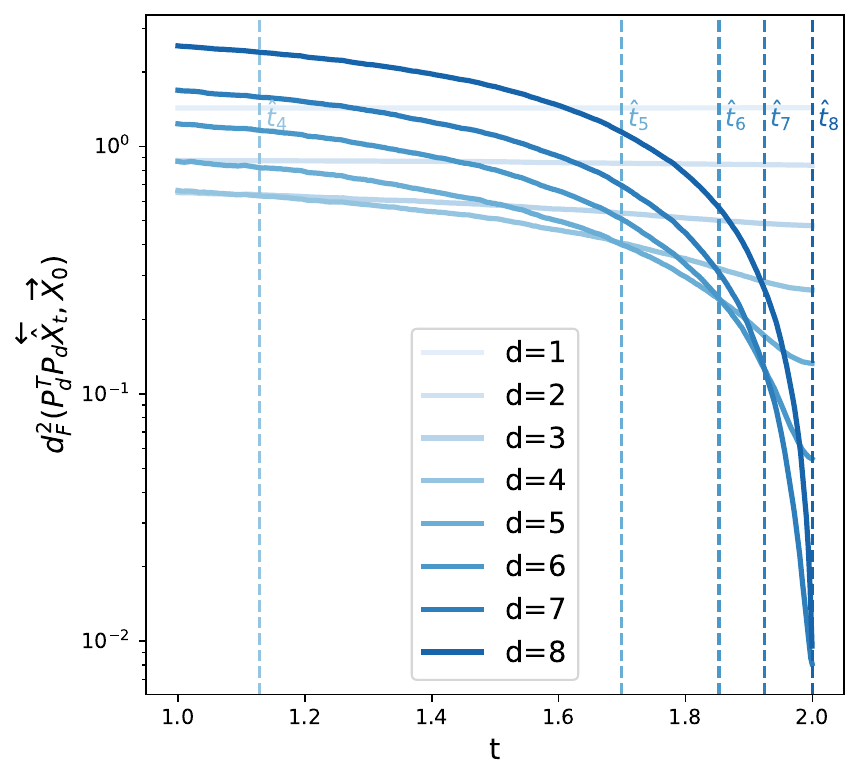}
    \hfill
    \includegraphics[width=.40\linewidth]{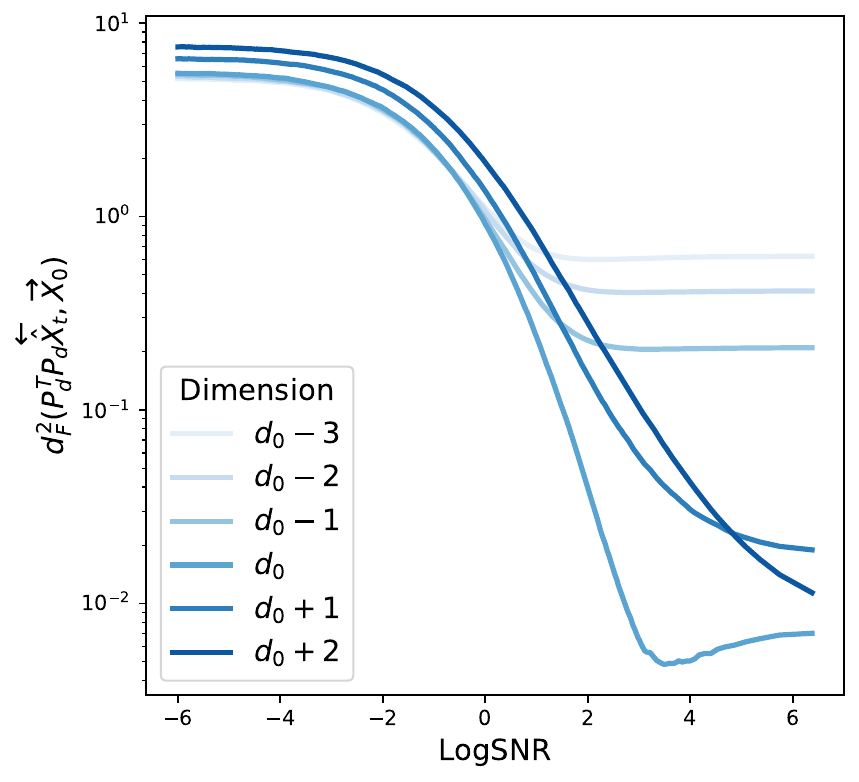}
    \hfill\hfill
    \caption{Plots of $\Delta_{d, t}(\hat\Sigma)$ as a function of the diffusion time $t$, for two sets of variances. (left) All the $\sigma_i$ are nonzero. As expected from Proposition \ref{prop:min_wasserstein_projected}, the $d$-dimensional projection is optimal in $[t_d, t_{d+1})$. (right) The data is supported on a linear subspace of dimension $d_0=4$ with $D=6$. As expected from Proposition \ref{cor:isotropic_case-optimal-stopping-time}, we observe that the minimum distance is achieved in dimension $d_0$ and with early stopping. LogSNR in the $x$-axis is a remapping of time $t$, defined as $\log(b_t^2/a_t^2)$, which we use to increase readability. Experimental details are in Appendix \ref{app:exp}.}
    \label{fig:projection_experiments}
\end{figure}

Proposition \ref{prop:min_wasserstein_projected} quantifies a direct link between early stopping in the backward diffusion process and dimensionality reduction. 
It reveals a time-dependent trade-off: distributions at early stages of the backward process are best approximated in lower-dimensional spaces, while higher dimensions become necessary to faithfully reconstruct the data as $t \to T$, as illustrated in Figure~\ref{fig:projection_experiments} (left). In other words, at an early time step, projecting onto an unnecessarily high-dimensional space can introduce more noise than signal, making a lower-dimensional representation more accurate. Notice that when $4\sigma_d^2\geq 1$, both $t_d$ and $\hat t_d$ are equal to $0$. This implies that a component whose variance is sufficiently large should always be included in the projection, aligning with the intuition that major components are essential for representation. These results hold for backward processes using scores based on either the true or empirical variances.

We next characterize the behavior of the optimal latent dimension and stopping time when data lies on a $d_0$-dimensional subspace, providing a similar result to Proposition \ref{prop:min_wasserstein_projected}. This analysis allows us to precisely determine these two key parameters, as shown next.

\begin{proposition}
\label{cor:isotropic_case-optimal-stopping-time}
Let $\Sigma=\textnormal{diag}(\sigma^2,\hdots,\sigma^2, 0, \hdots, 0)$ with the last $D-d_0$ entries equal to $0$. Let $\varepsilon \in (0,1)$. Then, there exists $\hat\delta_{d_0}\in[0, T]$ such that with probability $1 - 2d_0e^{-\frac{n}{8}}$,
\[
(d_0, T-\hat\delta_{d_0}) = \argmin_{(d', t) \in \{1, \hdots, D\} \times [0, T]}\Delta_{d', t}(\hat\Sigma).
\]
\end{proposition}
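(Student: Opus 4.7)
The proof combines three ingredients: a closed-form Fréchet distance exploiting the diagonal covariance structure, the optimal-dimension result of Proposition~\ref{prop:min_wasserstein_projected}, and a concentration bound on the empirical variances. The probability $1 - 2 d_0 e^{-n/8}$ matches a chi-square tail applied coordinatewise: since $X_{ki}=0$ almost surely for $i>d_0$, one has $\hat\sigma_i = 0 = \sigma_i$ exactly in those coordinates; for $i\le d_0$, $n\hat\sigma_i^2/\sigma^2 \sim \chi^2_n$ and a standard Laurent--Massart bound gives $\PP(|\hat\sigma_i^2 - \sigma^2| > c\sigma^2) \le 2e^{-n/8}$ for an appropriate constant $c$. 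A union bound over $i\in\{1,\ldots,d_0\}$ yields an event $\mathcal{E}$ of probability at least $1 - 2d_0 e^{-n/8}$ on which the estimated variances are well-ordered and Proposition~\ref{prop:min_wasserstein_projected} applies. The parameter $\varepsilon\in(0,1)$ introduced in the statement, which does not explicitly appear in the conclusion, presumably calibrates the width of this concentration event in the authors' proof.

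Because all covariance matrices involved are diagonal, the Fréchet formula~\eqref{eq:frechet-distance} reduces to
\[
d_F^2\bigl(P_d^\top P_d \bwdhatX t, \fwdX 0\bigr) = \sum_{i=1}^{d\wedge d_0}\bigl(\sqrt{a_t^2 + b_t^2 \hat\sigma_i^2} - \sigma\bigr)^2 + \sigma^2 (d_0 - d)_+ + a_t^2 (d - d_0)_+,
\]
where the last two terms encode the penalties for under- and over-projecting. Since $\sigma_d = \hat\sigma_d = 0$ for every $d>d_0$, the definition of $\hat t_d$ in the excerpt forces $\hat t_d = T$ for all such $d$. Proposition~\ref{prop:min_wasserstein_projected} then asserts that dimension $d_0$ is weakly optimal (achieves the minimum over $d'$) for every $t\in[\hat t_{d_0}, T)$, while for $t\in[0,\hat t_{d_0})$ the optimum is attained at some $d(t) < d_0$.

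By continuity and compactness, the map $t\mapsto d_F^2(P_{d_0}^\top P_{d_0}\bwdhatX t, \fwdX 0)$ attains its minimum on $[\hat t_{d_0}, T]$ at some $t^*$; set $\hat\delta_{d_0} := T - t^*$. By the previous step, dimension $d_0$ is among the minimizers over $d'$ at this $t^*$, so $(d_0, t^*)$ realizes the global infimum \emph{provided} that no smaller value is attained on $[0,\hat t_{d_0})$. On that sub-interval, the closed-form expression yields $d_F^2(\cdot, t) \ge \sigma^2$, arising from at least one unmatched signal direction. It thus suffices to exhibit $\bar t\in[\hat t_{d_0}, T]$ with $d_F^2(P_{d_0}^\top P_{d_0}\bwdhatX{\bar t}, \fwdX 0) < \sigma^2$. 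Choosing $\bar t$ so that $a_{\bar t}^2 + b_{\bar t}^2\hat\sigma_{d_0}^2$ is close to $\sigma^2$ and using the concentration on $\mathcal{E}$ bounds each summand by a quantity of order $(\hat\sigma_i-\sigma)^2$, giving an upper bound on the order of $d_0\cdot\max_i(\hat\sigma_i-\sigma)^2$, which lies below $\sigma^2$ on $\mathcal{E}$ as soon as $n$ is sufficiently large.

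The main obstacle is precisely this last quantitative comparison: showing that the localized minimum over $[\hat t_{d_0}, T]$ strictly undercuts the $\sigma^2$ floor inherited from under-projection requires concentration of $|\hat\sigma_i - \sigma|$ on the scale $\sigma/\sqrt{d_0}$, and one must verify that a suitable comparison time $\bar t$ exists inside the possibly narrow interval $[\hat t_{d_0}, T]$. The boundary regimes where $\sigma$ is close to $0$ or close to $1$---which control the length of $[\hat t_{d_0}, T]$---warrant separate inspection via the explicit formula, but should pose no essential difficulty.
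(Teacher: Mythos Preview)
There is a genuine gap in how you invoke Proposition~\ref{prop:min_wasserstein_projected}. Its hypothesis $\sigma_1>\cdots>\sigma_D>0$ fails in the isotropic setting, and the conclusion you extract from it is in fact false here. When $\sigma_1=\cdots=\sigma_{d_0}=\sigma$ with $4\sigma^2<1$, the function $x\mapsto\frac{4\sigma^2-x}{1-x}$ is decreasing, so the thresholds run \emph{backwards}: $\hat t_2\ge\hat t_3\ge\cdots\ge\hat t_{d_0}$. Thus $\hat t_{d_0}$ is the \emph{smallest} of these times, and dimension $d_0$ is only guaranteed optimal on $[\max_{d\le d_0}\hat t_d,T)$, not on your larger interval $[\hat t_{d_0},T)$. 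Concretely, with $d_0=3$, $\sigma=0.1$ and $(\hat\sigma_1^2,\hat\sigma_2^2,\hat\sigma_3^2)=(0.02,0.015,0.01)$ (all within the event $|\hat\sigma_i^2-\sigma^2|\le\sigma^2$), at $t=\hat t_3$ one computes $d_F^2(P_3^\top P_3\bwdhatX t,\fwdX0)\approx 0.0376>0.0351\approx d_F^2(P_1^\top P_1\bwdhatX t,\fwdX0)$. So your assertion that $(d_0,t^\ast)$ realises the minimum over $d'$ at $t^\ast$ breaks down, and the floor argument built on ``the optimum is attained at some $d(t)<d_0$'' cannot cover the gap region $[\hat t_{d_0},\max_d\hat t_d)$. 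Separately, the quantitative step you flag as the main obstacle is indeed obstructed: the probability $1-2d_0e^{-n/8}$ corresponds only to the event $|\hat\sigma_i^2-\sigma^2|\le\sigma^2$, not to concentration at scale $\sigma/\sqrt{d_0}$, so you cannot appeal to ``$n$ sufficiently large'' without changing the stated probability.

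The paper avoids both difficulties with a different idea. It sets $\hat\delta_{d_0}$ to be the minimiser of $t\mapsto d_F(P_{d_0}^\top P_{d_0}\bwdhatX t,\fwdX0)$ over \emph{all} of $[0,T]$ (its location is pinned down by the convexity from Proposition~\ref{prop:non-monotonic-distance}). Comparison with dimensions $d>d_0$ is immediate since $\hat t_d=T$ there. For $d<d_0$, the paper works on $[0,\hat t_\ast]$ with $\hat t_\ast:=\max_{d\le d_0}\hat t_d$ and proves that each curve $t\mapsto d_F(P_d^\top P_d\bwdhatX t,\fwdX0)$ is \emph{non-increasing} on this interval. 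This follows from establishing, for every $d'\le d_0$, the pointwise sign
\[
\Big(1-\frac{\sigma}{\sqrt{\hat\sigma_{d'}^2+(1-\hat\sigma_{d'}^2)\,a_{T-\hat t_\ast}^2}}\Big)(1-\hat\sigma_{d'}^2)\ \ge\ 0,
\]
which is derived from the explicit value of $a_{T-\hat t_\ast}^2$ together with only the crude bound $\hat\sigma_{d'}^2\le 2\sigma^2$. One then chains $d_F(P_d,t)\ge d_F(P_d,\hat t_\ast)\ge d_F(P_{d_0},\hat t_\ast)\ge d_F(P_{d_0},T-\hat\delta_{d_0})$, the middle inequality now being legitimate since at $\hat t_\ast$ all pairwise comparisons go the right way. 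No floor comparison with $\sigma^2$, and hence no sharper concentration, is needed.
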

The proposition shows that the optimal generation strategy for data with a low-rank structure involves both early stopping and projection (see Figure~\ref{fig:projection_experiments} (right) for an illustration).
The proof indicates that, under the non-monotonicity condition of Proposition \ref{prop:non-monotonic-distance}, the optimal early stopping time $T- \hat\delta_{d_0}$ is strictly before $T$. Beyond preventing numerical instability as $t \to T$ \citep[e.g.,][]{yang2023lipschitz}, Proposition \ref{cor:isotropic_case-optimal-stopping-time} thus offers a new justification for early stopping. In other words, stopping at a specific time $\hat{\delta}_{d_0}$ is not merely a practical fix, but an optimal strategy to improve generation quality by minimizing the distance between the generated and true data distributions.%

Furthermore, this result confirms the intuition that confining the generative process to the dataset's intrinsic dimensionality is the most effective approach for low-rank data. This strategy is not only computationally more efficient than running the diffusion in the ambient space, but also enhances generation quality by avoiding the noise introduced by superfluous dimensions. The assumption of exact low-rank data is sometimes relaxed in the literature to an assumption on a decaying spectrum of eigenvalues. In the setting presented in this section where we plug-in the estimated covariance matrix into the true score, running the backward diffusion in the full data dimension is always optimal when the data has  decaying but nonzero eigenvalues. In Section \ref{sec:erm}, we move to a setting closer to practice, where the score is learned by minimizing a constrained empirical risk. We will then prove the existence of a smaller optimal dimension for data with a decaying spectrum.

\subsection{Generalization to arbitrary Gaussian distributions}
\label{sec:general-gaussian}
We now explain how to generalize some of our preceding analysis from Gaussian distributions with diagonal covariance matrices to the more general case $p_0 = \mathcal{N}(0, \Sigma)$ for arbitrary $\Sigma$, and the backward processes with the new general data distribution $p_0$.
Our goal is to establish a result analogous to Proposition~\ref{prop:min_wasserstein_projected}, that is, to characterize the optimal latent dimension given a stopping time of the diffusion process. 

To this end, let $\Sigma = O\Lambda O^\top$ be the eigen decomposition of $\Sigma$, where $O$ is an orthogonal matrix and $\Lambda$ is the diagonal matrix of eigenvalues, which we assume are distinct and ordered $\sigma_1^2 > \hdots > \sigma_D^2 > 0$. 
As in Section~\ref{sec:main-result}, we define a time partition by setting $t_1=0$ and $t_{D+1} = T$, and defining the intermediate timesteps for $d\in\{2, \hdots, D\}$ as:
\[
t_d = T-\bar{a}^{-2}\left(\frac{3\sigma_d^2}{(1-\sigma_d^2)_+}\right),
\]
where $\bar{a}^{-2}$ is given in \eqref{eq:bar-a-minus2}.
This definition, combined with the ordering of the eigenvalues, yields a sequence $0 = t_1 \leq t_2 \leq \cdots \leq t_D \leq t_{D+1} = T$. We show next that for this general Gaussian case, PCA projection onto $d$ components is optimal precisely within the interval $[t_d, t_{d+1})$.

\begin{proposition} 
\label{prop:general-independent-gaussian-min-wasserstein}
For $2\leq d\leq D$ and $t\in[t_d, t_{d+1})$, we have
\[
d = \argmin_{d'\in\{1,\hdots, D\}} \Delta_{d, t, O}(\Sigma)\,.
\]
\end{proposition}
However, in practical applications, one rarely has access to the true underlying covariance matrix $\Sigma$ or its eigenbasis $O$. Instead, one must rely on estimations derived from observed data, where PCA is commonly used. Denote $\hat\Sigma=\frac1n\sum_{i=1}^n X_iX_i^\top$ to be the empirical covariance matrix. Applying a spectral decomposition yields $\hat\Sigma=\hat O \hat \Lambda \hat O^\top$, where $\hat O$ contains the orthonormal eigenvectors and $\hat \Lambda = \textnormal{diag}(\hat\sigma_1^2,\hdots,\hat\sigma_D^2)$ where $\hat\sigma_D^2<\hdots<\hat\sigma_1^2$ are the corresponding eigenvalues. Denote $S(\Sigma) = \sum_{d'=1}^D\max(\sigma_d, \sigma_d^2)$. For $u\geq 0$ and $d\in\{2,\hdots, D\}$, we let $\hat T_d(u) = T - \tau^-_d(u)$ and $\hat t_d(u) = T-\tau^+_d(u)$ where
\begin{align*}
\tau^\pm_d(u) = \bar{a}^{-2}\Bigg(\frac{\hat\sigma_d^2 
\pm4S(\Sigma)\varepsilon_u + 2\hat\sigma_d\sqrt{\hat\sigma_d^2-4S(\Sigma)\varepsilon_u}}{(1 - \hat\sigma_d^2)_+}\Bigg),
\end{align*}
where $\varepsilon_u=\frac{8C}{3}(\sqrt{\frac{D+u}{n}}+\frac{D+u}{n})$. We assume that $\varepsilon_u$ is sufficiently small (i.e., $n$ large enough) so that the square root in the definition above is well-defined and the argument of $\bar{a}^{-2}$ is positive. By convention, we set $\hat T_1(u) = 0 $ and $\hat t_{D+1}(u) = T$. Thus, for small $\varepsilon_u$, these timesteps are ordered as 
\[
0 = \hat T_1(u)< \hat t_2(u)< \hat T_2(u)<\cdots<\hat t_{D+1}(u) = T.
\]
We are now in a position to describe the optimal projection strategy at each stopping time.
\begin{proposition}
\label{prop:general-gaussian-min-wasserstein}
For $d\in\{1,\hdots,D\}$ and any $t \in [\hat T_d(u), \hat t_{d+1}(u)]$, with probability $1-2e^{-u}$,
\[
d= \argmin_{d'\in\{1,\hdots,D\}}\Delta_{d, t, \hat O}(\hat \Sigma)\,.
\]
\end{proposition}
This proposition generalizes the result of Proposition \ref{prop:min_wasserstein_projected} to the case of a general Gaussian data distribution. The analysis reveals that, for any latent dimension $d$, there exists a time interval where a $d$-dimensional projected diffusion process minimizes the distance to the target distribution with high probability. Notably, this result is consistent with our previous conclusions. In the idealized scenario where the variance estimation error is zero (i.e., $\varepsilon_u=0$, implying $\hat\Sigma=\Sigma$,) the formula for the optimal time $\hat t_d=\hat T_d$ simplifies precisely to the one derived in Proposition \ref{prop:general-independent-gaussian-min-wasserstein}.

\section{Performance of the score matching ERM}
\label{sec:erm}

In the previous section, we analyzed the properties of diffusion processes with a score tailored to independent Gaussian distributions involving either exact or plugged-in estimated variances. In practice, the score is rather \textit{learned} by solving a regression problem called score matching. Specifically, given a training sample $(X_1, \hdots, X_n)$ independently drawn from the data distribution $p_0$, the empirical score matching objective writes
\begin{equation}    \label{eq:score-matching-obj}
\cR(s) = \frac{1}{n} \sum_{i=1}^n \E_{t \sim \mathcal{T}, \varepsilon \sim \mathcal{N}(0, I_D)}\left\|s(b_t X_i+a_t\varepsilon, t)+\frac{\varepsilon}{a_t}\right\|^2  \end{equation}
for some absolutely continuous distribution $\mathcal{T}$ with positive mass over $[0, T]$, and where the predictor $s: \R^D \times \R \to \R$ belongs to some hypothesis class $\mathcal{F}_{C}$, typically a neural network architecture.
In our context, recall that the score function of a Gaussian distribution with diagonal covariance $\Sigma$ is
\begin{equation}
\label{eq:true-score-function}
\nabla\log p_t(x) = -(a_t^2I_D+b_t^2\Sigma)^{-1}x,
\end{equation}
which takes the form of a time-dependent diagonal matrix multiplied by $x$. Thus a natural choice of hypothesis class given the form of the true score function \eqref{eq:true-score-function} is, for $C>1$,
\begin{align*}
\mathcal{F}_{C} =\left\{s_M: \mathbb{R}^d\right.&\times\mathbb{R}_+ \to \mathbb{R}^d: s_M(x, t) = -M(t)x, \qquad \\ &M(t)=\textnormal{diag}(m_1(t), \hdots, m_D(t))\\
&  \left. m_i \in \mathcal{L}_2(\mathbb{R}_+, \mathbb{R}), \|m_i\|_\infty<C\right\}.
\end{align*}
The assumption of $C>1$ is essential since we start the backward diffusion from a standard Gaussian distribution whose score function is the identity function. We introduce the norm constraint on the weights to account for two phenomena. First, the norm of the true score function \eqref{eq:true-score-function} blows up for times close to $0$ (in particular if the covariance matrix is singular or close to singular), which is known to create numerical instabilities \citep{ lu2023mathematical, yang2023lipschitz}. This is mitigated in practice for instance by early stopping the diffusion. Here, we implement this mitigation by capping the weight norm. Second, it is known that gradient descent has an implicit bias towards learning low-norm solutions. Although quantifying this effect is beyond the scope of this paper, the explicit weight constraint provides an analytically tractable analogue.
More precisely, one can easily derive the following explicit formula for the minimizer of the score matching over $\mathcal{F}_{C}$.%
\begin{proposition} \label{prop:optimal-constrained-score}
Let $\hat\sigma_d^2 = \frac{1}{n}\sum_{i=1}^n X_{id}^2$ be the empirical variance for the $d$-th component of the training data.  Then the minimizer of the score matching objective \eqref{eq:score-matching-obj} over $\mathcal{F}_{C}$ is given by $\hat M(t) = \textnormal{diag}(\hat m_1(t), \hdots, \hat m_D(t))$ where, for $d\in\{1,\hdots , D\}$,
\[
\hat m_d(t) = \min\left(C, \frac{1}{a_t^2+b_t^2\hat\sigma_d^2}\right) . %
\]
\end{proposition}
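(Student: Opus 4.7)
The plan is to expand the squared-norm objective, exploit the diagonal structure of $M(t)$ to decouple the problem across coordinates, and then reduce the minimization to a pointwise quadratic in $t$ and $d$ whose constrained minimizer is immediate.

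First, I would substitute $s_M(x,t) = -M(t)x$ with $M(t)=\textnormal{diag}(m_1(t),\ldots,m_D(t))$ into \eqref{eq:score-matching-obj}. Writing $X_i = (X_{i1},\ldots,X_{iD})$ and $\varepsilon = (\varepsilon_1,\ldots,\varepsilon_D)$, the integrand expands coordinatewise as
\[
\left\|-M(t)(b_t X_i + a_t \varepsilon) + \tfrac{\varepsilon}{a_t}\right\|^2 = \sum_{d=1}^D \Big(m_d(t)(b_t X_{id} + a_t \varepsilon_d) - \tfrac{\varepsilon_d}{a_t}\Big)^2.
\]
Taking expectation over $\varepsilon \sim \mathcal{N}(0, I_D)$ and using $\mathbb{E}[\varepsilon_d]=0$, $\mathbb{E}[\varepsilon_d^2]=1$, the cross terms involving $X_{id}\varepsilon_d$ vanish, and each coordinate contributes $m_d(t)^2 (b_t^2 X_{id}^2 + a_t^2) - 2 m_d(t) + 1/a_t^2$.

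Next, I would average over $i$, which replaces $X_{id}^2$ by $\hat\sigma_d^2$, and then take expectation over $t \sim \mathcal{T}$ to obtain
\[
\mathcal{R}(s_M) = \sum_{d=1}^D \mathbb{E}_{t\sim \mathcal{T}}\!\left[(a_t^2 + b_t^2 \hat\sigma_d^2)\, m_d(t)^2 - 2 m_d(t) + \tfrac{1}{a_t^2}\right].
\]
Since the objective decouples across $d$ and is an integral of a pointwise function of $m_d(t)$, the minimization reduces, for each $d$ and each $t$, to a scalar constrained quadratic problem: minimize $f_{d,t}(m) = (a_t^2 + b_t^2 \hat\sigma_d^2)\,m^2 - 2m$ subject to $|m|<C$. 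The unconstrained minimizer is $m^\star = 1/(a_t^2+b_t^2\hat\sigma_d^2)>0$, so the constrained minimizer over $(-C,C)$ is $\min(C, m^\star)$, which gives exactly $\hat m_d(t)$.

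The only subtlety is verifying that the pointwise minimizer lies in $\mathcal{F}_C$, i.e.\ that $t \mapsto \hat m_d(t)$ is in $\mathcal{L}_2(\mathbb{R}_+,\mathbb{R})$ with $\|\hat m_d\|_\infty < C$. The sup-norm bound is clear since $\hat m_d(t) \leq C$ by construction (and strict inequality can be obtained by noting that the infimum is attained in the closure; if the original class is defined by a strict inequality, then $\hat m_d$ is the infimum and any function in $\mathcal F_C$ can approach this infimum arbitrarily closely, which is the sense in which it is the minimizer). The $\mathcal{L}_2$ integrability follows since $\hat m_d$ is bounded and the relevant integral over $[0,T]$ is finite against $\mathcal{T}$; the measurability follows from continuity of $t \mapsto a_t, b_t$. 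This is the only mildly delicate step, but it is routine once one agrees to interpret the constraint $\|m_d\|_\infty<C$ in the natural way.
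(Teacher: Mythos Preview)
Your proposal is correct and follows essentially the same approach as the paper: expand the objective, use the diagonal structure to decouple across coordinates and across $t$, then minimize the resulting scalar quadratic over the constraint interval to obtain $\hat m_d(t)=\min(C,1/(a_t^2+b_t^2\hat\sigma_d^2))$. Your treatment of the strict inequality $\|m_d\|_\infty<C$ and the $\mathcal{L}_2$ membership is actually more careful than the paper's own proof, which simply minimizes over $[-C,C]$ without commenting on these technicalities.
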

Our goal in the following is to characterize the optimal latent dimension when using the score defined by Proposition \ref{prop:optimal-constrained-score}. For this purpose, as before, we quantify the distance between the data distribution and the distribution generated by the backward process for $d\in\{1,\hdots, D\}$. While we do not consider early stopping here, to focus on the influence of the regularization parameter~$C$ on the choice of the latent dimensionality, we note that equation \eqref{eq:estimated-backward-diffusion-distribution} in Appendix \ref{app:discussion} gives the formulas for the distribution of the backward diffusion at any time $t>0$, from which the Fréchet distance for the early-stopped process could be derived and numerically studied. For simplicity, we keep the data distribution $p_0$ to be a Gaussian distribution with independent component, and specialize to the Ornstein-Uhlenbeck process. In this case, the sample $\bwdtildeX t$ are generated by the backward SDE for $t \in [0, T]$
\begin{align*}
d\bwdtildeX t &= (\bwdtildeX t +2s_{\hat M}(\bwdtildeX t, T-t))dt + \sqrt2d\orl{W_t},\\
\bwdtildeX 0&\sim\cN(0, I_D).
\end{align*}
Note that we consider the standard setting in which the backward process starts from a standard Gaussian. We can then characterize the optimal projection for the latent diffusion, as shown next.
\begin{proposition} \label{prop:learned_score_optimality}
Define $1\leq d_1 \leq d_2 \leq D$ as follows:
\[
d_1=\max\{d'\in\{1,\hdots,D\}: 1/C\leq\hat\sigma_{d'}^2\}\]
and
\[d_2=\min\Big\{d'\in\{1,\hdots, D\}:\frac{1}{2C-1} \geq 4\sigma_{d'}^2\Big\}.
\]
(If the corresponding set in their definition is empty, we let $d_1=1$ and $d_2=D$, respectively.)
Then, with high probability, there exists an optimal projection dimension $d_1\leq d_{\min}\leq d_2$ such that
\[
d_{\min}= \argmin_{d'\in\{1,\hdots,D\}}d_F(P_{d'}^\top P_{d'}\bwdtildeX T, \fwdX 0)\,.
\]
\end{proposition}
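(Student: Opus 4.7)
Since $\hat M$ is diagonal and $\bwdtildeX 0 \sim \cN(0, I_D)$, each coordinate of $\bwdtildeX t$ evolves as an independent one-dimensional Gaussian process with mean $0$ and variance $v_d(t) := \VV(\bwdtildeX t^{(d)})$ solving the linear ODE
\[
v_d'(t) = 2\bigl(1-2\hat m_d(T-t)\bigr)v_d(t) + 2, \qquad v_d(0)=1.
\]
Using the Fr\'echet distance formula for diagonal centered Gaussians,
\[
d_F^2(P_{d'}^\top P_{d'}\bwdtildeX T, \fwdX 0) = \sum_{d=1}^{d'}\bigl(\sqrt{v_d(T)} - \sigma_d\bigr)^2 + \sum_{d=d'+1}^D \sigma_d^2,
\]
so the marginal effect of promoting coordinate $d$ into the projection is $\Delta_d := v_d(T) - 2\sigma_d\sqrt{v_d(T)}$, which is strictly negative iff $v_d(T) < 4\sigma_d^2$. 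Since $d_F^2(d') - d_F^2(0) = \sum_{d=1}^{d'}\Delta_d$, it will suffice to show that $\Delta_d < 0$ for every $d \leq d_1$ and $\Delta_d \geq 0$ for every $d > d_2$: the optimal $d_{\min}$ then necessarily lies in $[d_1, d_2]$.

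The central computation is to solve the variance ODE explicitly in forward time $s = T-t$ via $V_d(s) := v_d(T-s)$, which satisfies $V_d(T)=1$ and $V_d'(s) = (4\hat m_d(s) - 2)V_d(s) - 2$. For $d \leq d_1$ the assumption $\hat\sigma_d^2 \geq 1/C$ keeps $u_d(s) := a_s^2 + b_s^2\hat\sigma_d^2$ above $1/C$ on all of $[0,T]$, so $\hat m_d(s) = 1/u_d(s)$ uniformly. A direct check shows that $u_d$ itself satisfies the ODE, hence $V_d = u_d + W_d$ with $W_d$ solving the homogeneous equation and $W_d(T) = 1 - u_d(T) = b_T^2(1-\hat\sigma_d^2)$. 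Explicit integration yields $v_d(T) = \hat\sigma_d^2 + O(e^{-cT})$ for a constant $c>0$, while a union-bounded chi-squared concentration argument (analogous to the one used in the proof of Proposition~\ref{cor:isotropic_case-optimal-stopping-time}) delivers $\hat\sigma_d^2 \leq 2\sigma_d^2$ simultaneously for all $d \leq d_1$ with high probability. Combining the two gives $v_d(T) < 4\sigma_d^2$ on this event, i.e., $\Delta_d < 0$.

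For $d > d_1$ the constraint $\hat\sigma_d^2 < 1/C$ makes capping active on an interval $[0, s_d^*]$ of forward time where $u_d(s_d^*) = 1/C$. The preceding analysis still applies on $[s_d^*, T]$ and delivers $V_d(s_d^*) \geq 1/C$ up to an exponentially small term. On $[0, s_d^*]$ the ODE becomes constant coefficient with unstable equilibrium $1/(2C-1) < 1/C$, yielding
\[
V_d(s) = \frac{1}{2C-1} + \Bigl(V_d(s_d^*) - \frac{1}{2C-1}\Bigr)e^{(4C-2)(s - s_d^*)},
\]
from which $v_d(T) = V_d(0) \geq 1/(2C-1)$. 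Whenever $d > d_2$, the definition of $d_2$ forces $1/(2C-1) \geq 4\sigma_d^2$ and hence $\Delta_d \geq 0$. Combined with the previous paragraph, this exhibits an optimal $d_{\min} \in [d_1, d_2]$.

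The main obstacle I anticipate is the careful control of the initial-condition mismatch $v_d(0) = 1 \neq u_d(T)$: this discrepancy must be propagated through the piecewise linear ODE and shown to be harmless at both ends of the range---neither breaking the strict inequality $v_d(T) < 4\sigma_d^2$ for $d \leq d_1$ (after concentration), nor pulling $V_d(s_d^*)$ below $1/(2C-1)$ for $d > d_1$. The remaining steps are a routine explicit integration of a piecewise linear ODE and a union bound over chi-squared tails.
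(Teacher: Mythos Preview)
Your overall strategy is the same as the paper's: derive the variance ODE for each coordinate, express the Fr\'echet distance as a telescoping sum of increments $\Delta_d = v_d(T)-2\sigma_d\sqrt{v_d(T)}$, and then show $\Delta_d<0$ for $d\le d_1$ and $\Delta_d\ge 0$ for $d>d_2$. Your treatment of the capped regime $d>d_2$ (constant-coefficient ODE with equilibrium $1/(2C-1)$, plus the observation that the homogeneous correction at $s_d^*$ has the right sign so that $V_d(s_d^*)>1/C>1/(2C-1)$) matches the paper's argument essentially line for line.

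Two differences are worth noting in the $d\le d_1$ part. First, your use of $u_d(s)=a_s^2+b_s^2\hat\sigma_d^2$ as an explicit particular solution of the variance ODE is a cleaner route than the paper's brute-force integration and leads to the same closed form. Second, however, your conclusion ``$v_d(T)=\hat\sigma_d^2+O(e^{-cT})$ and concentration $\hat\sigma_d^2\le 2\sigma_d^2$ give $v_d(T)<4\sigma_d^2$'' is too loose as stated: with an absolute implied constant you would need $e^{-cT}<2\sigma_d^2$, which only gives a result for $T\gtrsim\log C$ (via $\sigma_d^2\ge 1/(2C)$ for $d\le d_1$), an assumption not in the statement. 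The paper avoids this by a case split on $\sigma_d$ versus $1/2$ and $1$, using an ODE comparison lemma (if $\hat\sigma_d\le 1$ then $v_d(t)\le 1$ for all $t$, and if $\hat\sigma_d\ge 1$ then $v_d(t)\ge 1$) together with the $T$-independent bound $v_d(T)\le \hat\sigma_d^2/(1-\hat\sigma_d^2)$ from the explicit formula. Your argument is easily repaired along the same lines: a careful look at your homogeneous term $W_d(0)$ shows it equals $(1-\hat\sigma_d^2)\hat\sigma_d^4 e^{-4T}/u_d(T)^2$, i.e.\ the error is \emph{multiplicative} in $\hat\sigma_d^2$ rather than additive, which is exactly what is needed. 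You should also note that the case $\hat\sigma_d>1$ (which can occur for $d\le d_1$) requires a separate one-line argument, since your $W_d(T)=(1-\hat\sigma_d^2)b_T^2$ then changes sign.
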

To gain intuition into Proposition \ref{prop:learned_score_optimality}, let us consider some illustrative cases depending on the weight constraint $C$ (for the full derivation of these cases, see Appendix \ref{app:derivation-special-case-erm}). First, when $C=\infty$ and the data covariance is non-singular, we get $d_1=d_2=D$, thus Proposition \ref{prop:learned_score_optimality} suggests to take the projection matrix $I_D$, which is expected since $\mathcal{F}_C$ is then large enough to contain the true empirical score function. Second, consider the scenario when the data distribution lies on a linear subspace of dimension $d_0$. If $C$ is large enough, we obtain $d_1=d_2=d_0$, meaning that the projection onto the data subspace is the optimal sampling strategy, which is in line with Proposition~\ref{cor:isotropic_case-optimal-stopping-time}. 
Finally, the optimal projection can also be made explicit for exponentially-decaying covariance spectrum. 
\begin{corollary}
\label{cor:special-case}
Let $\lambda>16$. Assume that $\Sigma=\textnormal{diag}(\lambda^{-1}, \hdots, \lambda^{-D})$ and $\lambda\leq C\leq\lambda^D$. Let $d\in\{1,\hdots,D\}$ be such that $\hat\sigma_{d+1}^2\leq 1/C \leq \hat\sigma_d^2$. Then, with $n$ large enough and high probability, 
\[
d_{\min}\in\{d, d+1\}.
\]
\end{corollary}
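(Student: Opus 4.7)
The plan is to combine the lower bound on $d_{\min}$ implied by Proposition~\ref{prop:learned_score_optimality} with a direct monotonicity argument at the tail, the latter exploiting the geometric spectrum $\sigma_k^2=\lambda^{-k}$ and the hypothesis $\lambda>16$. Write $f(d') = d_F^2(P_{d'}^\top P_{d'}\bwdtildeX T, \fwdX 0)$ and let $v_{d'}$ denote the variance of the $d'$-th coordinate of $\bwdtildeX T$. Because all covariances are diagonal in the canonical basis, the Fréchet distance decomposes as
\begin{equation*}
f(d') = \sum_{i=1}^{d'}(\sigma_i - \sqrt{v_i})^2 + \sum_{i=d'+1}^D \sigma_i^2,
\end{equation*}
so $f(d')-f(d'-1) = \sqrt{v_{d'}}\,(\sqrt{v_{d'}}-2\sigma_{d'})$, and coordinate $d'$ is worth including iff $v_{d'}\le 4\sigma_{d'}^2$.

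\textbf{Lower bound.} Standard chi-squared concentration gives, for any $\epsilon\in(0,1/2)$, an event of probability at least $1 - 2De^{-n\epsilon^2/8}$ on which $\hat\sigma_k^2\in[(1-\epsilon)\lambda^{-k},(1+\epsilon)\lambda^{-k}]$ for every $k$. On this event, the hypothesis $\hat\sigma_{d+1}^2\le 1/C\le\hat\sigma_d^2$ forces $C\in[\lambda^d/(1+\epsilon),\lambda^{d+1}/(1-\epsilon)]$, so the threshold $d_1$ in Proposition~\ref{prop:learned_score_optimality} equals $d$, and the proposition yields $d_{\min}\ge d$.

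\textbf{Upper bound.} I will show that $v_{d'}\ge 4\sigma_{d'}^2$ for every $d'\ge d+2$, which forces $f$ to be non-decreasing on $\{d+1,\ldots,D\}$ and hence $d_{\min}\le d+1$. For such $d'$, $\hat\sigma_{d'}^2\le 1/C$, so $\hat m_{d'}(s)$ saturates at $C$ on a non-empty interval $s\in[0,s^*]$ defined by $a_{s^*}^2+b_{s^*}^2\hat\sigma_{d'}^2=1/C$. Solving the linear variance ODE $v'(t)=2v(t)(1-2\hat m_{d'}(T-t))+2$, $v(0)=1$, in two phases: on the unsaturated phase $[0,T-s^*]$ the variance tracks $a_{T-t}^2+b_{T-t}^2\hat\sigma_{d'}^2$ up to an $O(e^{-2T})$ correction inherited from starting at $\cN(0,I_D)$ rather than the ideal $\cN(0,a_T^2+b_T^2\hat\sigma_{d'}^2)$, and so reaches $\approx 1/C$ at $t=T-s^*$; on the saturated phase $[T-s^*,T]$ the ODE becomes $v'=-2(2C-1)v+2$, whose solution approaches the equilibrium $1/(2C-1)$ monotonically from above, giving $v_{d'}\ge 1/(2C-1)$. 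Now $C\le\lambda^{d+1}/(1-\epsilon)$ together with $\lambda>16$ and $\epsilon<1/2$ gives $4(2C-1)\le 8C\le 8\lambda^{d+1}/(1-\epsilon)\le\lambda^{d+2}\le\lambda^{d'}$, hence $4\sigma_{d'}^2\le 1/(2C-1)\le v_{d'}$, which is the claim. Combined with $d_{\min}\ge d$, this establishes $d_{\min}\in\{d,d+1\}$.

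\textbf{Main obstacle.} The technical heart is the two-phase ODE analysis. The subtlety is that the unsaturated phase does not exactly reproduce $a_{T-t}^2+b_{T-t}^2\hat\sigma_{d'}^2$ because the backward SDE is initialized at $\cN(0,I_D)$ rather than $\cN(0,a_T^2I_D+b_T^2\hat\Sigma)$; one must argue, via the explicit homogeneous solution of the linear ODE, that this initial mismatch contracts and contributes only an $O(e^{-2T})$ correction. Once the lower bound $v_{d'}\ge 1/(2C-1)$ is in hand, the rest is elementary: the gap $\lambda>16>8$ is exactly what absorbs the factor-of-$\lambda$ slack between $C$ and $\lambda^{d+1}$ coming from concentration, which is why the corollary picks this specific threshold.
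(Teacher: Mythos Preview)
Your argument is correct, and for the lower bound $d_{\min}\ge d$ it matches the paper exactly: both identify $d_1=d$ from the hypothesis $\hat\sigma_{d+1}^2\le 1/C\le\hat\sigma_d^2$ and invoke Proposition~\ref{prop:learned_score_optimality}.

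For the upper bound, you take a longer route than the paper. The paper simply applies the \emph{other} half of Proposition~\ref{prop:learned_score_optimality}, namely $d_{\min}\le d_2$, and verifies the defining inequality $\tfrac{1}{2C-1}\ge 4\sigma_{d'}^2$ in one line: from $1/C\ge\hat\sigma_{d+1}^2$, concentration $\hat\sigma_{d+1}^2\ge\sigma_{d+1}^2/2$, and $\lambda\ge 16$,
\[
\frac{1}{4(2C-1)}\ge\frac{1}{8C}\ge\frac{\hat\sigma_{d+1}^2}{8}\ge\frac{\sigma_{d+1}^2}{16}=\frac{\lambda^{-(d+1)}}{16}\ge\lambda^{-(d+2)}.
\]
Your two-phase ODE analysis and the resulting bound $v_{d'}\ge 1/(2C-1)$ are precisely what is already established \emph{inside} the proof of Proposition~\ref{prop:learned_score_optimality} (the derivation that $V_{T,dd}\ge 1/(2C-1)$ when the score saturates), so you are re-proving that proposition rather than citing it. Your final numerical estimate $8C\le 8\lambda^{d+1}/(1-\epsilon)\le 16\lambda^{d+1}\le\lambda^{d+2}\le\lambda^{d'}$ is essentially the same chain, and the ``main obstacle'' you flag---the $O(e^{-2T})$ mismatch from starting at $\cN(0,I_D)$---is already handled by the exact solution of the variance ODE computed there. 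What your direct argument buys is that it makes the role of the inequality $v_{d'}\ge 4\sigma_{d'}^2$ explicit at the level of increments $f(d')-f(d'-1)$; what the paper's route buys is brevity, since both ingredients are packaged in Proposition~\ref{prop:learned_score_optimality}.
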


Interestingly, when the covariance structure decays exponentially, the capacity of the score-function class—captured here by the parameter $C$—is directly tied to the optimal projection dimension. The latter scales logarithmically in the parameter $C$ since $\hat \sigma_{d_{\min}}^2 = \lambda^{-d_{\min}} \approx 1/C$. The result is illustrated in Appendix \ref{app:exp} Figure \ref{fig:linear-model-score-matching}.
An analogous result can be derived for covariance structures with power-law decay.

\section{Empirical analysis}
\label{sec:experiment}
In this section, we evaluate our theoretical framework on complex natural images with non-linear architectures, moving beyond the Gaussian setting of Section \ref{sec:main-result} and \ref{sec:erm}. To assess whether the predicted behaviors persist, we investigate a "noisy AE" proxy model defined as follows. For a given sample $x\in\R^D$ and a pair of encoder and decoder $E:\R^D\to\R^d$ and $D:\R^d\to\R^D$, the noisy AE computes $D(b_tE(x)+a_tZ)$ where $Z\sim\cN(0, I_d)$, thereby simulating the diffusion corruption process at time $t$ within the latent space before decoding back to the pixel space.

We verify the robustness of our predictions on ImageNet-256 \citep{deng2009imagenet}. Our setup employs U-Net-based AEs across various latent resolutions ($32^2 \times 4$, $64^2 \times 3$, $32^2 \times 16$) and corresponding LDMs, controlling for model size by fixing the combined parameter count at 500M. We observe that the generation quality of the noisy AE and the LDM aligns closely throughout the sampling phase even when using real-world data and realistic architectures.

Figure \ref{fig:LDM-AE-imagenet} presents the FID trajectories for LDMs and their noisy AE counterparts. First, we observe a strict alignment in performance rankings: the intersection points of the FID curves across different dimensions are identical for both models. Consistent with our theory, larger latent dimensions require later stopping times. Second, we challenge the assumption that full denoising yields the highest qualit. We find that optimal quality is not achieved at full denoising ($t=T$) but at an earlier time ($t<T$) (Figure \ref{fig:LDM-AE-imagenet}, top), likely due to high-frequency decoder artifacts \citep{odena2016deconvolution}. Crucially, this optimal stopping time for the LDM coincides with the time minimizing the noisy AE's FID (Figure \ref{fig:LDM_AE_FID_min_match}). Similar findings for MNIST \citep{deng2012mnist} and CelebA-HQ \citep{liu2015faceattributes} are detailed in Appendix \ref{app:exp}.

Overall, our results confirm key theoretical predictions: the optimal latent dimensionality depends on diffusion time, and FID curves exhibit a U-shape. The strong alignment between LDMs and noisy AEs suggests that the latter can serve as an efficient proxy for model selection, potentially allowing one to identify optimal hyperparameters—such as stopping time and latent dimensionality—without the computational cost of training full LDMs. Exploring this
hypothesis offers a path to circumvent the computational
bottleneck of training full LDMs for model selection.

While we focus on reporting the FID here since it is the most standard metric to assess the quality of image generative models, we also measure the generation quality of several LDMs on ImageNet-256 with MMD and Sliced Wasserstein distance (see Appendix \ref{app:exp}). We also find a U-shape in generation quality as a function of the diffusion time. Visual inspection further confirms that LDM-generated samples evolve only marginally during the final stages of diffusion, whereas pixel-space diffusion undergo much more pronounced refinement.

\begin{figure}[h!]
    \centering
    \includegraphics[width=.8\linewidth]{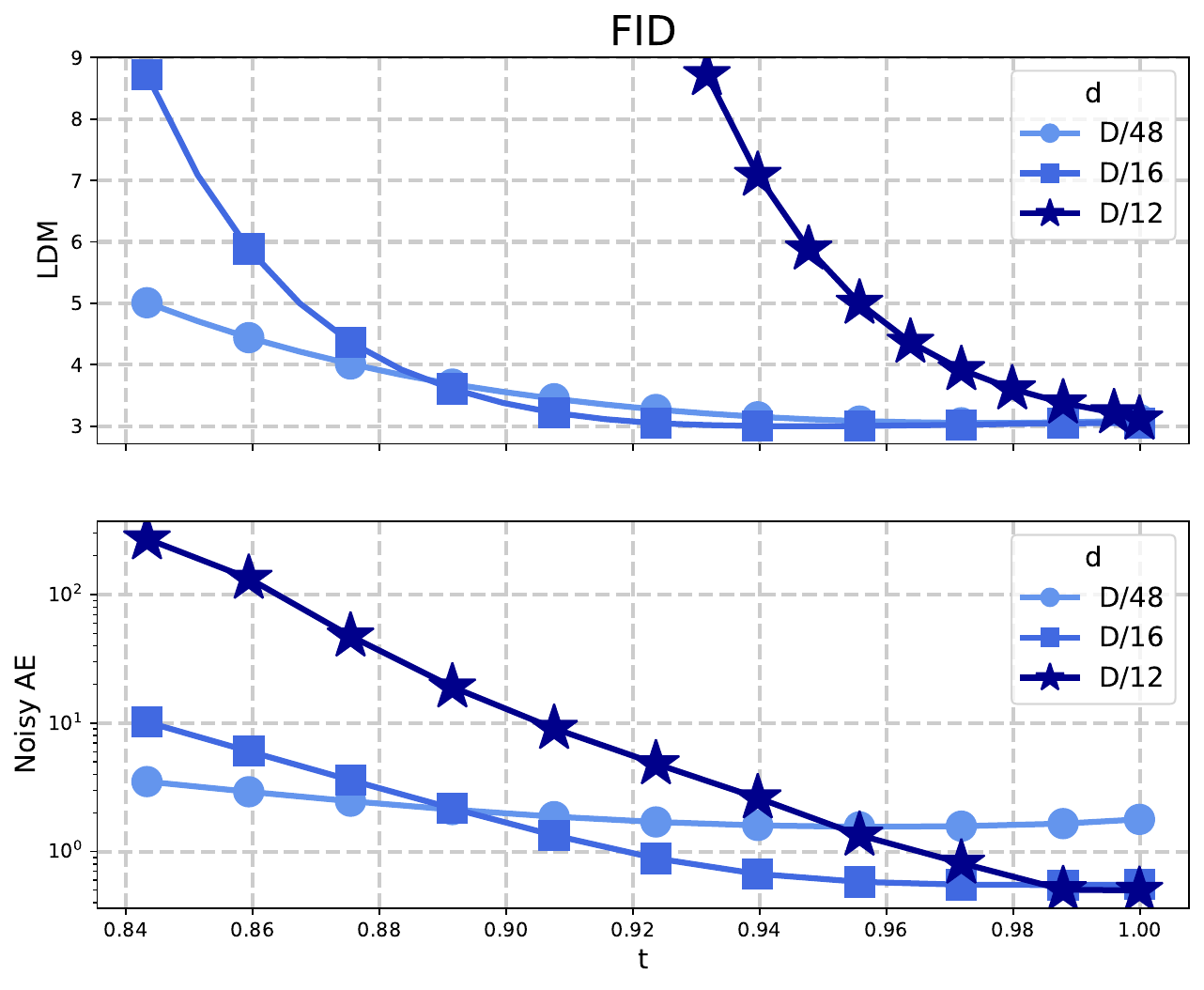}
    \caption{We train AEs and LDMs with latent dimensions $32\times32\times4$, $64\times64\times3$, and $32\times32\times16$ on ImageNet-256. The legend indicates the corresponding dimensionality reduction ratios. We report the FID scores here, and additionally evaluate performance using the sliced Wasserstein distance, with results provided in Appendix~\ref{app:exp}.
    }
    \label{fig:LDM-AE-imagenet}
\end{figure}
\begin{figure}[h!]
    \centering
    \includegraphics[width=.8\linewidth]{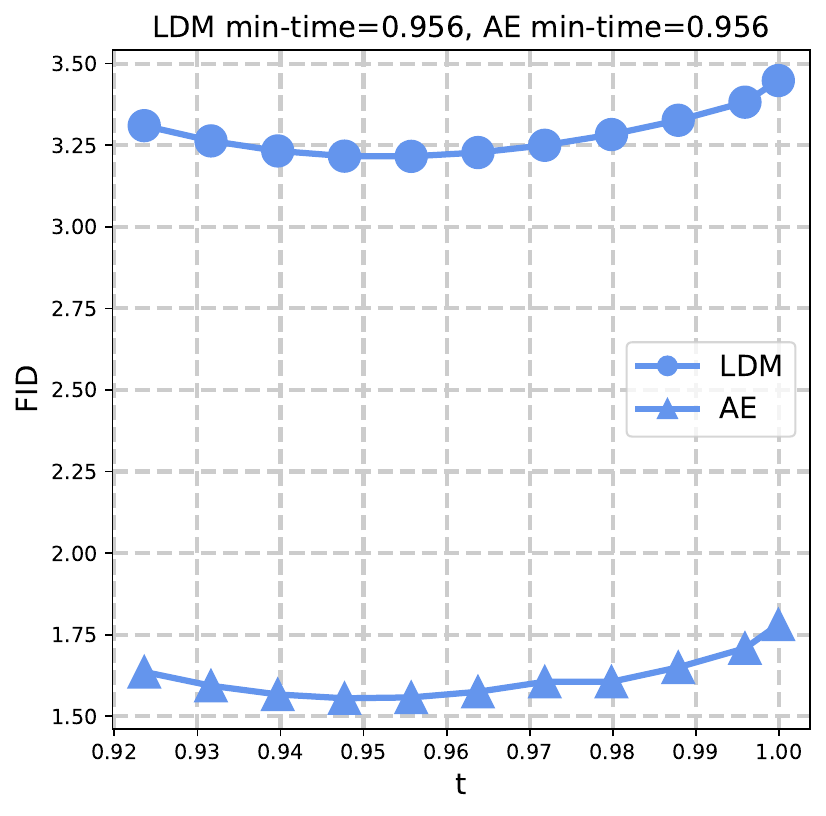}
    \caption{Zoom in of Figure \ref{fig:LDM-AE-imagenet} for the AE of dimension $32\times32\times4$.}
    \label{fig:LDM_AE_FID_min_match}
\end{figure}
\section{Conclusion}
This paper provides a theoretical analysis of optimal stopping time in latent diffusion models, showing its critical dependence on latent space dimensionality and its interaction with other hyperparameters of the diffusion process, such as weight regularization in the score matching phase. Our results focus on Gaussian distributions, given their tractability and prominence in prior theoretical works \citep{pierret2024diffusion, hurault2025score}. Taken together, these insights open compelling research directions, for deepening the theoretical properties of latent diffusion models and assessing when they can match or surpass the sampling quality of standard diffusion models.

\newpage
\section*{Acknowledgments}

P.M.~is supported by a Google PhD Fellowship.

\bibliography{ref}

\newpage
\onecolumn
\begin{center}
    \LARGE \bf Appendix
\end{center}
\appendix
\section{Discussion on diffusion and autoencoders}
\label{app:discussion}
Assume that the data distribution $p_0$ is centered Gaussian $cN(0, \Sigma)$. Then $P\bwdX {T-t}$ and $P\fwdX t$ also follow Gaussian distributions
\begin{equation}
\label{eq:forward-distribution}
P\bwdX {T-t} \overset{\cD}{=} P\fwdX t \sim \cN(0, P_d(a_t^2I_D+b_t^2\Sigma)P_d^\top)=\cN(0, a_t^2I_d+b_t^2P_d\Sigma P_d^\top),
\end{equation}
where the covariance matrix of $P_d\bwdX{T-t}$ effectively zeros out the last $D-d$ dimensions and  $a_t = \sqrt{1 - b_t^2}$ while $b_t =e^{-\int_0^t w_t^2 dt}$. A typical choice is the Ornstein-Uhlenbeck process, where $w_t\equiv1$, which implies $a_t = \sqrt{1-e^{-2t}}$ and $b_t=e^{-t}$.

When $p_0$ is a Gaussian distribution, the score function $\nabla\log p_t$ is completely determined by the covariance matrix $\Sigma$. Indeed, in \eqref{eq:forward-distribution} we have
\[
s_{P_d}(x, t) = -(a_t^2I_d+b_t^2P_d\Sigma P_d^\top)^{-1}x,\quad x\in\R^d.
\] 
Therefore, we may simplify the learning of the score function to the task of covariance matrix estimation. Since we assumed that the components of $p_0$ are independent, we consider a class of estimators consisting of diagonal matrices $\hat\Sigma = \textnormal{diag}(\hat\sigma_1^2, \hdots, \hat\sigma_D^2)$, where its diagonal elements are the estimated variances given by $\frac1n\sum_{i=1}^nX_{id}^2$ for all $d\in\{1,\hdots, D\}$ where $X_{i}=(X_{i1},\hdots,X_{iD})\in~\R^D$. Furthermore, we assume $\hat\sigma_1\geq\hdots\geq\hat\sigma_D>0$, which is satisfied with high probability when $n$ is large enough (up to permutations of the indices in case of equality of some of the variances). %

Plugging the estimated covariance matrix in the final distribution of $P_d\fwdX T$ and in the score function $s_{P_d}$, we can define the estimated sampling procedure
\begin{equation}
\label{eq:SDE-backward-estimated}
dP\bwdhatX t = (w_{T-t}^2P\bwdhatX t + 2w_{T-t}^2\hat s_{P}(P\bwdhatX t, T-t))dt + \sqrt{2w_{T-t}^2}dP\orl{W_t},\quad P\bwdhatX 0\sim (P)_\#\hat p_T,
\end{equation}
where 
\[
\hat s_{P}(x, t) = -(a_t^2I_d+b_t^2P\hat \Sigma P^\top)^{-1}x \quad \textnormal{and} \quad  (P)_\#\hat p_T \sim \cN(0, a_T^2I_d+b_T^2P\hat \Sigma P^\top).
\]
By the same derivations as above, we have the following identity
\begin{equation}
\label{eq:estimated-backward-diffusion-distribution}
P\bwdhatX t\sim\cN(0, a_{T-t}^2I_d+b_{T-t}^2P\hat\Sigma P^\top).   
\end{equation}
Therefore, our primary quantity of interest $\Delta_{d, t, O}(\Sigma)$ (resp. $\Delta_{d, t}(\hat\Sigma)$) corresponds to the distance of the distribution of $OP_d\top P_dO^\top\bwdX t$ (resp. $OP_d^\top P_dO^\top\bwdhatX t$) to the true distribution $p_0$.

\section{Proofs of results}
\subsection{Proof of Proposition \ref{prop:non-monotonic-distance}}
We show the equivalent statement: $t\in[0, T]\mapsto \Delta_{d, t}(\hat\Sigma)$ is non-decreasing if and only if \eqref{eq:condition-non-monotonic-distance} holds. We start by calculating the Fr\'echet distance $\Delta_{d, t}(\hat\Sigma)$ by using \eqref{eq:frechet-distance}:
\begin{align*}
\Delta_{d, t}(\hat\Sigma) &= \sum_{d'=d+1}^D \sigma_{d'}^2 +  \sum_{d'=1}^d\Big(b_t^2\hat\sigma_{d'}^2+a_t^2 + \sigma_{d'}^2 - 2 \sigma_{d'}\sqrt{a_t^2+b_t^2\hat\sigma_{d'}^2}\Big)\\
&= \sum_{d'=d+1}^D \sigma_{d'}^2 + \sum_{d'=1}^d\Big(\sqrt{a_{t}^2+(1-a_t^2)\hat\sigma_{d'}^2} - \sigma_{d'}\Big)^2.
\end{align*}
Since $t\mapsto a_t^2$ is strictly increasing with $a_0=0$, the monotonicity of $\Delta_{d, t}(\hat\Sigma)$ with respect to $t$ is equivalent to the monotonicity with respect to $a_t^2$. By considering the function $f:[0,a_T^2]\to\R$ defined by
\[
f(x) = \sum_{d'=1}^d \Big(\sqrt{x + (1-x)\hat\sigma_{d'}^2}-\sigma_{d'}\Big)^2,
\]
we see that $\Delta_{d, t}(\hat\Sigma)$ is non-decreasing if and only if $f$ is non-decreasing. Additionally,
\[
f'(x) = \sum_{d'=1}^d(\sqrt{x + (1-x)\hat \sigma_{d'}^2} - \sigma_{d'})\frac{1 - \hat\sigma_j^2}{\sqrt{x + (1 - x) \hat\sigma_{d'}^2}} = \sum_{d'=1}^d\Big( 1- \frac{\sigma_{d'}}{\sqrt{x + (1-x)\hat \sigma_{d'}^2}}\Big)(1 - \hat\sigma_{d'}^2),
\]
and
\[
f''(x) = \sum_{d'=1}^d\frac{\sigma_{d'}(1-\hat\sigma_{d'}^2)^2}{2(x + (1-x)\hat\sigma_{d'}^2)^{3/2}}>0.
\]
Hence, $f$ is convex so it is non-decreasing if and only if $f'(0)\geq 0$. Therefore,
\[\Delta_{d, t}(\hat\Sigma)=f(a_t^2) + \sum_{d'=d+1}^D\sigma_{d'}^2\]
is non-decreasing if and only if $f'(0) \geq 0$, i.e., if and only if $\sum_{d'=1}^d(1-\frac{\sigma_{d'}}{\hat\sigma_{d'}})(1-\hat\sigma_{d'}^2) \geq 0$. This shows the second statement of the proposition. The monotonicity of $\Delta_{d, t}(\Sigma)$ can be shown by replacing $\hat \sigma_{d'}$ with $\sigma_{d'}$ in the derivative $f'$, which is $0$ when $a_t=0$.
\subsection{Proof of Proposition \ref{prop:min_wasserstein_projected}}
The first part of Proposition \ref{prop:min_wasserstein_projected} concerns the minimization of $\Delta_{d, t}(\Sigma)$. 
Recall that $t_d = T - \bar{a}^{-2}\Big(\frac{3\sigma_d^2}{1-\sigma_d^2}\Big)$.
To prove that $\Delta_{d, t}(\Sigma)$ achieves the minimum for $t \in [t_d, t_{d+1})$ (where the time interval is fixed), we will demonstrate how the distance $\Delta_{d, t}(\Sigma)$ behaves as a function of the projection dimension $d$. Specifically, we aim to show that, for any $d \in \{2, \hdots, D\}$,
\begin{align}
\label{aux:frechet_tk1}
\Delta_{d, t}(\Sigma)&\leq \Delta_{d-1, t}(\Sigma) \quad \textnormal{iff } t \geq t_d.
\end{align}
This inequality in turn implies that for a given $t$ in a fixed interval $[t_d, t_{d+1})$, the minimum distance $\Delta_{d, t}(\Sigma) = d_F^2(P_d^\top P_d\fwdX t, \fwdX0)$ is attained by the projected process $P_d\fwdX t$ in dimension $d$.

To establish them, we first explicitly compute the Fréchet distance $\Delta_{d, t}(\Sigma)$. Recall that the Fréchet distance between two zero-mean Gaussian distributions $\mathcal{N}(0, \Sigma_1)$ and $\mathcal{N}(0, \Sigma_2)$ is given by $\textnormal{Tr}(\Sigma_1 + \Sigma_2 - 2(\Sigma_1^{1/2}\Sigma_2\Sigma_1^{1/2})^{1/2})$, and that the covariance matrix of $P_d\bwdX t$ is equal to $P_d(a_{T-t}^2I_d + b_{T-t}^2\Sigma)P_d$.
Therefore, it is possible to calculate the Fr\'echet distance to the target for the projected processes directly, as, for any $d\in \{ 1, \hdots , D\}$,
\[
\Delta_{d, t}(\Sigma) =\sum_{j=1}^D\sigma_j^2+ \sum_{j=1}^d(a_{T-t}^2+b_{T-t}^2\sigma_j^2) -2\sum_{j=1}^d \sigma_j\sqrt{a_{T-t}^2+b_{T-t}^2\sigma_j^2},
\]
so that
\begin{align}
\delta_{d,t} &:= d_F^2(P_d^\top P_d\bwdX t, \orr X_0)-d_F^2(P_{d-1}^\top P_{d-1}\bwdX t, \orr X_0) \nonumber \\
 &= b_{T-t}^2\sigma_d^2+a_{T-t}^2 - 2 \sigma_d\sqrt{a_{T-t}^2+b_{T-t}^2\sigma_d^2}, \nonumber\\
 &=\sqrt{b_{T-t}^2\sigma_d^2+a_{T-t}^2}\Big(\sqrt{b_{T-t}^2\sigma_d^2+a_{T-t}^2} - 2\sigma_d\Big), \nonumber\\
 &=\sqrt{(1-a_{T-t}^2)\sigma_d^2+a_{T-t}^2}\Big(\sqrt{(1 - a_{T-t}^2)\sigma_d^2+a_{T-t}^2} - 2\sigma_d\Big), \nonumber\\
 &=\sqrt{\sigma_d^2+a_{T-t}^2(1-\sigma_d^2)}\Big(\sqrt{a_{T-t}^2(1-\sigma_d^2)+\sigma_d^2} - 2\sigma_d\Big).\nonumber
\end{align}
We see that $\delta_{d,t}$ has the same sign as the term in the parenthesis on the last line, which itself has the same sign as $a_{T-t}^2(1-\sigma_d^2) - 3\sigma_d^2$. Then,
\begin{itemize}
    \item if $\sigma_d\geq1$ or $\frac{3\sigma_d^2}{1-\sigma_d^2} \geq a_T^2$, $\delta_{d,t}$ is non-positive for all $t \in [0, T]$, while $t_d = 0$ by definition;
    \item otherwise, $\textnormal{diff}_{d,t}$ is non-positive if and only if $a_{T-t}^2 \leq \frac{3\sigma_d^2}{1-\sigma_d^2}$ which is equivalent to
    $$T-t \leq a^{-2}\Big(\frac{3\sigma_d^2}{1-\sigma_d^2}\Big)=T-t_d.$$
\end{itemize}
Putting things together, we obtain that $\Delta_{d, t}(\Sigma)-d_F^2(P_{d-1}^\top P_{d-1}\bwdX t, \orr X_0)$ is non-positive iff $t \geq t_d$, which is exactly \eqref{aux:frechet_tk1}.

The proof in the case of estimated variances can be derived in a similar fashion as long as the estimated variances $\hat \sigma_i$ and times $\hat t_i$ are well-ordered, which happens with high probability for a sufficiently large sample.
\subsection{Proof of Proposition \ref{cor:isotropic_case-optimal-stopping-time}}
We first state the full proposition.
\begin{proposition} 
\label{cor:app-sotropic_case-optimal-stopping-time}
Assume that $\Sigma=\textnormal{diag}(\sigma^2, \hdots, \sigma^2, 0, \hdots, 0)$ with the last $D-d_0$ entries equal to $0$, and the estimated variances are ordered as $\hat\sigma_1^2\geq\hat\sigma_2^2\geq\hdots\geq\hat\sigma_{d_0}^2$.
Let $\varepsilon \in (0,1)$. For
\[
t \in  \bigg[T-\bar{a}^{-2}\Big(\frac{3-\varepsilon}{1+\varepsilon}\frac{\hat\sigma_1^2}{1-\hat\sigma_1^2}\Big), T\bigg),
\]
with probability $1-2 d_0 e^{-\frac{\varepsilon^2n}{8}}$, we have 
\[
d_0 = \argmin_{d'\in\{1,\hdots, D\}}\Delta_{d, t}(\hat\Sigma).
\]
If, in addition,
\begin{equation}
\sum_{d'=1}^{d_0}(1-\frac{\sigma}{\hat\sigma_{d'}})(1 - \hat\sigma_{d'}^2) < 0,~\label{eq:condition-cor-1}
\end{equation}
then 
\[
\sum_{d'=1}^{d_0}(1-\frac{\sigma}{\sqrt{\hat\sigma_{d'}^2 + (1-\hat\sigma_{d'}^2)a_{t}^2}})(1 - \hat\sigma_{d'}^2) = 0,
\]
has a unique solution which we denote by $\hat\delta_{d_0}$. By convention, if the condition \eqref{eq:condition-cor-1} is not satisfied, we set $\hat\delta_{d_0}=0$. Then, with probability $1-2d_0e^{-\frac{\varepsilon^2n}{8}}$,
\[
(d_0, T-\hat\delta_{d_0}) = \argmin_{\substack{t\in[0, T]\\ d' \in \{1, \hdots, D\}}}\Delta_{d', t}(\hat\Sigma).
\]
\end{proposition}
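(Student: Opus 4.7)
The plan is to combine Proposition~\ref{prop:min_wasserstein_projected} (optimal projection at a fixed time) with the strict convexity established in the proof of Proposition~\ref{prop:non-monotonic-distance} (optimal stopping at a fixed dimension), after first concentrating the empirical variances, and then to verify that the two partial optima are compatible so that $(d_0,\,T-\hat\delta_{d_0})$ is the joint minimizer.

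For the concentration step, since $X_{id}\sim\mathcal{N}(0,\sigma^2)$ i.i.d.\ for $d\leq d_0$, we have $n\hat\sigma_d^2/\sigma^2\sim\chi^2_n$, and a standard Chernoff / Laurent--Massart bound gives $\mathbb{P}(|\hat\sigma_d^2-\sigma^2|>\varepsilon\sigma^2)\leq 2e^{-\varepsilon^2 n/8}$. A union bound produces an event $\mathcal{E}$ of probability at least $1-2d_0\,e^{-\varepsilon^2 n/8}$ on which $(1-\varepsilon)\sigma^2\leq\hat\sigma_d^2\leq(1+\varepsilon)\sigma^2$ for all $d\leq d_0$, while $\hat\sigma_d = 0$ for $d>d_0$ since $X_{id}=0$ almost surely. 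On $\mathcal E$ the empirical variances are automatically well-ordered within the active block.

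For the first claim I apply Proposition~\ref{prop:min_wasserstein_projected}: dimension $d_0$ is optimal on $[\hat t_{d_0},\hat t_{d_0+1})$, and $\hat t_{d_0+1}=T$ because $\sigma_{d_0+1}=\hat\sigma_{d_0+1}=0$ makes the argument of $\bar a^{-2}$ vanish. To check that the stated interval is contained in $[\hat t_{d_0},T)$, the key algebraic step is
\[
\frac{4\sigma^2-\hat\sigma_{d_0}^2}{1-\hat\sigma_{d_0}^2}\;\geq\;\frac{3-\varepsilon}{1+\varepsilon}\cdot\frac{\hat\sigma_1^2}{1-\hat\sigma_1^2},
\]
a short cross-multiplication using $\hat\sigma_{d_0}^2\leq\hat\sigma_1^2\leq(1+\varepsilon)\sigma^2$; monotonicity of $\bar a^{-2}$ then yields the containment (and the degenerate case $\hat\sigma_1^2\geq 1$, where both sides become $+\infty$ after $(\cdot)_+$, is trivial).

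For the second claim I fix $d=d_0$ and specialize the computation from the proof of Proposition~\ref{prop:non-monotonic-distance}: since $\sigma_{d'}=\sigma$ for $d'\leq d_0$ and $\sigma_{d'}=0$ otherwise, the tail $\sum_{d'>d_0}\sigma_{d'}^2$ vanishes and $d_F^2(P_{d_0}^\top P_{d_0}\bwdhatX{T-t},\fwdX 0)=f(a_t^2)$ with $f(x)=\sum_{d'=1}^{d_0}(\sqrt{x+(1-x)\hat\sigma_{d'}^2}-\sigma)^2$. That proof already gives $f''>0$, so $f$ is strictly convex. If \eqref{eq:condition-cor-1} fails then $f'(0)\geq 0$ and $f$ is non-decreasing, so $\hat\delta_{d_0}=0$ is the minimizer; otherwise $f'(0)<0$ and strict convexity forces a unique interior zero $x^*\in(0,a_T^2]$ of $f'$ (noting that $f'(1)=(1-\sigma)\sum(1-\hat\sigma_{d'}^2)>0$ on $\mathcal E$ when $\sigma<1$), and $\hat\delta_{d_0}=\bar a^{-2}(x^*)$ satisfies the displayed characterizing equation with uniqueness. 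Compatibility of the two parts, i.e.\ $T-\hat\delta_{d_0}\in[\hat t_{d_0},T)$, reduces by strict convexity to $f'\bigl(\tfrac{4\sigma^2-\hat\sigma_{d_0}^2}{1-\hat\sigma_{d_0}^2}\bigr)\geq 0$; the $d'=d_0$ summand simplifies cleanly to $\tfrac12(1-\hat\sigma_{d_0}^2)\geq 0$ since the quantity inside the square root evaluates to $4\sigma^2$. I expect the \textbf{main obstacle} to be controlling the remaining $d'<d_0$ summands: using $\hat\sigma_{d'}^2\geq\hat\sigma_{d_0}^2$ and the concentration band $\hat\sigma_{d'}^2\in[(1-\varepsilon)\sigma^2,(1+\varepsilon)\sigma^2]$, each summand is non-negative in the regime $\sigma^2<1/4$ (where the map $s\mapsto s+(1-s)x$ at our $x$ is increasing), the intermediate regime $\sigma^2\in[1/4,1)$ needs a finer bound balancing the potentially negative contributions against the strictly positive $d'=d_0$ term, and the boundary $\sigma^2\geq 1$ is trivial since $\hat t_{d_0}=0$ and the containment becomes automatic.
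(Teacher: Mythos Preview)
Your plan has a genuine gap rooted in a misapplication of Proposition~\ref{prop:min_wasserstein_projected}. That proposition asserts that $d_0$ is optimal on $[\hat t_{d_0},\hat t_{d_0+1})$ only when the $\hat t_d$ are well-ordered (increasing in $d$). In the isotropic setting all true variances equal $\sigma^2$, so $\hat t_d=T-\bar a^{-2}\big(\tfrac{4\sigma^2-\hat\sigma_d^2}{(1-\hat\sigma_d^2)_+}\big)$; since $x\mapsto\tfrac{4\sigma^2-x}{1-x}$ is decreasing when $4\sigma^2<1$ and the $\hat\sigma_d$ are assumed decreasing, the $\hat t_d$ are \emph{decreasing} on $\{2,\dots,d_0\}$. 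Going back to the key inequality in the proof of Proposition~\ref{prop:min_wasserstein_projected}, the interval on which $d_0$ dominates every smaller dimension is therefore $[\max_{2\le d\le d_0}\hat t_d,\,T)$, and the maximum is attained at the index with the \emph{largest} $\hat\sigma_d$, not at $d_0$. For the first claim you must thus verify $\tfrac{4\sigma^2-\hat\sigma_d^2}{1-\hat\sigma_d^2}\ge\tfrac{3-\varepsilon}{1+\varepsilon}\cdot\tfrac{\hat\sigma_1^2}{1-\hat\sigma_1^2}$ for every $d\le d_0$; the binding case is $\hat\sigma_d=\hat\sigma_1$, and your check at $d=d_0$ is the easiest instance, not the hardest.

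For the second claim the damage is larger. Your ``compatibility'' check evaluates $f'$ at $x_{d_0}=\tfrac{4\sigma^2-\hat\sigma_{d_0}^2}{1-\hat\sigma_{d_0}^2}$, but the relevant threshold is the \emph{smaller} point $x_*=\tfrac{4\sigma^2-\hat\sigma_1^2}{1-\hat\sigma_1^2}$; showing $f'(x_{d_0})\ge 0$ yields only $T-\hat\delta_{d_0}\ge\hat t_{d_0}$ (the minimum of the $\hat t_d$), not membership in the $d_0$-optimal interval. More fundamentally, even the correct containment would not suffice: knowing that $T-\hat\delta_{d_0}$ minimizes over $t$ for fixed $d_0$ and that $d_0$ is optimal at that single time does not rule out a pair $(d',t')$ with $d'<d_0$ and $t'<\max_d\hat t_d$ doing better---partial optima plus a ``Nash'' compatibility are not a global minimum. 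The paper closes this by proving each summand of $f'$ is nonnegative at $x_*$, which gives $f_{d'}'(x_*)\ge 0$ for every $d'\le d_0$ and hence that $t\mapsto d_F(P_{d'}^\top P_{d'}\bwdhatX t,\fwdX 0)$ is non-increasing on $[0,\max_d\hat t_d]$; the chain $d_F(d',t)\ge d_F(d',\max_d\hat t_d)\ge d_F(d_0,\max_d\hat t_d)\ge d_F(d_0,T-\hat\delta_{d_0})$ then handles all remaining $(d',t)$. The termwise inequality at $x_*$ reduces (for $\sigma^2\le 1/4$, $\hat\sigma_{d'}^2<1$) to $\tfrac{1-4\sigma^2}{1-\hat\sigma_1^2}\le\tfrac{1-\sigma^2}{1-\hat\sigma_{d'}^2}$, which follows from $\hat\sigma_1^2\le 2\sigma^2$ on the concentration event; your regime analysis at $x_{d_0}$ does not deliver this.
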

Let $\varepsilon\in(0, 1)$. We first note that according to Proposition \ref{prop:estimation-error-independent-gaussian}, by the union bound, with probability $1-2d_0e^{-\frac{\epsilon^2n}{4(1+\epsilon)}}\geq1-2d_0e^{-\frac{\epsilon^2n}{8}}$ we have $|\sigma^2-\hat\sigma_d^2|\leq \varepsilon\sigma^2$ for all $d\in\{1,\hdots,d_0\}$.
We work under this event in the remainder of the proof.
In particular, for all $d\in\{1,\hdots,d_0\}$, $\sigma_d^2 = \sigma^2 \geq \hat\sigma_1^2/(1+\varepsilon)$. Thus, by separating cases depending on whether $4\sigma_d^2\leq1$, a short calculation gives that
\[
\min\Big(1, \frac{\frac{4}{1+\varepsilon}\hat\sigma_1^2-\hat\sigma_1^2}{1-\hat\sigma_{1}^2}\Big)\leq \min\Big(1, \frac{4\sigma_d^2-\hat\sigma_1^2}{1-\hat\sigma_{1}^2}\Big)\leq \frac{4\sigma_d^2 - \hat\sigma_d^2}{1-\hat\sigma_d^2}.
\]
The last inequality is derived as follows: if $4\sigma_d^2\leq 1$, then we use the fact that $x\mapsto\frac{a-x}{1-x}$ is non-increasing if $a<1$. On the other hand, if $4\sigma_d^2\geq1$, then $\frac{4\sigma_d^2-\hat\sigma_d^2}{1-\hat\sigma_d^2}\geq1$.
Hence, by the monotonic increase of $\bar{a}^{-2}$, 
\begin{align*}
\hat t_d &= T - \bar{a}^{-2}\Big(\frac{4\sigma_d^2 - \hat\sigma_d^2}{1-\hat\sigma_d^2}\Big)  \\
&= T - \bar{a}^{-2}\Big(\min\Big(1, \frac{\frac{4}{1+\varepsilon}\hat\sigma_1^2-\hat\sigma_1^2}{1-\hat\sigma_{1}^2}\Big)\Big)  \\
&\geq T - \min\Big(T, \bar{a}^{-2}\Big(\frac{3-\varepsilon}{1+\varepsilon}\frac{\hat\sigma_1^2}{1-\hat\sigma_1^2}\Big)\Big) \\
&= \max\Big(0, T - \bar{a}^{-2}\Big(\frac{3-\varepsilon}{1+\varepsilon}\frac{\hat\sigma_1^2}{1-\hat\sigma_1^2}\Big)\Big) \\
&= T - \bar{a}^{-2}\Big(\frac{3-\varepsilon}{1+\varepsilon}\frac{\hat\sigma_1^2}{1-\hat\sigma_1^2}\Big) .
\end{align*}
Thus, $t\geq T - \bar{a}^{-2}\Big(\frac{3-\varepsilon}{1+\varepsilon}\frac{\hat\sigma_1^2}{1-\hat\sigma_1^2}\Big)$ implies $t\geq\hat t_d$ for every $d\in\{1,\hdots, d_0\}$. On the other hand, $t < T = \hat t_d$ for all $d\in\{d_0+1, \hdots, D\}$ since $\sigma_d = \hat\sigma_d = 0$. From here we deduce the desired result applying Proposition \ref{prop:min_wasserstein_projected}.

In this second part, we study under the event where $|\sigma^2-\hat\sigma_d^2|\leq \sigma^2$ for every $d\in\{1,\hdots, d_0\}$, which holds with probability $1-2d_0e^{-n/8}$ by Proposition \ref{prop:estimation-error-independent-gaussian}. To prove the desired result, we first show that the minimum of the distance $\Delta_{d_0, t}(\hat\Sigma)$ is attained at $t = T-\hat\delta_{d_0}$, as per its definition. We consider two cases depending on whether condition~\eqref{eq:condition-cor-1} is satisfied. First, if condition~\eqref{eq:condition-cor-1} holds, the proof of Proposition~\ref{prop:non-monotonic-distance} establishes that $a_{T-\hat\delta_{d_0}}^2$ is the unique zero of the derivative $\frac{d}{da_t^2}\Delta_{d_0, t}(\hat\Sigma)$. This confirms that $T-\hat\delta_{d_0}$ is the unique minimizer of the distance. Conversely, if condition~\eqref{eq:condition-cor-1} is not satisfied, then $\hat\delta_{d_0}=0$. In this scenario, the squared distance $\Delta_{d_0, t}(\hat\Sigma)$ is a non-increasing function of $t$ and thus attains its minimum at the endpoint $t=T$. This result is consistent, as $t = T = T - \hat\delta_{d_0}$.

We remark by Proposition \ref{prop:min_wasserstein_projected} that, since $\hat t_d=T$ for every $d\in\{d_0+1, \hdots,D\}$, for every $t\in[0, T]$,
\[
\Delta_{d_0, T-\hat\delta_{d_0}}(\hat\Sigma)\leq \Delta_{d_0, t}(\hat\Sigma)\leq \Delta_{d, t}(\hat\Sigma).
 \]
Observe that $\hat t_1 = \max_{d\in\{1,\dots, d_0\}}\hat t_d$, which is in the same order of $\hat\sigma_d$. This is due to the fact that $x\mapsto\frac{a-x}{1-x}$ is non-increasing if $a<1$. Then from the proof of Proposition \ref{prop:min_wasserstein_projected} we deduce that, for $t\geq \hat t_1$ and $d\in\{1,\hdots,d_0\}$, that
\begin{equation*}
\Delta_{d_0, T-\hat\delta_{d_0}}(\hat\Sigma)\leq \Delta_{d_0, t}(\hat\Sigma) \leq \Delta_{d, t}(\hat\Sigma).
\end{equation*}
If $\hat t_1 = 0$, the proof is finished. Note that this is the case if $\sigma^2\geq1/4$, since $\frac{4\sigma^2-\hat\sigma_1^2}{(1-\hat\sigma_1^2)_+}\geq1$. We study from now the case where $\hat t_1>0$ and $\sigma^2\leq1/4$, with $t\leq \hat t_1$ and $d\in\{1,\hdots, d_0\}$.
We do this by showing for every dimension $d\in\{1,\hdots, d_0\}$, $\Delta_{d, t}(\hat\Sigma)$ is non-increasing on  $[0, \hat t_1]$. This implies
$$\Delta_{d_0, T-\hat\delta_{d_0}}(\hat\Sigma)\leq \Delta_{d_0, \hat{t}_{d_0}}(\hat\Sigma)\leq \Delta_{d, t}(\hat\Sigma).$$

In the remainder of the proof, we show, for $d\in\{1,\hdots, d_0\}$, that $\Delta_{d, t}(\hat\Sigma)$ is non-increasing on $[0, \hat t_1]$. This is equivalent to proving that $\Delta_{d, T-t}(\hat\Sigma)$ is non-decreasing on $[T-\hat t_1, T]$. Recall that, as in the proof as Proposition \ref{prop:non-monotonic-distance}, 
\[
\Delta_{d, T-t}(\hat\Sigma) = \sum_{d'=d+1}^{d_0}\sigma^2+ \sum_{d'=1}^d\big(\sqrt{a_t^2+(1-a_t^2)\hat\sigma_{d'}^2}-\sigma\big)^2.
\]
Consider $f_d$ given by
\[
f_d(x) = \sum_{d'=1}^d\big(\sqrt{x+(1-x)\hat\sigma_{d'}^2}-\sigma\big)^2.
\]
What we want to show is equivalent to $f$ being non-decreasing on $[a_{T-\hat t_1}^2, a_T^2]$. Since $f_d$ is convex as proven in Proposition \ref{prop:non-monotonic-distance}, it is sufficient to show that $f'$ is positive at $a_{T-\hat t_1}^2$. All in all, since the derivative of $f_d$ is
\[
f_d'(x) = \sum_{d'=1}^d\Big(1-\frac{\sigma}{\sqrt{\hat\sigma_{d'}^2+(1-\hat\sigma_{d'}^2)x}}\Big)(1-\hat\sigma_{d'}^2),
\]
if we are able to show that for any $d'\leq d_0$,
\begin{equation}
\label{eq:local-inequality-1}
(1 - \frac{\sigma}{\sqrt{\hat\sigma_{d'}^2+(1-\hat\sigma_{d'}^2)a_{T-\hat t_1}^2}})(1-\hat\sigma_{d'}^2)\geq0,
\end{equation}
then 
\[
f_d'(a_{T-\hat t}^2) = \sum_{d'=1}^{d}\Big(1 - \frac{\sigma}{\sqrt{\hat\sigma_{d'}^2 + (1 - \hat\sigma_{d'}^2)a_{T-\hat t_1}^2}}\Big)(1-\hat\sigma_{d'}^2) \geq 0.
\]
The result above is twofold. First, we get that $\Delta_{d, T-t}(\hat\Sigma)$ is increasing on the interval of interest. This also interestingly shows that the minimum of the Frobenius distance $t \mapsto \Delta_{d_0, t}(\hat\Sigma)$ is reached after $\hat t_1$. Since by definition the minium is reached at $T - \hat\delta_{d_0}$, we get that $T - \hat\delta_{d_0} \geq \hat t$.

The only thing remaining is to show~\eqref{eq:local-inequality-1}. Recall that $\hat t_1 = T-\bar{a}^{-2}\Big(\frac{4\sigma^2 - \hat\sigma_{1}^2}{1 - \hat\sigma_{1}^2}\Big)$, and that we assumed $\hat t_1>0$, which implies $\frac{4\sigma^2-\hat\sigma_{1}^2}{1-\hat\sigma_{1}^2}<a_T^2$. On the other hand, recall that we work under the event that $|\sigma^2-\hat \sigma_{1}^2|\leq\sigma^2$. Hence, $\hat\sigma_{1}^2 \leq 2 \sigma^2 < 1$ and $\frac{4\sigma^2 - \hat\sigma_{1}^2}{1 - \hat\sigma_{1}^2}>0$. Therefore, by definition of $\hat t_1$, we have
\[
a_{T-\hat t_{1}}^2 = \frac{4\sigma^2 - \hat\sigma_{1}^2}{1-\hat\sigma_{1}^2}.
\]
From here, we prove \eqref{eq:local-inequality-1}. We rewrite \eqref{eq:local-inequality-1} as 
\begin{align*}
&1 - \frac{\sigma}{\sqrt{\hat\sigma_{d'}^2 + (1 - \hat\sigma_{d'}^2)a_{T-\hat t}^2}} \geq 0.\\
\Leftrightarrow&\quad \sigma^2 \leq \hat\sigma_{d'}^2 + (1 - \hat\sigma_{d'}^2)\frac{4\sigma^2 - \hat\sigma_{1}^2}{1 - \hat\sigma_{1}^2}\\
\Leftrightarrow&\quad \sigma^2 \leq \hat\sigma_{d'}^2 + (1 - \hat\sigma_{d'}^2)\Big(1 - \frac{1 - 4\sigma^2}{1 - \hat\sigma_{1}^2}\Big)\\
\Leftrightarrow&\quad\sigma^2\leq1 - (1-\hat\sigma_{d'}^2)\frac{1 - 4\sigma^2}{1-\hat\sigma_{1}^2}\\
\Leftrightarrow&\quad(1-\hat\sigma_{d'}^2)\frac{1 - 4\sigma^2}{1-\hat\sigma_{1}^2} \leq 1 - \sigma^2.
\end{align*}
Therefore, since $\hat\sigma_{d'}<1$,we deduce that \eqref{eq:local-inequality-1} is equivalent to showing:
\[
\frac{1-4\sigma^2}{1-\hat\sigma_{1}^2} \leq \frac{1 - \sigma^2}{1 - \hat\sigma_{d'}^2}.
\]
To show this, recall the bound $\hat\sigma_{1}^2\leq2\sigma^2 < 1$. Thus, 
\[\frac{1-4\sigma^2}{1-\hat\sigma_{1}^2}\leq\frac{1-4\sigma^2}{1-2\sigma^2} = 1 - \sigma^2\frac{2}{1-2\sigma^2}\leq 1 - \sigma^2\leq\frac{1 - \sigma^2}{1 - \hat\sigma_{d'}^2},\]
which derives the desired inequality and we conclude the proof.

\subsection{Proof of Proposition \ref{prop:general-independent-gaussian-min-wasserstein}}
The proof follows by observing that the covariance matrix of $OP_dO^\top\bwdX t$ is given by
\[
\textnormal{cov}[OP_d^\top P_dO^\top\bwdX t] = O\textnormal{diag}(a_{T-t}^2+b_{T-t}^2\sigma_1^2,\hdots,a_{T-t}^2+b_{T-t}^2\sigma_d^2, 0,\hdots, 0)O^\top.
\]
Therefore, we have the following explicit form of the Fr\'echet distance between $OP_d^\top P_dO^\top\bwdX t$ and $\fwdX0$:
\[
d_F(OP_d^\top P_dO^\top\bwdX t, \fwdX0) = \sum_{j=1}^D \sigma_j^2 + \sum_{j=1}^d (a_{T-t}^2+b_{T-t}^2\sigma_j^2) - 2\sum_{j=1}^d\sigma_j\sqrt{a_{T-t}^2+b_{T-t}^2\sigma_j^2}.
\]
The proof is concluded by using the same argument as in the proof of Proposition \ref{prop:min_wasserstein_projected}.
\subsection{Proof of Proposition \ref{prop:general-gaussian-min-wasserstein}}
Recall that $\hat\Lambda=\textnormal{diag}(\hat\sigma_1^2,\hdots\hat\sigma_D^2)$ the matrix of eigenvalues of the estimated covariance matrix $\hat\Sigma=\frac1n\sum_{i=1}^nX_iX_i^\top$. We first remark that
\begin{align*}
\textnormal{cov}[\hat OP_d^\top P_d\hat O^\top\bwdhatX t] &= \hat O\textnormal{diag}(a_{T-t}^2 + b_{T-t}^2 \hat\sigma_1^2,\hdots, a_{T-t}^2 + b_{T-t}^2\hat\sigma_d^2, 0,\hdots, 0)\hat O^\top \\
&= \hat O(a_{T-t}^2P_d^\top P_d + b_{T-t}^2P_d^\top P_d\hat\Lambda)\hat O^\top.    
\end{align*}
Denote the covariance matrix of $\hat OP_d^\top P_d\hat O^\top\bwdX t$ by $\hat\Sigma_d(t)$. Recall that the Fr\'echet distance between two centered Gaussian distributions is
\[
d_F^2(\cN(0, \Sigma_1), \cN(0, \Sigma_2)) = \textnormal{tr}(\Sigma_1+\Sigma_2 - 2(\Sigma_2^{1/2}\Sigma_1\Sigma_2^{1/2})^{1/2}).
\]
In the case of interest for us, we get
\[
d_F^2(\hat OP_d^\top P_d\hat O^\top\bwdhatX t, \fwdX t) = \sum_{d'=1}^D\sigma_{d'}^2 + \sum_{d'=1}^d (a_{T-t}^2 + b_{T-t}^2\hat\sigma_{d'}^2) - 2\tr((\hat\Sigma_d^{1/2}(t)\Sigma\hat\Sigma_d^{1/2}(t))^{1/2}) .
\]
We now argue that $\tr((\hat\Sigma_d^{1/2}(t)\Sigma\hat\Sigma_d^{1/2}(t))^{1/2})$ is approximately $\sum_{d'=1}^d\hat\sigma_{d'}\sqrt{a_{T-t}^2 + b_{T-t}^2\hat\sigma_{d'}^2}$. Observe that the two quantities are equal when $\Sigma$ and $\hat\Sigma$ commute, which was the case in the previous sections where we assumed that both matrices were diagonal.  By Proposition \ref{prop:estimation-error-gaussian}, with probability $1 - 2e^{-u}$, we have $\Sigma\preceq\frac1{1-\varepsilon_u}\hat\Sigma$, where $\preceq$ denotes the Loewner order \citep[see, for instance, ][Definition 7.7.1]{horn2012matrix}. Hence,
\[
\hat\Sigma_d^{1/2}(t)\Sigma\hat\Sigma_d^{1/2}(t)\preceq\frac{1}{1-\varepsilon_u}\hat\Sigma_d^{1/2}(t)\hat\Sigma\hat\Sigma_d^{1/2}(t),
\]
by Lemma \ref{lem:matrix-properties} (i).
Since square root is a matrix monotonic function (see Lemma \ref{lem:matrix-properties} (ii)), we derive that 
\begin{align*}
\tr((\hat\Sigma_d^{1/2}(t)\Sigma\hat\Sigma_d^{1/2}(t))^{1/2})&
\leq\sqrt{\frac{1}{1-\varepsilon_u}}\tr((\hat\Sigma_d^{1/2}(t)\hat\Sigma\hat\Sigma_d^{1/2}(t))^{1/2})\\
&\leq (1+\varepsilon_u)\tr((\hat\Sigma_d^{1/2}(t)\hat\Sigma\hat\Sigma_d^{1/2}(t))^{1/2}),
\end{align*}
where we use $\varepsilon_u\leq1/2$ in the last inequality. Then, by the commutativity of $\hat\Sigma_d(t)$ and $\hat\Sigma$,
\begin{align*}
\tr((\hat\Sigma_d^{1/2}(t)&\hat\Sigma\hat\Sigma_d^{1/2}(t))^{1/2}) \\
&= \tr(\hat O(a_{T-t}^2P_d^\top P_d + b_{T-t}^2P_d^\top P_d\hat\Lambda)^{1/4}\hat\Lambda^{1/2}(a_{T-t}^2P_d^\top P_d + b_{T-t}^2P_d^\top P_d\hat\Lambda)^{1/4}\hat O^\top)\\
&=  \tr(\hat O\hat\Lambda^{1/2}(a_{T-t}^2P_d^\top P_d + b_{T-t}^2P_d^\top P_d\hat\Lambda)^{1/2}\hat O^\top) \\
&= \sum_{d'=1}^d \hat\sigma_{d'}\sqrt{a_{T-t}^2+ b_{T-t}^2\hat\sigma_{d'}^2}.
\end{align*}
By combining the results, we obtain
\[
\tr((\hat\Sigma_d^{1/2}(t)\Sigma\hat\Sigma_d^{1/2}(t))^{1/2})\leq(1+\varepsilon_u)\sum_{d'=1}^d \hat\sigma_{d'}\sqrt{a_{T-t}^2+ b_{T-t}^2\hat\sigma_{d'}^2}.
\]
We may use the same argument to derive a similar lower bound, and thus deduce that
\begin{align*}
\Big|\tr((\hat\Sigma_d^{1/2}(t)\Sigma\hat\Sigma_d^{1/2}(t))^{1/2})^{1/2}) - \sum_{d'=1}^d\hat\sigma_{d'}\sqrt{a_{T-t}^2+ b_{T-t}^2\hat\sigma_{d'}^2}\Big|& \leq \varepsilon_u\sum_{d'=1}^d \hat\sigma_{d'}\sqrt{a_{T-t}^2+ b_{T-t}^2\hat\sigma_{d'}^2}.
\end{align*}
Note that if $\hat\sigma_{d'}\geq1$, then $\sqrt{a_{T-t}^2+b_{T-t}^2\hat\sigma_{d'}^2}\leq \hat\sigma_{d'}$. Hence $\hat\sigma_{d'}\sqrt{a_{T-t}^2+ b_{T-t}^2\hat\sigma_{d'}^2}\leq\hat\sigma_{d'}^2$. On the other hand, if $\hat\sigma_d<1$, then $\sqrt{a_{T-t}^2+b_{T-t}^2\hat\sigma_{d'}^2}\leq 1$ and $\hat\sigma_{d'}\sqrt{a_{T-t}^2+ b_{T-t}^2\hat\sigma_{d'}^2}\leq\hat\sigma_{d'}$. Therefore, by recalling that $S(\Sigma) = \sum_{d'=1}^D\max(\hat\sigma_{d'}, \hat\sigma_{d'}^2)$, we deduce that
\[
\Big|\tr((\hat\Sigma_d^{1/2}(t)\Sigma\hat\Sigma_d^{1/2}(t))^{1/2})^{1/2}) - \sum_{d'=1}^d\hat\sigma_{d'}\sqrt{a_{T-t}^2+ b_{T-t}^2\hat\sigma_{d'}^2}\Big|\leq S(\Sigma)\varepsilon_u.
\]

The Fr\'echet distance $d_F(\hat OP_d^\top P_{d}\hat O^\top\bwdhatX t, \fwdX t)$ may now be bounded by
\begin{align*}
\Big|d_F^2(\hat OP_d^\top P_d\hat O^\top\bwdhatX t, \fwdX t) - \Big(\sum_{d'=1}^D\sigma_{d'}^2 + \sum_{d'=1}^d (a_{T-t}^2 + b_{T-t}^2\hat\sigma_{d'}^2) - 2\sum_{d'=1}^d\hat\sigma_{d'}&\sqrt{a_{T-t}^2+b_{T-t}^2\hat\sigma_{d'}^2} \Big)\Big|\\
&\leq  2 S(\Sigma)\varepsilon_u. 
\end{align*}

Hence, for $d\in\{2,\hdots,D\}$,
\begin{align*}
\Big|d_F^2(\hat OP_d^\top P_d\hat O^\top\bwdhatX t, \fwdX t) - d_F^2(\hat O&P_{d-1}^\top P_{d-1}\hat O^\top\bwdhatX t, \fwdX t) \\
&- \sqrt{a_{T-t}^2 + b_{T-t}^2\hat\sigma_d^2}(\sqrt{a_{T-t}^2 + b_{T-t}^2\hat\sigma_d^2} - 2\hat\sigma_d)\Big|
\leq 4S(\Sigma)\varepsilon_u.
\end{align*}
We show in the following that if $t\geq \hat T_d(u)$, then \[d_F^2(\hat OP_d^\top P_d\hat O^\top\bwdhatX t, \fwdX t)\leq d_F^2(\hat OP_{d-1}^\top P_{d-1}\hat O^\top\bwdhatX t, \fwdX t).\]
Observe that,
\begin{align}
d_F^2(\hat OP_d^\top P_d\hat O^\top\bwdhatX t, \fwdX t)&\leq d_F^2(\hat OP_{d-1}^\top P_{d-1}\hat O^\top\bwdhatX t, \fwdX t)\nonumber\\
&\quad+b_{T-t}^2\hat\sigma_d^2 + a_{T-t}^2 - 2\hat\sigma_d\sqrt{b_{T-t}^2\hat\sigma_d^2 + a_{T-t}^2} + 4S(\Sigma)\varepsilon_u\nonumber\\
&=d_F^2(\hat OP_{d-1}^\top P_{d-1}\hat O^\top\bwdhatX t, \fwdX t)\nonumber\\
&\quad+(\sqrt{a_{T-t}^2(1-\hat\sigma_d^2) + \hat\sigma_d^2} - \hat\sigma_d)^2-\hat\sigma_d^2+4S(\Sigma)\varepsilon_u.~\label{eq:local-condition-bound}
\end{align}
Hence, for $t$ such that the last term \eqref{eq:local-condition-bound} is non-positive, we have $d_F^2(\hat OP_d^\top P_d\hat O^\top\bwdhatX t, \fwdX t)\leq d_F^2(\hat OP_{d-1}^\top P_{d-1}\hat O^\top\bwdhatX t, \fwdX t)$. We now show that this is true when $t\geq \hat T_d(u)$. To do so, we split our argument in two cases. We first consider the scenario where $\hat\sigma_d\geq1$. In this case, by definition, $\hat T_d(u)=0$ and therefore we prove the result holds for all $t\in[0, T]$. Observe that $\sqrt{a_{T-t}^2(1-\hat\sigma_d^2) + \hat\sigma_d^2}\in[1, \hat\sigma_d]$, therefore
\[
\eqref{eq:local-condition-bound}\leq (1-\hat\sigma_d)^2-\hat\sigma_d^2+4S(\Sigma)\varepsilon_u = 1 - 2\hat\sigma_d + 4S(\Sigma)\varepsilon_u\leq1-2\hat\sigma_d+\hat\sigma_d\leq 0,
\]
where the last inequality holds for sufficiently small $\varepsilon_u$.

Now we consider the case where $\hat\sigma_d<1$, and hence $\sqrt{a_{T-t}^2(1-\hat\sigma_d^2)+\hat\sigma_d^2}\geq \hat\sigma_d$. Therefore,
\[
\eqref{eq:local-condition-bound}\leq0\Leftrightarrow\sqrt{a_{T-t}^2(1-\hat\sigma_d^2) + \hat\sigma_d^2}\leq\hat\sigma_d + \sqrt{\hat\sigma_d^2-4S(\Sigma)\varepsilon_u}.
\]
By squaring both sides and rearranging the terms, we deduce that
\[
\eqref{eq:local-condition-bound}\leq0\Leftrightarrow a_{T-t}^2\leq\frac{\hat\sigma_d^2 - 4S(\Sigma)\varepsilon_u + 2\hat\sigma_d\sqrt{\hat\sigma_d^2 - 4S(\Sigma)\varepsilon_u}}{1 - \hat\sigma_d^2},
\]
and we conclude by observing that the last inequality is equivalent to $t\geq \hat T_d(u)$. We derive with a similar argument that if $t\leq \hat t_d(u)$ then
\[
d_F^2(\hat OP_d^\top P_d\hat O^\top\bwdhatX t, \fwdX t)\geq d_F^2(\hat OP_{d-1}^\top P_{d-1}\hat O^\top\bwdhatX t, \fwdX t),
\]
and we conclude the proof.
\paragraph{Remark.} We can again in this case consider the same scenario as in Proposition \ref{cor:isotropic_case-optimal-stopping-time} where the eigenvalues of the covariance matrix are equal. This can be an interesting direction for future work, as to generalize the previous results to this more general setup.

\subsection{Proof of Proposition \ref{prop:optimal-constrained-score}}
We begin by rewriting the expression of the score matching objective in the following form: 
\begin{align*}
\cR(s_M) &= \frac{1}{n} \sum_{i=1}^n \E_{t \sim \mathcal{T}, \varepsilon \sim \mathcal{N}(0, I_D)}\left\|s_M(b_t X_i+a_t\varepsilon, t)+\frac{\varepsilon}{a_t}\right\|^2 \\
&= \frac{1}{n} \sum_{i=1}^n \E_{t \sim \mathcal{T}, \varepsilon}\left\| -M(t)(b_t X_i+a_t\varepsilon)+\frac{\varepsilon}{a_t}\right\|^2 \quad (\textnormal{since } s_M(x,t) = -M(t)x) \\
&= \frac{1}{n} \sum_{i=1}^n \sum_{d=1}^D\E_{t\sim\mathcal{T}, \varepsilon_d}\bigg[\Big(m_d(t) (b_t X_{ik} +a_t\varepsilon_d)-\frac{\varepsilon_d}{a_t}\Big)^2\bigg].
\end{align*}

To find the optimal $M(t)$, we note that the objective and the constraint are separable across the time interval $[0, T]$. The objective is also separable across the dimensions $d \in \{1, \dots, D\}$. Hence it suffices to minimize the quantity
\[
r(m_d(t)) := \frac{1}{n} \sum_{i=1}^n \E_{\varepsilon_d}\bigg[\Big(m_d(t) (b_t X_{ik} +a_t\varepsilon_d)-\frac{\varepsilon_d}{a_t}\Big)^2\bigg]
\]
separately over $m_d(t) \in [-C, C]$ for each $t\in[0, T]$ and $d \in \{1, \dots, D\}$. Observe that the function $r: [-C, C] \to \mathbb{R}$ is a quadratic function. Its derivative is
\begin{align*}
r'(m) &= \frac{1}{n} \sum_{i=1}^n \E_{\varepsilon_d}\left[2\left(m (b_t X_{id} +a_t\varepsilon_d)-\frac{\varepsilon_d}{a_t}\right)(b_tX_{id}+a_t\varepsilon_d)\right], \\
&= \frac{2}{n}\sum_{i=1}^n\Big(m (b_t^2X_{id}^2+a_t^2) - 1\Big) \quad (\textnormal{since } \E[\varepsilon_d]=0, \E[\varepsilon_d^2]=1), \\
&=m \left( a_t^2 + b_t^2 \frac{1}{n}\sum_{i=1}^n X_{id}^2 \right) - 1.
\end{align*}
Therefore, the minimum of $r$ over $[-C, C]$ is attained at 
\[
\hat m_d(t) = \min\left(C, \frac{1}{a_t^2+b_t^2\hat\sigma_d^2}\right) ,
\]
which concludes the proof.
\subsection{Proof of Proposition \ref{prop:learned_score_optimality}}
Since the Fr\'echet distance is determined by the variance for centered random variables, the first step of the proof is to deduce the variance of $P_d\bwdtildeX 0$ for all $d$. Denote for $1\leq d\leq D$, the variance of $\bwdtildeX{t, d}$ by $V_{t, dd}$. It is known \citep[see, for instance,][Section 5.5]{sarkka2019applied} that $V_{t, dd}$ follows the following ODE:
\begin{equation}
\label{eq:variance-ode}
\frac{dV_{t, dd}}{dt} = 2(1-2\hat m_{d}(T-t))V_{t, dd} + 2,\quad V_{0, dd} = 1.
\end{equation}
An important intermediate step in this proof is to show the following:
\begin{align}
(\sqrt{V_{t, dd}}-\sigma_d)^2\leq \sigma_d^2&\quad\textnormal{, if}\quad d\leq d_1,\label{eq:case-less-than-d1}\\
(\sqrt{V_{t, dd}}-\sigma_d)^2\geq\sigma_d^2&\quad\textnormal{, if}\quad d\geq d_2.\label{eq:case-more-than-d2}
\end{align}

To do so, we first develop an explicit expression for $V_{t, dd}$:
\begin{equation}
\label{eq:solution-variance-backward}
V_{t, dd} = \exp\left(\int_0^t2(1-2\hat m_d(T-\tau))d\tau\right) + 2\int_0^t\exp\left(\int_s^t2(1-2\hat m_d(T-\tau))d\tau\right)ds.
\end{equation}
If $C < \frac{1}{a_0^2+b_0^2\hat\sigma_d^2}=\frac{1}{\hat\sigma_d^2}$, let $t_d'$ be the unique solution in $[0, T]$ of the equation $C = \hat m_d(T - t_d') = \frac{1}{a_{T-t_d'}^2+b_{T-t_d'}^2\hat\sigma_d^2}$. Otherwise, we set $t_d' = T$, which is always the case for $d\leq d_1$. Remark that if $\hat\sigma_d\geq1$, then $\frac{1}{a_t^2+b_t^2\hat\sigma_d^2}\leq 1\leq C$. Thus, for such dimension $d$, we always have $t_d'=T$ and $d\leq d_1$.

We derive an explicit expression for the term $V_{T, dd}$. We first calculate the first part by plugging in the exact form of $\hat m_d$. To do so, we recall that $a_t = \sqrt{1-e^{-2t}}$ and $b_t = e^{-t}$. Also note that $\hat m_d$ is decreasing on $[0,T]$, more precisely it is equal to $C$ on $[0, T - t_d']$ and equal to $1/(a_t^2 + b_t^2 \hat \sigma_d^2)$ for $t \in [T-t_d', T]$. With these keys facts in mind, we begin by calculating the following integrand, which for $s=0$ gives the first term in \eqref{eq:solution-variance-backward} and is the integrand of the second term.
\begin{align}
\exp&\left(\int_s^T2(1-2\hat m_d(T-\tau))d\tau\right)\nonumber\\
&= \exp\left(\int_0^{T-s}2(1-2\hat m_d(\tau))d\tau\right)\nonumber\\
&=e^{2(T-s)}\exp\left(-4\int_0^{T - s\vee t_d'}\hat m_d(\tau)d\tau\right)\exp\left(-4\int_{T - s\vee t_d'}^{T-s}\hat m_d(\tau)d\tau\right)\nonumber\\
&=e^{2(T-s)}e^{-4C(T - s\vee t_d')}e^{-4(s\vee t_d'-s)}\left(\frac{1-(1-\hat\sigma_d^2)e^{-2(T-s\vee t_d')}}{1-(1-\hat\sigma_d^2)e^{-2(T-s)}}\right)^2~\label{eq:local-integral},
\end{align}
where, in the last line, we use the following:
\[
\int\hat m_d(\tau)d\tau = \int \left(1 + \frac{e^{-2\tau}(1-\hat\sigma_d^2)}{1-(1-\hat\sigma_d^2)e^{-2\tau}}\right)d\tau = \tau + \frac{1}{2}\log\left(1 - (1-\hat\sigma_d^2)e^{-2\tau}\right).
\]
By substituting $s=0$, we see that the first term in \eqref{eq:solution-variance-backward} is equal to
\begin{equation}
\label{eq:local-first-term}
\exp\left(\int_0^T2(1-2\hat m_d(T-\tau))d\tau\right) = e^{-2T}e^{-4(C-1)(T-t_d')}\left(\frac{1-(1-\hat\sigma_d^2)e^{-2(T-t_d')}}{1-(1-\hat\sigma_d^2)e^{-2T}}\right)^2.
\end{equation}
Next, we focus on deriving an explicit expression of the second term in \eqref{eq:solution-variance-backward}. We plug in the term \eqref{eq:local-integral} and deduce that
\begin{align*}
2&\int_0^T\exp\left(\int_s^T2(1-2\hat m_d(T-\tau))d\tau\right)ds\\
&=2(\int_{t_d'}^T+\int_0^{t_d'})e^{2(T-s)}e^{-4C(T - s\vee t_d')}e^{-4(s\vee t_d'-s)}\left(\frac{1-(1-\hat\sigma_d^2)e^{-2(T-s\vee t_d')}}{1-(1-\hat\sigma_d^2)e^{-2(T-s)}}\right)^2ds\\
&=2\int_{t_d'}^T e^{2(T-s)}e^{-4C(T-s)}ds\\
&\qquad+ 2\int_0^{t_d'}e^{2(T-s)}e^{-4C(T-t_d')}e^{-4(t_d'-s)}\left(\frac{1-(1-\hat\sigma_d^2)e^{-2(T-t_d')}}{1-(1-\hat\sigma_d^2)e^{-2(T-s)}}\right)^2ds\\
&=\frac{1}{2C-1}(1-e^{(2-4C)(T-t_d')}) \\
&\qquad+ (1-(1-\hat\sigma_d^2)e^{-2(T-t_d')})^2e^{-4(C-1)(T-t_d')}\int_0^{t_d'}\frac{2e^{-2(T-s)}}{(1-(1-\hat\sigma_d^2)e^{-2(T-s)})^2}ds\\
&=\frac{1}{2C-1}(1-e^{(2-4C)(T-t_d')}) \\
&\qquad+ \frac{(1-(1-\hat\sigma_d^2)e^{-2(T-t_d')})^2e^{-4(C-1)(T-t_d')}}{1 - \hat\sigma_d^2}\left[\frac{1}{1-(1-\hat\sigma_d^2)e^{-2(T-s)}}\right]_0^{t_d'}.
\end{align*}
We see that the last term can be rewritten in the following form
\begin{align*}
\left[\frac{1}{1-(1-\hat\sigma_d^2)e^{-2(T-s)}}\right]_0^{t_d'} &= \frac{1}{1-(1-\hat\sigma_d^2)e^{-2(T-t_d')}}-\frac{1}{1-(1-\hat\sigma_d^2)e^{-2T}}\\
&=\frac{(1-\hat\sigma_d^2)(e^{-2(T-t_d')}-e^{-2T})}{(1-(1-\hat\sigma_d^2)e^{-2T})(1-(1-\hat\sigma_d^2)e^{-2(T-t_d')})}.
\end{align*}
Thus, we derive that
\begin{align}
2\int_0^T&\exp\left(\int_s^T2(1-2\hat m_d(T-\tau))d\tau\right)ds\nonumber\\ 
&= \frac{1}{2C-1}(1-e^{(2-4C)(T-t_d')})\nonumber \\
&\qquad+ \frac{(1-(1-\hat\sigma_d^2)e^{-2(T-t_d')})e^{-4(C-1)(T-t_d')}}{1-(1-\hat\sigma_d^2)e^{-2T}}(e^{-2(T-t_d')}-e^{-2T})~\label{eq:local-second-term}.
\end{align}
Therefore, by summing up the two terms \eqref{eq:local-first-term} and \eqref{eq:local-second-term}, we deduce that
\begin{align*}
V_{T, dd} &=\frac{1}{2C-1}(1-e^{(2-4C)(T-t_d')}) \\
&\quad+ \frac{(1-(1-\hat\sigma_d^2)e^{-2(T-t_d')})e^{-4(C-1)(T-t_d')}}{1-(1-\hat\sigma_d^2)e^{-2T}}(e^{-2(T-t_d')}-e^{-2T})\\
&\quad+e^{-2T}e^{-4(C-1)(T-t_d')}\left(\frac{1-(1-\hat\sigma_d^2)e^{-2(T-t_d')}}{1-(1-\hat\sigma_d^2)e^{-2T}}\right)^2\\
&=\frac{1}{2C-1}(1-e^{(2-4C)(T-t_d')}) \\
&\quad+ e^{-4(C-1)(T-t_d')}\frac{1-(1-\hat\sigma_d^2)e^{-2(T-t_d')}}{1-(1-\hat\sigma_d^2)e^{-2T}}\\
&\quad\times\left(e^{-2(T-t_d')}-e^{-2T} + e^{-2T}\frac{1-(1-\hat\sigma_d^2)e^{-2(T-t_d')}}{1-(1-\hat\sigma_d^2)e^{-2T}}\right)\\
&=\frac{1}{2C-1}(1-e^{(2-4C)(T-t_d')})\\
&\quad+ e^{-4(C-1)(T-t_d')}\frac{1-(1-\hat\sigma_d^2)e^{-2(T-t_d')}}{1-(1-\hat\sigma_d^2)e^{-2T}}\\
&\quad\times\frac{e^{-2(T-t_d')}-2(1-\hat\sigma_d^2)e^{-2(2T-t_d')} + (1-\hat\sigma_d^2)e^{-4T}}{1-(1-\hat\sigma_d^2)e^{-2T}}\\
&=\frac{1}{2C-1}(1-e^{(2-4C)(T-t_d')})\\
&\quad+ e^{(2-4C)(T-t_d')}\frac{1-(1-\hat\sigma_d^2)e^{-2(T-t_d')}}{1-(1-\hat\sigma_d^2)e^{-2T}}\frac{1-2(1-\hat\sigma_d^2)e^{-2T} + (1-\hat\sigma_d^2)e^{-2(T + t_d')}}{1-(1-\hat\sigma_d^2)e^{-2T}}.
\end{align*}
We remark that, for $d\leq d_1$ we have $t_d'=T$ and we may simplify the expression of $V_{T, dd}$ to
\begin{equation}    \label{eq:tech-2}
V_{T, dd} = \hat\sigma_d^2\frac{1 - 2(1-\hat\sigma_d^2)e^{-2T}+(1-\hat\sigma_d^2)e^{-4T}}{1 - 2(1-\hat\sigma_d^2)e^{-2T}+(1-\hat\sigma_d^2)^2e^{-4T}}.    
\end{equation}
Before we prove \eqref{eq:case-less-than-d1} and \eqref{eq:case-more-than-d2}, we categorize the behavior of $V_{t, dd}$ according to the value of $\hat\sigma_d$ and we summarize the result in the following lemma, the proof of which  we delay to the end of this proof.
\begin{lemma} \label{lem:variance-bound}
For $d\in\{1,\hdots, D\}$. If $\hat\sigma_d\geq1$, then $V_{t, dd}\geq1$ for every $t\in[0, T]$. If $\hat\sigma_d\leq1$, then $V_{t, dd}\leq1$ for every $t\in[0, T]$.
\end{lemma}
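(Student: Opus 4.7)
The plan is to prove Lemma \ref{lem:variance-bound} via a comparison argument with the constant solution $V \equiv 1$ in the ODE \eqref{eq:variance-ode}. Substituting $V_{t,dd} = W_t + 1$ into \eqref{eq:variance-ode}, I obtain
\[
W'_t = 2\bigl(1-2\hat m_d(T-t)\bigr) W_t + 4\bigl(1-\hat m_d(T-t)\bigr),\quad W_0=0.
\]
By variation of constants, the solution admits the explicit integral representation
\[
W_t = 4 \int_0^t \exp\left(\int_s^t 2\bigl(1-2\hat m_d(T-\tau)\bigr) d\tau\right) \bigl(1-\hat m_d(T-s)\bigr)\, ds.
\]
The exponential factor is strictly positive, so the sign of $W_t$ agrees with the sign of the integrand $1-\hat m_d(T-s)$, uniformly in $s$ provided that sign is constant.

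Next I would determine the sign of $1-\hat m_d$ by inspecting the formula $\hat m_d(s) = \min\bigl(C, 1/(a_s^2+b_s^2\hat\sigma_d^2)\bigr)$, using that $a_s^2+b_s^2=1$. If $\hat\sigma_d\geq 1$, then $a_s^2+b_s^2\hat\sigma_d^2\geq 1$, so $1/(a_s^2+b_s^2\hat\sigma_d^2)\leq 1<C$, giving $\hat m_d(s)\leq 1$. Hence $1-\hat m_d(T-s)\geq 0$ for all $s$, which yields $W_t\geq 0$, i.e., $V_{t,dd}\geq 1$. Conversely, if $\hat\sigma_d\leq 1$, then $a_s^2+b_s^2\hat\sigma_d^2\leq 1$, so $1/(a_s^2+b_s^2\hat\sigma_d^2)\geq 1$; combined with the standing assumption $C>1$, the minimum is still at least $1$, so $\hat m_d(s)\geq 1$. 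This gives $W_t\leq 0$, i.e., $V_{t,dd}\leq 1$, as claimed.

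The argument is elementary once the variation of constants formula is in hand; the only real point to watch is the handling of the truncation by $C$ in the definition of $\hat m_d$. I do not anticipate any substantial obstacle, since the monotonicity of $x\mapsto 1/(a_s^2+b_s^2 x)$ together with the hypothesis $C>1$ automatically forces the two cases $\hat\sigma_d\geq 1$ and $\hat\sigma_d\leq 1$ to produce integrands of definite (opposite) sign.
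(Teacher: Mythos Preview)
Your proof is correct. Both approaches hinge on the same observation that $1-\hat m_d(s)$ has a definite sign depending on whether $\hat\sigma_d\gtrless 1$, but the route is different: the paper argues by contradiction via a first-crossing-time argument, taking $t_0=\inf\{t:V_{t,dd}<1\}$ and checking that the derivative of $V_{t,dd}$ at $t_0$ (where $V_{t_0,dd}=1$) equals $4(1-\hat m_d(T-t_0))$, whose sign contradicts the assumed crossing. Your variation-of-constants representation for $W_t=V_{t,dd}-1$ is more explicit and constructive; it avoids the soft ``first touching'' step and reads off the sign of $W_t$ directly from the integrand. Both are elementary, but your version has the minor advantage of handling the boundary case $\hat\sigma_d=1$ cleanly (there $1-\hat m_d\equiv 0$, so $W_t\equiv 0$), whereas the paper's contradiction argument is phrased for strict inequalities.
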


Let us deduce from \eqref{eq:tech-2}, for $d\leq d_1$, $(\sqrt{V_{T, dd}} - \sigma_d)^2 \leq\sigma_d^2$.
We work under the high probability event that $|\sigma_d^2-\hat\sigma_d^2|\leq \sigma_d^2$ for every $d\in\{1,\hdots, D\}$, we split the proof into three cases:
\begin{itemize}
    \item If $\sigma_d > 1$ then $\hat\sigma_d \geq 1$, from \eqref{eq:tech-2}, we see that $V_{T, dd}<\hat\sigma_d^2\leq 4\sigma_d^2$, with high probability.  Thus, $(\sqrt{V_{T, dd}}-\sigma_d)^2 \leq \max((0 - \sigma_d)^2,  (2\sigma_d-\sigma_d)^2)\leq\sigma_d^2$.
    \item If $\sigma_d\in[\frac12, 1)$ which implies that $\hat\sigma_d<1$, then $V_{T,dd}\leq 1$, we then have $(\sqrt{V_{T, dd}} - \sigma_d)^2 \leq\max((0-\sigma_d)^2, (1-\sigma_d)^2)\leq\sigma_d^2$.
    \item If $\sigma_d = 1$, we again split cases depending on whether $\hat\sigma_d \geq 1$. We get the same bounds as in the two previous cases.
    \item Finally, if $\sigma_d\leq\frac12$, with high probability, $|\sigma_d^2-\hat\sigma_d^2|\leq\sigma_d^2$. Hence, $\hat\sigma_d^2\leq 2\sigma_d^2\leq\frac12$. Observing that the fraction in \eqref{eq:tech-2} is bounded by $1/(1-\hat \sigma_d^2)$, we deduce that
\[
V_{T, dd} \leq \frac{\hat\sigma_d^2}{1-\hat\sigma_d^2}\leq 2\hat\sigma_d^2 \leq 4 \sigma_d^2,\quad \forall d\leq d_1,
\]
which gives the desired bound.
\end{itemize}

Next, for $d\geq d_2$, remark by definition of $t_d'$ that $\frac{1}{1-(1-\hat\sigma_d^2)e^{-2(T-t_d')}} = C$. Also note that the definition of $d_2$ and the fact that $C>1$ implies that $\hat \sigma_d^2 <1$. Hence, 
\begin{align*}
V_{T, dd} &= \frac{1}{2C-1} + e^{(2-4C)(T-t_d')}\left(\frac{(1-2(1-\hat\sigma_d^2)e^{-2T} + (1-\hat\sigma_d^2)e^{-2(T + t_d')})}{C(1-(1-\hat\sigma_d^2)e^{-2T})^2}-\frac{1}{2C-1}\right)\\
&\geq \frac{1}{2C-1} + e^{(2-4C)(T-t_d')}\left(\frac{1}{C}-\frac{1}{2C-1}\right)\\
&\geq \frac{1}{2C-1}.
\end{align*}
Therefore, for $d\geq d_2$ we deduce that
\[
V_{T, dd} \geq \frac{1}{2C-1} \geq 4\sigma_d^2.
\]
To summarize, we derived the following bounds
\[
(\sqrt{V_{T,dd}}-\sigma_d)^2\leq \sigma_d^2, \quad\forall d\leq d_1,
\]
and
\[
(\sqrt{V_{T,dd}}-\sigma_d)^2\geq \sigma_d^2, \quad\forall d> d_2.
\]
By definition of the Fr\'echet distance, we have
\[
d_F^2(P_d^\top P_d\bwdtildeX T, \fwdX 0) = \sum_{j=1}^d (\sqrt{V_{T, jj}} - \sigma_j)^2 + \sum_{j=d+1}^D\sigma_j^2,
\]
we deduce that, for any $d < d_1\leq d_2 < d'$,
\begin{align*}
d_F(P_{d_1}^\top P_{d_1}\bwdtildeX T, \fwdX 0) &= \sum_{j=1}^{d_1} (\sqrt{V_{T, jj}} - \sigma_j)^2 + \sum_{j=d_1+1}^D\sigma_j^2\\
&= \sum_{j=1}^{d} (\sqrt{V_{T, jj}} - \sigma_j)^2 + \sum_{j=d+1}^{d_1} (\sqrt{V_{T, jj}} - \sigma_j)^2 + \sum_{j=d_1+1}^D\sigma_j^2\\
&\leq \sum_{j=1}^{d} (\sqrt{V_{T, jj}} - \sigma_j)^2 + \sum_{j=d+1}^D\sigma_j^2\\
&=d_F(P_d^\top P_d\bwdtildeX T, \fwdX 0),
\end{align*}
and
\begin{align*}
d_F(P_{d_2}^\top P_{d_2}\bwdtildeX T, \fwdX 0) &= \sum_{j=1}^{d_2} (\sqrt{V_{T, jj}} - \sigma_j)^2 + \sum_{j=d_2+1}^D\sigma_j^2 \\
&= \sum_{j=1}^{d_2} (\sqrt{V_{T, jj}} - \sigma_j)^2 + \sum_{j=d_2+1}^{d'}\sigma_j^2 + \sum_{j=d'+1}^{D}\sigma_j^2\\
&\leq \sum_{j=1}^{d'} (\sqrt{V_{T, jj}} - \sigma_j)^2 +  \sum_{j=d'+1}^{D}\sigma_j^2\\
&=d_F(P_{d'}^\top P_{d'}\bwdtildeX T, \fwdX 0).
\end{align*}
Therefore, the minimum of $d_F(P_d^\top P_{d}\bwdtildeX T, \fwdX 0)$ must occur between $d_1$ and $d_2$.

\paragraph{Proof of Lemma \ref{lem:variance-bound}.} Recall that $V_{t, dd}$ satisfies the ODE \eqref{eq:variance-ode}
\[
\frac{dV_{t, dd}}{dt} = 2(1 - 2\hat m_d(T-t))V_{t, dd} + 2,\quad V_{0, dd} = 1.
\]
Assume that $\hat\sigma_d>1$ and by contradiction that $V_{t, dd}<1$ for some $t\in[0, T]$. Let $t_0=\inf\{t:V_{t, dd}<1\}$, by continuity, we have $V_{t_0, dd}=1$. Then we have
\[
\Big[\frac{dV_{t, dd}}{dt}\Big]_{t=t_0} = 2(1-2\hat m_d(T-t_0))V_{t_0, dd} +2 = 2\Big(1 - \frac{2}{1-e^{-2t}+e^{-2t}\hat\sigma_d^2}\Big) + 2,
\]
where we use the fact that $V_{t_0, dd}=1$. The last term can be rewritten as
\[ \frac{4(\hat\sigma_d^2-1)e^{-2t}}{1 - e^{-2t}+e^{-2t}\hat\sigma_d^2}.
\]
Hence we have $[\frac{dV_{t, dd}}{dt}]_{t=t_0} > 0$ which contradicts the definition of $t_0$. Hence $V_{t, dd}\geq 1$ for all $t\in[0, T]$. The case for $\hat\sigma_d<1$ can be derived similarly.

\subsection{Derivation of special cases of Proposition \ref{prop:learned_score_optimality}}
\label{app:derivation-special-case-erm}
First, consider the scenario where the learning capacity is unconstrained, effectively setting $C=\infty$, while the data covariance matrix is nonsingular. In this case, the condition on $d_1$ becomes $0 \leq \hat\sigma_d^2$, which %
is trivially satisfied for all $d \in \{1, \hdots, D\}$, implying $d_1 = D$. The condition for $d_2$ becomes $0 > 4\sigma_d^2$, which holds for none of $d$, thus implying $d_2 = D$. Therefore, when $C=\infty$, Proposition \ref{prop:learned_score_optimality} entails that $d_{\min}=D$. This result is somewhat expected: if the score function is learned perfectly, the diffusion process can be reversed in the full ambient space, enabling sampling from the target distribution without any need for dimensionality reduction.

Second, consider the scenario addressed in Proposition \ref{cor:isotropic_case-optimal-stopping-time} where the true data distribution lies within a $d_0$-dimensional linear subspace, i.e., $\sigma_{d_0+1} = \cdots = \sigma_D = 0$ and $\sigma_1 = \cdots = \sigma_{d_0} = \sigma$. Assume that $C$ is sufficiently large to ensure that $1/C \leq \min(\sigma^2, \min_{d'\in\{1,\hdots,d_0\}}\hat\sigma_{d'}^2)$. 
Therefore, for $d\leq d_0$, one has $\frac{1}{C} \leq \hat\sigma_d^2$ (which is not satisfied anymore for $d$ beyond $d_0$), leading to $d_1=d_0$. On the other hand, for $d>d_0$ we have $\frac1{2C-1}\geq 0 = 4\sigma_d$. Hence $d_0= d_1\leq d_2\leq d_0$, which implies $d_2 = d_0$.  Thus, Proposition \ref{prop:learned_score_optimality} predicts $d_{\min} = d_0$. This suggests that the projection onto the subspace in which the data distribution lies is the optimal sampling strategy, which is in line with the recommendation of Proposition \ref{cor:isotropic_case-optimal-stopping-time}.
\paragraph{Proof of Corollary \ref{cor:special-case}.} By the definition of $d$, we have $d_1 = d$. It remains to prove that $d_2\leq d + 1$. With $n$ large enough and high probability, we have $\hat\sigma_{d+1}\geq\sigma_{d+1}^2/2$. Therefore, 
\[
\frac{1}{4(2C-1)}\geq\frac{1}{8C}\geq\frac{\hat\sigma_{d+1}^2}{8}\geq\frac{\sigma_{d+1}^2}{16}=\frac{\lambda^{-(d+1)}}{16}\geq \lambda^{-(d+2)},
\]
where we use the fact that $\lambda\geq16$. This shows that $d_2< d+2$. Hence $d_2\leq d+1$.

\section{Bounds on Gaussian estimation}
In this section, we give some bounds for the estimation error for Gaussian distributions.
\begin{proposition}
\label{prop:estimation-error-independent-gaussian}
Let $(X_1,\hdots,X_n)$ be sample drawn independently from $\cN(0, \sigma^2)$. Then, for $\varepsilon>0$, we have
\[
\PP\Big[\big|\frac{1}{n}\sum_{i=1}^nX_i^2-\sigma_d^2\big|\leq\varepsilon\sigma_d^2\Big]\geq 1-2\exp\Big(-\frac{\varepsilon^2n}{4(\varepsilon+1)}\Big).
\]
\end{proposition}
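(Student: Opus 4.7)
The plan is to reduce the statement to a standard chi-squared concentration bound via a Chernoff argument. Since the $X_i$ are i.i.d.~$\mathcal{N}(0,\sigma^2)$, the normalized sum $Z = \sigma^{-2}\sum_{i=1}^n X_i^2$ follows a chi-squared distribution with $n$ degrees of freedom, and the event of interest is equivalent to $|Z/n - 1| \leq \varepsilon$. So it suffices to bound $\PP[Z \geq (1+\varepsilon)n]$ and $\PP[Z \leq (1-\varepsilon)n]$ separately and combine them via a union bound.

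For the upper tail, I would use the moment generating function $\E[e^{tZ}] = (1-2t)^{-n/2}$ for $t < 1/2$. Applying Markov's inequality to $e^{tZ}$ and optimizing over $t$ by setting $t = (1 - n/a)/2$ for $a = (1+\varepsilon)n$ gives the Chernoff bound
\[
\PP[Z \geq (1+\varepsilon)n] \leq \exp\Bigl(-\tfrac{n}{2}\bigl(\varepsilon - \ln(1+\varepsilon)\bigr)\Bigr).
\]
The analogous computation with $t < 0$ handles the lower tail, yielding a bound of the same form with $\varepsilon - \ln(1+\varepsilon)$ replaced by $-\varepsilon - \ln(1-\varepsilon)$, which for $\varepsilon \in (0,1)$ is even larger.

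To obtain the form stated in the proposition, I would then invoke the elementary inequality $\varepsilon - \ln(1+\varepsilon) \geq \varepsilon^2/(2(1+\varepsilon))$ for $\varepsilon \geq 0$. This can be verified by checking both sides vanish at $\varepsilon = 0$ and comparing derivatives: the derivative of the LHS is $\varepsilon/(1+\varepsilon)$ and of the RHS is $\varepsilon(2+\varepsilon)/(2(1+\varepsilon)^2)$, and the inequality between these reduces to $2(1+\varepsilon) \geq 2+\varepsilon$. The same lower bound also controls the lower-tail exponent since $-\varepsilon - \ln(1-\varepsilon) \geq \varepsilon^2/2 \geq \varepsilon^2/(2(1+\varepsilon))$. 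Plugging in and combining the two tails by the union bound yields the desired inequality with factor $2$ in front of the exponential.

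There is no real obstacle here; the only care needed is in the Chernoff optimization and in checking the elementary inequality with the correct sharp constant so that the denominator $4(\varepsilon+1)$ comes out exactly as stated (the factor $4$ combines the $1/2$ from the MGF exponent with the $1/2$ from the bound on $\varepsilon - \ln(1+\varepsilon)$).
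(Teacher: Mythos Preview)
Your proposal is correct and follows essentially the same route as the paper: both reduce to the observation that $\sigma^{-2}\sum_i X_i^2\sim\chi^2(n)$ and then apply a chi-squared tail bound of the form $\PP[|Z-n|\geq \varepsilon n]\leq 2\exp(-\varepsilon^2 n/(4(1+\varepsilon)))$. The only difference is that the paper cites this bound directly from \citet{ghosh2021exponential}, whereas you derive it from scratch via the Chernoff method together with the elementary inequality $\varepsilon-\ln(1+\varepsilon)\geq \varepsilon^2/(2(1+\varepsilon))$; your argument is thus a self-contained proof of the lemma the paper invokes.
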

\begin{proof}
By \citet{ghosh2021exponential}, if $Z \sim \chi^2(p)$ and $u>0$,
\[
\PP[|Z-p| \geq u] \leq 2 \exp \Big(- \frac{u^2}{4(p+u)}\Big).
\]
The result then unfolds from standard manipulations after observing that $\frac{1}{\sigma^2}\sum_{i=1}^nX_i^2$ follows a $\chi^2(n)$.
\end{proof}
\begin{proposition}
\label{prop:estimation-error-gaussian}
Let $\Sigma$ be a semi-definite positive $D\times D$ matrix, and assume the sample $(X_1, \hdots, X_n)$ is drawn independently from $\cN(0, \Sigma)$. Then, there is a universal constant $C$ such that, with probability $1 - 2e^{-u}$, the empirical covariance matrix $\hat\Sigma = \frac1n\sum_{i=1}^nX_iX_i^\top$ satisfies: 
\[
-\frac{8C}{3}(\sqrt{\frac{D+u}{n}}+\frac{D+u}{n})\Sigma\preceq\hat\Sigma - \Sigma \preceq \frac{8C}{3}\Big(\sqrt{\frac{D+u}{n}} + \frac{D+u}{n}\Big)\Sigma,
\]
where $\preceq$ denotes the Loewner order.
\end{proposition}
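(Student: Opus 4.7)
The plan is to reduce the problem to the standard isotropic case via whitening, and then invoke a classical concentration bound for sample covariance matrices of isotropic sub-Gaussian vectors.

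First, assume without loss of generality that $\Sigma$ is invertible; otherwise one restricts attention to the range of $\Sigma$, where $X_i$ lives almost surely, and carries out the same argument on that subspace. Define $Y_i = \Sigma^{-1/2} X_i$, so that $Y_1, \ldots, Y_n$ are i.i.d.\ samples from $\mathcal{N}(0, I_D)$. Let $\hat S = \frac{1}{n}\sum_{i=1}^n Y_i Y_i^\top$. Then $\hat\Sigma = \Sigma^{1/2}\hat S\,\Sigma^{1/2}$, and the claimed Loewner inequalities $-\varepsilon \Sigma \preceq \hat\Sigma - \Sigma \preceq \varepsilon \Sigma$ become, after multiplying by $\Sigma^{-1/2}$ on both sides (which preserves the Loewner order by Lemma~\ref{lem:matrix-properties}, or by direct verification since $\Sigma^{-1/2}$ is symmetric),
\[
-\varepsilon I_D \preceq \hat S - I_D \preceq \varepsilon I_D,
\]
which is equivalent to the operator-norm bound $\|\hat S - I_D\|_{\mathrm{op}} \leq \varepsilon$.

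Next, I would invoke a standard concentration result for the empirical covariance of isotropic Gaussian (more generally sub-Gaussian) random vectors, such as Theorem~4.7.1 of Vershynin's \emph{High-Dimensional Probability}: there exists a universal constant $C>0$ such that, for every $u>0$,
\[
\mathbb{P}\Big[\|\hat S - I_D\|_{\mathrm{op}} \leq \tfrac{8C}{3}\big(\sqrt{(D+u)/n} + (D+u)/n\big)\Big] \geq 1 - 2e^{-u}.
\]
The prefactor $8C/3$ is not the canonical constant one writes down, but it is easily obtained by absorbing the standard universal constant from Vershynin's inequality (stated usually as $K^2(\sqrt{(D+u)/n} + (D+u)/n)$ with $K$ the sub-Gaussian norm of a standard Gaussian) into the single letter $C$ and carrying the factor $8/3$ through for consistency with the quantity $\varepsilon_u$ used elsewhere in the paper.

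Combining the two steps yields the desired Loewner bound with probability at least $1 - 2e^{-u}$. The only genuinely non-routine point is the passage from the scalar inequality $\|\hat S - I_D\|_{\mathrm{op}} \leq \varepsilon$ to the Loewner inequality $-\varepsilon \Sigma \preceq \hat\Sigma - \Sigma \preceq \varepsilon\Sigma$; this is a direct consequence of the fact that conjugation by the symmetric matrix $\Sigma^{1/2}$ preserves the Loewner order, together with the identities $\Sigma^{1/2} I_D \Sigma^{1/2} = \Sigma$ and $\Sigma^{1/2}\hat S\Sigma^{1/2} = \hat\Sigma$. The main obstacle, in fact, is simply locating a version of the Gaussian sample covariance concentration bound whose constants match those stated above; apart from that bookkeeping, the argument is essentially a whitening reduction.
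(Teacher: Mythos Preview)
Your proposal is correct and follows essentially the same approach as the paper: whiten by $\Sigma^{-1/2}$, invoke Vershynin's sample-covariance concentration theorem for isotropic sub-Gaussian vectors, and then conjugate back by $\Sigma^{1/2}$ using the fact that this preserves the Loewner order. The only minor difference is bookkeeping: the paper cites Theorem~4.6.1 of Vershynin (rather than 4.7.1) and explicitly identifies the prefactor $8/3$ as $K^2$ with $K=\sqrt{8/3}$ the ratio $\|X\|_{\psi_2}/\|X\|_{L_2}$ for a standard Gaussian, whereas you leave this constant slightly implicit.
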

\begin{proof}
It is shown in \citet[Theorem 4.6.1]{vershynin2018high} that, with probability $1 - 2e^{-u}$,  
\[
\|\Sigma^{-1/2}\hat\Sigma\Sigma^{-1/2} - I_D \|_{op}\leq K^2C(\sqrt{\frac{D+u}{n}}+\frac{D+u}{n}),
\]
where $\|\cdot\|_{op}$ denotes the operator norm and $K$ is a constant satisfying
\[
\|X^\top x\|_{\psi_2} \leq K \|X^\top x\|_{L_2}, \forall x\in\R^D,
\]
where $\|X\|_{\psi_2} = \inf\{K>0:\E[e^{{X^2}/{K^2}}]\leq 2\}$. It is shown in \citet[Section 2.6.1]{vershynin2018high} that, if $X$ follows a centered Gaussian distribution with standard deviation $\sigma$, then $\|X\|_{\psi_2} = \sigma\sqrt{8/3}$ and $\|X\|_{L_2}=\sigma$.
Hence, $K=\sqrt{8/3}$ in our case and we have
\[
-\frac{8C}{3}(\sqrt{\frac{D+u}{n}}+\frac{D+u}{n})I_D\preceq \Sigma^{-1/2}\hat\Sigma\Sigma^{-1/2} - I_d\preceq \frac{8C}{3}(\sqrt{\frac{D+u}{n}}+\frac{D+u}{n})I_D.
\]
By multiplying $\Sigma^{1/2}$ from left and right for both side, we derive that
\[
-\frac{8C}{3}(\sqrt{\frac{D+u}{n}}+\frac{D+u}{n})\Sigma\preceq\hat\Sigma - \Sigma \preceq \frac{8C}{3}(\sqrt{\frac{D+u}{n}}+\frac{D+u}{n})\Sigma.
\]
\end{proof}
\section{Useful Lemma}
In this section we provide some lemma that will be useful throughout the whole paper \citep[see also,][Section 7.7]{horn2012matrix}.
\begin{lemma}
\label{lem:matrix-properties}
Let $A, B$ be two symmetric $D\times D$ real matrices, and $S$ be an arbitrary $D\times D$ real matrix. The following statements hold:
\begin{enumerate}
    \item[(i)] If $A\preceq B$, then $S^\top AS\preceq S^\top BS$.
    \item[(ii)] If $A^2\preceq B^2$, then $A\preceq B$. In particular if $A$ and $B$ are semi-definite positive, then $A\preceq B\Rightarrow \sqrt A\preceq\sqrt B$.
\end{enumerate}
\end{lemma}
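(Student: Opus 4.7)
} Part (i) is an immediate consequence of the variational characterization of the Loewner order. Since $A \preceq B$ means $x^\top(B-A)x \geq 0$ for every $x\in\R^D$, I would evaluate the quadratic form associated with $S^\top(B-A)S$ at an arbitrary vector $x$: $x^\top S^\top(B-A)Sx = (Sx)^\top (B-A)(Sx) \geq 0$, using $A\preceq B$ applied to the vector $Sx$. This gives $S^\top A S \preceq S^\top B S$ with no further work.

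For part (ii), the two statements are equivalent for positive semi-definite matrices (applying the first to the PSD square roots $\sqrt A,\sqrt B$ yields the second, and conversely the second applied to $A^2,B^2$ yields the first), so my plan is to establish the operator monotonicity of the square root. The approach I would use is the standard integral representation
\begin{equation*}
\sqrt{X} = \frac{1}{\pi}\int_0^\infty X(X+tI)^{-1} t^{-1/2}\, dt, \qquad X\succeq 0,
\end{equation*}
which can be verified by diagonalizing $X$ and reducing to the scalar identity $\sqrt{x} = \frac{1}{\pi}\int_0^\infty \frac{x}{x+t} t^{-1/2} dt$ (a standard Euler-type integral). Rewriting $X(X+tI)^{-1} = I - t(X+tI)^{-1}$, the pointwise integrand statement $A(A+tI)^{-1} \preceq B(B+tI)^{-1}$ reduces to the Loewner-order reversal $(B+tI)^{-1}\preceq(A+tI)^{-1}$, which I would integrate against the positive measure $t^{-1/2}dt/\pi$ to conclude $\sqrt A\preceq \sqrt B$.

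The remaining ingredient is the inversion-reversal fact: if $0\prec X\preceq Y$ then $Y^{-1}\preceq X^{-1}$. Here I would conjugate $X\preceq Y$ by $X^{-1/2}$ using part (i) to get $I\preceq X^{-1/2}YX^{-1/2}$; this PSD matrix has all eigenvalues $\geq 1$, hence its inverse $X^{1/2}Y^{-1}X^{1/2}$ has eigenvalues $\leq 1$, i.e.\ $X^{1/2}Y^{-1}X^{1/2}\preceq I$; conjugating once more by $X^{-1/2}$ via part (i) yields $Y^{-1}\preceq X^{-1}$. Applying this with $X = A+tI$ and $Y = B+tI$ closes the argument. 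The only technical point that deserves care is justifying the matrix integral representation (convergence at $0$ and $\infty$ and interchange of integral and spectral decomposition); this is where I expect the main subtlety, though it is a routine dominated-convergence / simultaneous diagonalization verification rather than a genuine obstacle.
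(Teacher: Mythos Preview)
Your proof plan is correct and complete. Part (i) is exactly the one-line quadratic-form argument, and for part (ii) your integral-representation route (rewriting $X(X+tI)^{-1}=I-t(X+tI)^{-1}$ and invoking the order reversal of inversion, itself proved via part (i)) is the standard Löwner--Heinz argument for operator monotonicity of $x\mapsto x^{1/2}$.

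For comparison: the paper does not actually prove this lemma. It is stated as a utility result with a pointer to \citet[Section 7.7]{horn2012matrix} and used as a black box in the proof of Proposition~\ref{prop:general-gaussian-min-wasserstein}. Your write-up therefore supplies strictly more than the paper does. One small remark worth making explicit in your final version: the first sentence of (ii), read literally for arbitrary symmetric $A,B$, is false (e.g.\ $A=I$, $B=-I$), so you should state clearly that (ii) is meant under the PSD hypothesis---you already work in that regime, but the caveat deserves a sentence.
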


\section{Experiment details}    \label{app:exp}
\subsection{Synthetic data experiment}
\paragraph{Synthetic Gaussian data.} In the experiment of Figure \ref{fig:projection_experiments}, we generate data using Gaussian distribution with covariance matrices equal to $\textnormal{diag}(1, 0.6, 0.6^2, \hdots, 0.6^6, 10^{-10}, 10^{-10})$ (left) and $\textnormal{diag}(10, 0.2, 0.2, 0.2, 0, 0)$ (right). We then generate sample by first estimating the variances with the data with $1$k sample, then solving the SDE \eqref{eq:SDE-backward-estimated} separately for each projection. We generate new sample using the Ornstein-Uhlenbeck process with $T=2$ and $1000$ discretization steps.
\paragraph{Synthetic Gaussian data with exponentially decaying eigenvalues.} In the experiment of Figure \ref{fig:linear-model-score-matching}, we generate data using Gaussian distribution with covariance matrices equal to $\textnormal{diag}(1, 1/4, 1/4^2, \hdots, 1/4^9)$. We then train a linear model with score matching using $10$ thousand samples, and we clip the parameters of the model. We generate new sample using the Ornstein-Uhlenbeck process with $T=2$ and $1000$ discretization steps.
\begin{figure}[h!]
        \centering
        \includegraphics[width=0.5\linewidth]{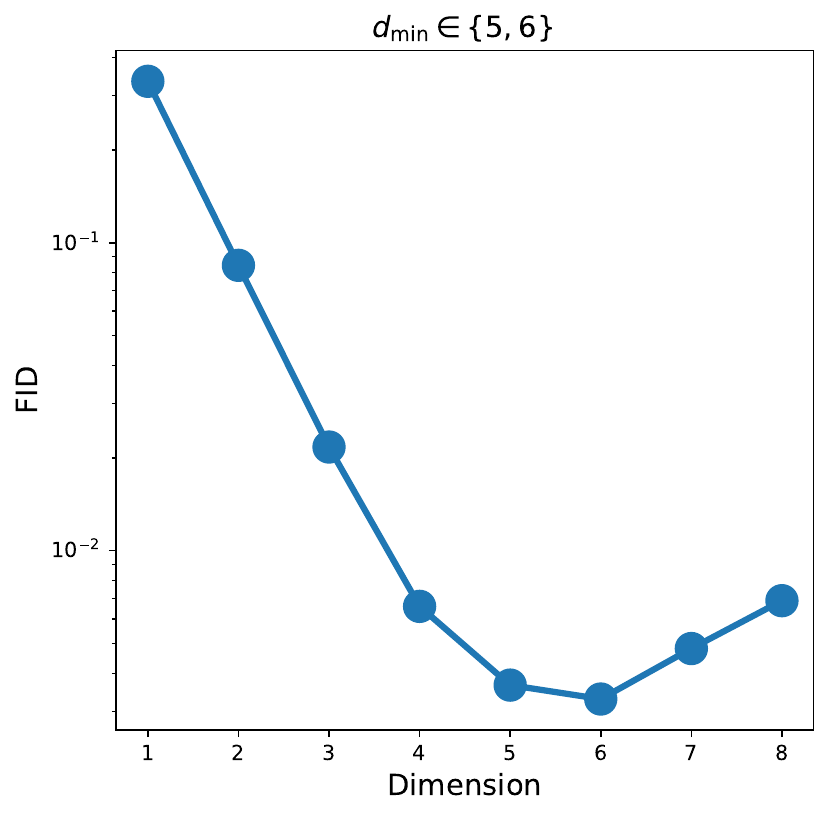}
        \caption{Fr\'echet distance of the sample generated by a diffusion model, where the data follows a $10$-dimensional Gaussian distribution with exponentially decaying eigenvalues, while the score is a linear function trained with sample projected onto a lower dimension. Corollary \ref{cor:special-case} correctly predicts the optimal projection dimension, here $d_{\min}\in\{5, 6\}$.}
\label{fig:linear-model-score-matching}
\end{figure}
\subsection{Natural image experiment}
\paragraph{Common details.} We use the dataset CelebA and CelebA-HQ \citep{liu2015faceattributes} and ImageNet-256 \citep{deng2009imagenet}. We use a U-Net model \citep{ronneberger2015u} and an Adam optimizer \citep{kingma2014adam}. The diffusion model uses variance preserving noise schedule \citep{song2020score}.  The code was implemented in JAX \citep{jax2018github}.
\paragraph{Training of AE on CelebA-HQ.} We train a VQ-VAE using the VQ-GAN loss \citep{esser2021taming} for $1.95$ million step on $20$ TPUv2. The VQ-VAE encodes the images to a latent space of shape $64\times64\times3$ and reaches a $2$k-rFID score of $2.44$. Other hyperparameters for training is summarized in Table~\ref{tab:hyperparams-AE}.
\begin{table}[ht]
    \centering
    \begin{tabular}{cc}
    \toprule
    {\bf Name} & {\bf Value} \\
    \midrule
    Coefficient of the adversarial loss & $0.1$\\
    Coefficient of the generator loss & $100$\\
    Coefficient of the LPIPS loss & $1.0$ \\
    Coefficient of the discriminator loss & $0.01$ \\
    Number of embeddings of the vector quantizer & $8192$ \\
    Optimizer & Adam with standard hyperparameters \\
    EMA decay & $0.9999$ \\
    Learning rate & $10^{-5}$ \\
    Batch size & $16$ \\
    \bottomrule
    \end{tabular}
    \caption{Hyperparameters for training VQ-VAE on CelebA-HQ.}
    \label{tab:hyperparams-AE}
\end{table}
\paragraph{Training of AE on ImageNet-256.} We train a VQ-VAE using the VQ-GAN loss \citep{esser2021taming} for $715$ thousand (resp. $1.81$ million) steps on $20$ TPUv2 for a latent shape of $32\times32\times4$ (resp. $64\times64\times3$). The VQ-VAE achieves an rFID score of $3.6$ (resp. $4.2$). Other hyperparameters for training is summarized in Table~\ref{tab:hyperparams-AE-Imagenet}.
\begin{table}[ht]
    \centering
    \begin{tabular}{cc}
    \toprule
    {\bf Name} & {\bf Value} \\
    \midrule
    Coefficient of the adversarial loss & $0.1$\\
    Coefficient of the generator loss & $100$\\
    Coefficient of the LPIPS loss & $1.0$ \\
    Coefficient of the discriminator loss & $0.01$ \\
    Number of embeddings of the vector quantizer & $8192$ \\
    Optimizer & Adam with standard hyperparameters \\
    EMA decay & $0.999$ \\
    Learning rate & $10^{-5}$ \\
    Batch size & $16$ \\
    \bottomrule
    \end{tabular}
    \caption{Hyperparameters for training VQ-VAE on ImageNet-256.}
    \label{tab:hyperparams-AE-Imagenet}
\end{table}
\paragraph{Training of LDM on CelebA-HQ.} We train an LDM on the images encoded by the AE we described above. We train for $5.25$ million steps on $8$ TPUv6. We summarize the hyperparameters used in Table \ref{tab:hyperparams-LDM}.
\begin{table}[ht]
    \centering
    \begin{tabular}{cc}
    \toprule
    {\bf Name} & {\bf Value} \\
    \midrule
    Noise schedule & Variance Preserving \\
    Number of sampling steps & $250$ \\
    Optimizer & Adam with standard hyperparameters \\
    EMA decay & $0.9999$ \\
    Learning rate & $10^{-4}$ \\
    Batch size & $16$ \\
    \bottomrule
    \end{tabular}
    \caption{Hyperparameters for training LDM on encoded images of CelebA-HQ.}
    \label{tab:hyperparams-LDM}
\end{table}
\begin{table}[ht]
    \centering
    \begin{tabular}{cc}
    \toprule
    {\bf Name} & {\bf Value} \\
    \midrule
    Noise schedule & Variance Preserving \\
    Number of sampling steps & $250$ \\
    Optimizer & Adam with standard hyperparameters \\
    EMA decay & $0.9999$ \\
    Learning rate & $10^{-4}$ \\
    Batch size & $128$ \\
    \bottomrule
    \end{tabular}
    \caption{Hyperparameters for training diffusion model on CelebA and ImageNet-64.}
    \label{tab:hyperparams-DM}
\end{table}
\paragraph{Training of LDM on ImageNet-256.} We train two LDMs on the images encoded by the AEs we described above. We train for $1$ million steps on $8$ TPUv6 for both LDMs. We summarize the hyperparameters used in Table \ref{tab:hyperparams-LDM-Imagenet}.
\begin{table}[ht]
    \centering
    \begin{tabular}{cc}
    \toprule
    {\bf Name} & {\bf Value} \\
    \midrule
    Noise schedule & Variance Preserving \\
    Number of sampling steps & $250$ \\
    Optimizer & Adam with standard hyperparameters \\
    EMA decay & $0.999$ \\
    Learning rate & $10^{-4}$ \\
    Batch size & $1024$ \\
    \bottomrule
    \end{tabular}
    \caption{Hyperparameters for training LDM on encoded images of ImageNet-256.}
    \label{tab:hyperparams-LDM-Imagenet}
\end{table}
\paragraph{Training pixel diffusion model on CelebA and ImageNet-64.} We train diffusion models directly on CelebA and ImageNet-64. We train both models for $1$ million steps on $12$ TPUv2. We summarize the hyperparameters in Table \ref{tab:hyperparams-DM}.

\paragraph{Results.} We previously introduced some results in Section \ref{sec:intro}. Here, we present additional evidence regarding the quality of the generated images. We observe (Figure \ref{fig:latent-diffusion}) that in the final few steps, the sample of LDM does not change visibly. On the contrary, the images generated in pixel space (Figure \ref{fig:pixel-diffusion}) are still denoised even in the last steps.

\paragraph{Linear AE and U-Net LDM on MNIST \citep{deng2012mnist}.} We train linear AE on MNIST with dimension $64$, $256$, and $400$ for $5$k steps each using $4$ TPUv2. Then we train U-Net diffusion models paired with each AE for $10$k steps each using $4$ TPUv2. We summarize other hyperparameters in Table \ref{tab:hyperparams-AE-MNIST} and \ref{tab:hyperparams-LDM-MNIST}.

\begin{table}[ht]
    \centering
    \begin{tabular}{cc}
    \toprule
    {\bf Name} & {\bf Value} \\
    \midrule
    Optimizer & Adam with standard hyperparameters \\
    Learning rate & $0.003$ \\
    Batch size & $256$ \\
    \bottomrule
    \end{tabular}
    \caption{Hyperparameters for training Linear AE on MNIST.}
    \label{tab:hyperparams-AE-MNIST}
\end{table}
\begin{table}[h!]
    \centering
    \begin{tabular}{cc}
    \toprule
    {\bf Name} & {\bf Value} \\
    \midrule
    Noise schedule & Variance Preserving \\
    Number of sampling steps & $250$ \\
    Optimizer & Adam with standard hyperparameters \\
    EMA decay & $0.999$ \\
    Learning rate & $10^{-4}$ \\
    Batch size & $256$ \\
    \bottomrule
    \end{tabular}
    \caption{Hyperparameters for training LDM on encoded images of MNIST.}
    \label{tab:hyperparams-LDM-MNIST}
\end{table}
\begin{figure}
    \centering
    \includegraphics[width=0.7\linewidth]{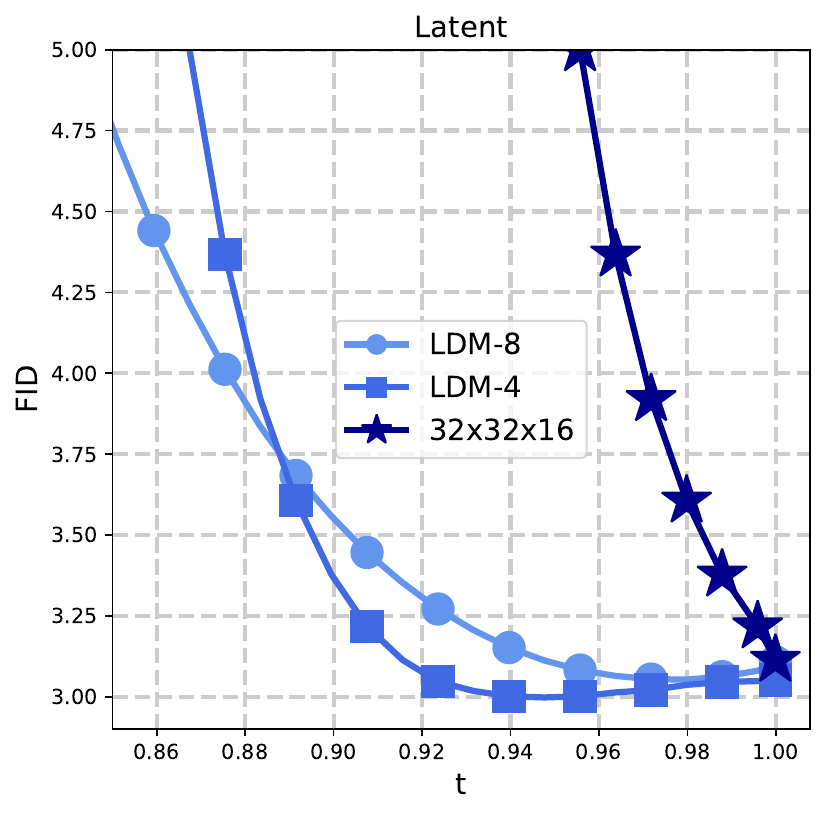}
    \caption{Zoom in of Figure \ref{fig:LDM-AE-imagenet}.}
    \label{fig:zoom-in}
\end{figure}
\begin{figure}[ht]
\centering
\hfill
\includegraphics[width=.45\linewidth]{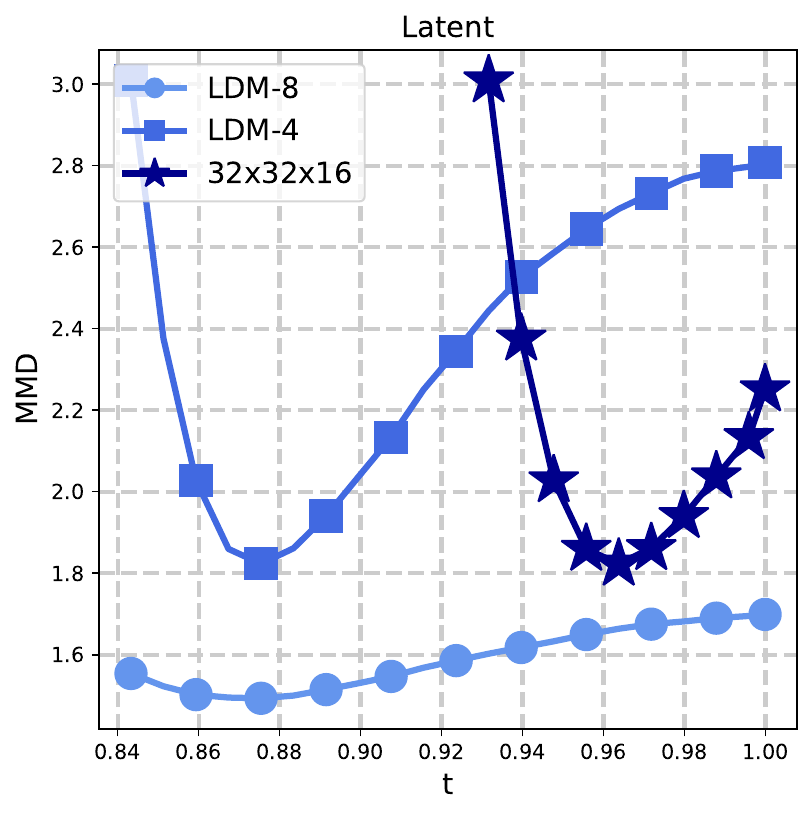}
\hfill
\includegraphics[width=.45\linewidth]{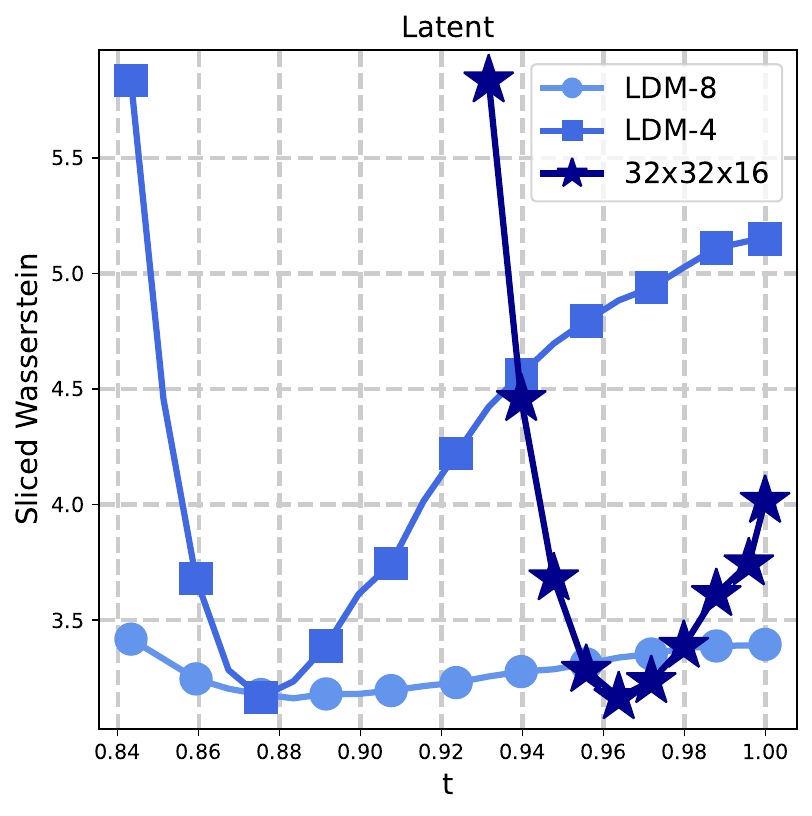}
\hfill\hfill
\caption{We measure the image quality of different LDMs on ImageNet-256 by MMD and Sliced Wasserstein distance.}
\end{figure}
\begin{figure}[ht]
    \centering
    \includegraphics[width=\linewidth]{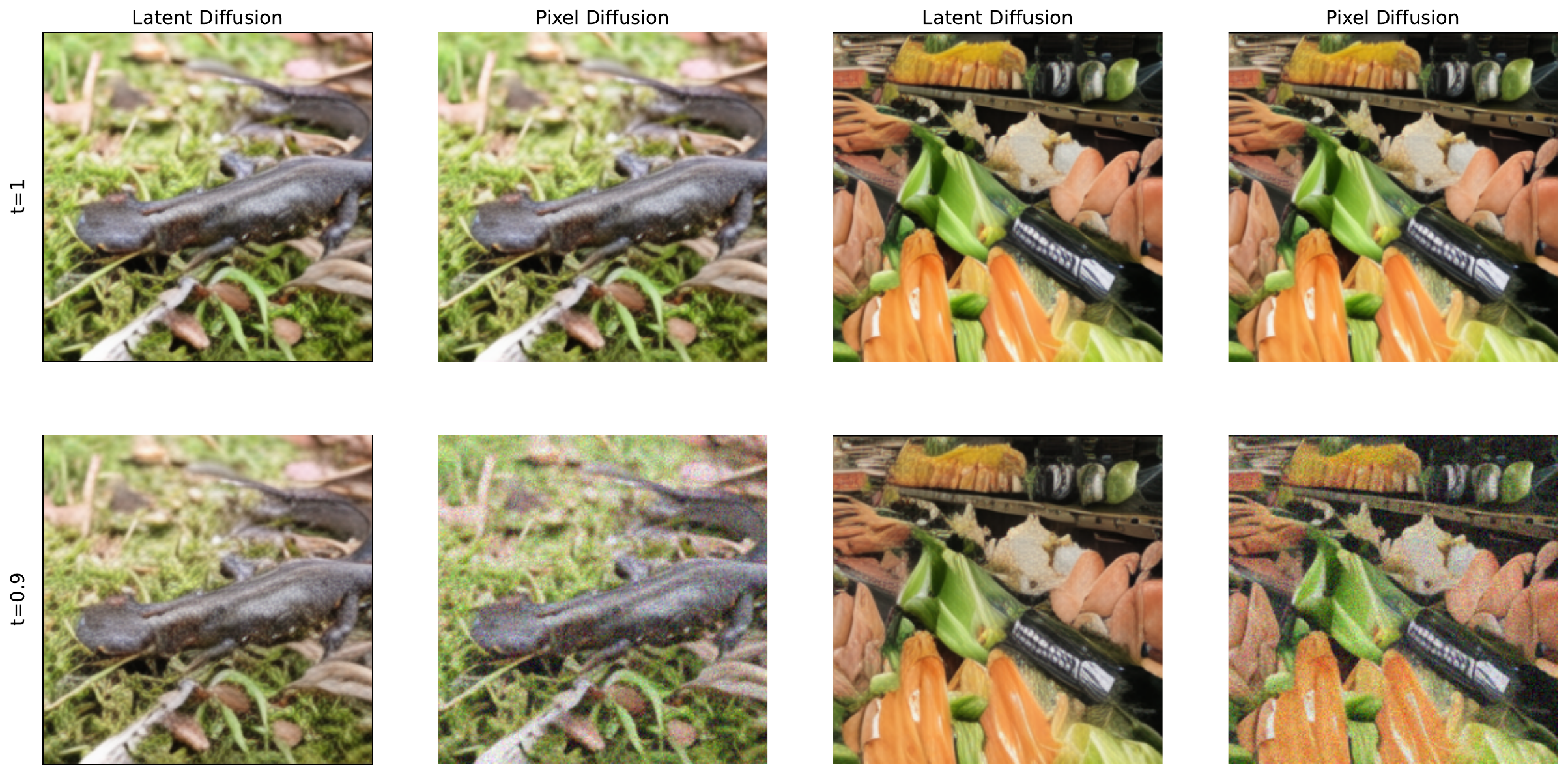}
    \caption{We compare qualitative results of image generated with LDM and pixel diffusion. The pixel diffusion images are generated by directly adding Gaussian noise to simulate the real pixel diffusion process.}
\end{figure}
\begin{figure}
    \centering
    \includegraphics[width=.7\linewidth]{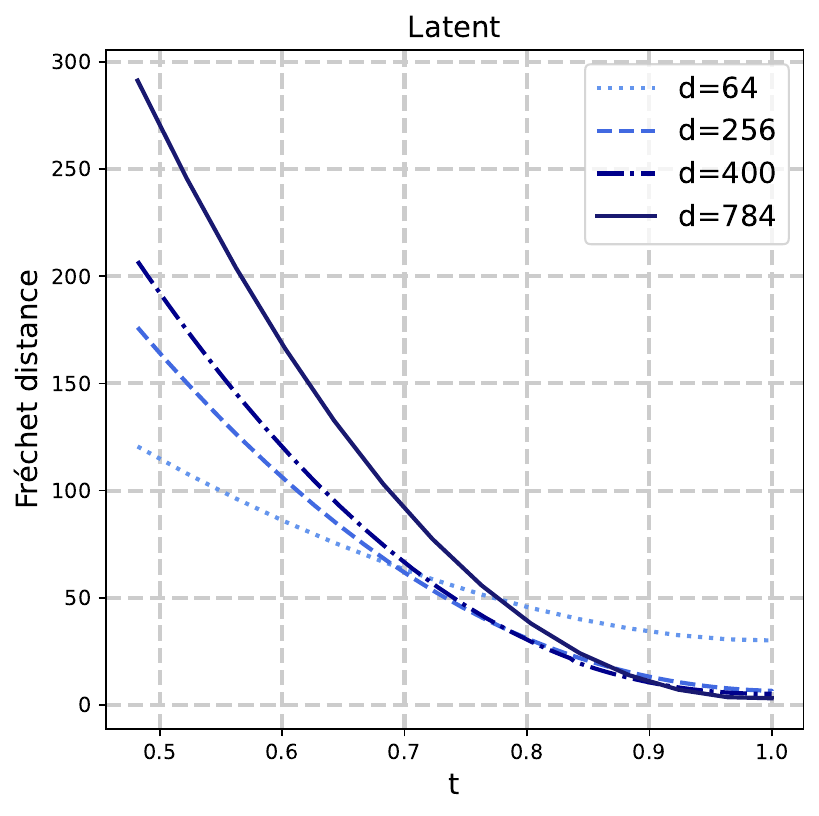}
    \caption{We train linear autoencoders on MNIST, then corresponding U-Net LDMs. The experiment shows a behavior similar to Proposition \ref{prop:min_wasserstein_projected} and Figure \ref{fig:projection_experiments}: for each LDM there exists a time interval such that the LDM is optimal, and for later diffusion times, a larger latent dimension is better.}
\end{figure}
\begin{figure}
    \centering
    \includegraphics[width=0.48\linewidth]{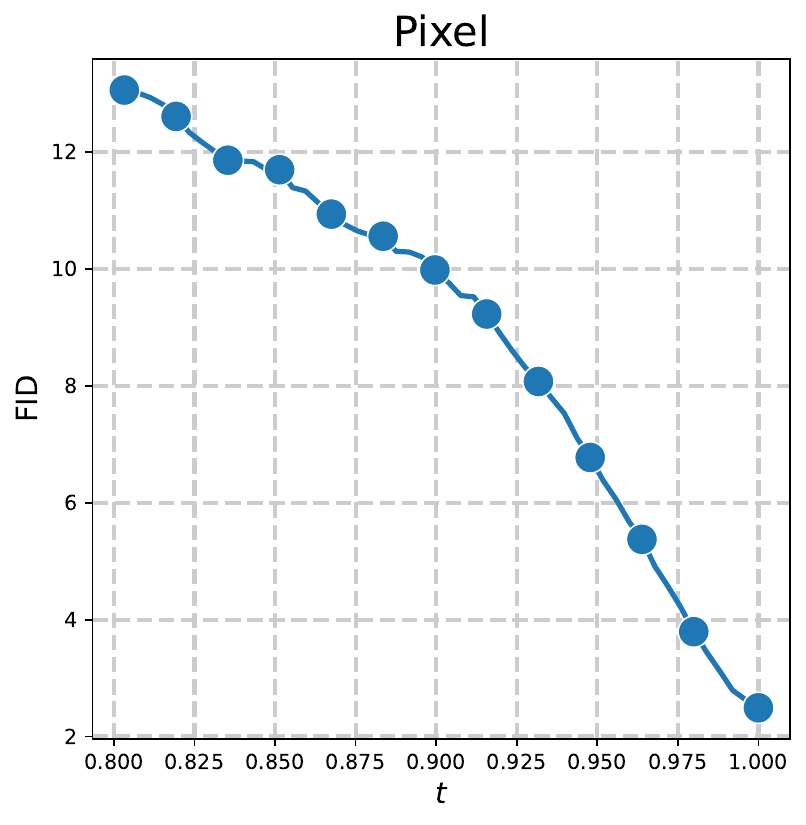}
    \includegraphics[width=0.48\linewidth]{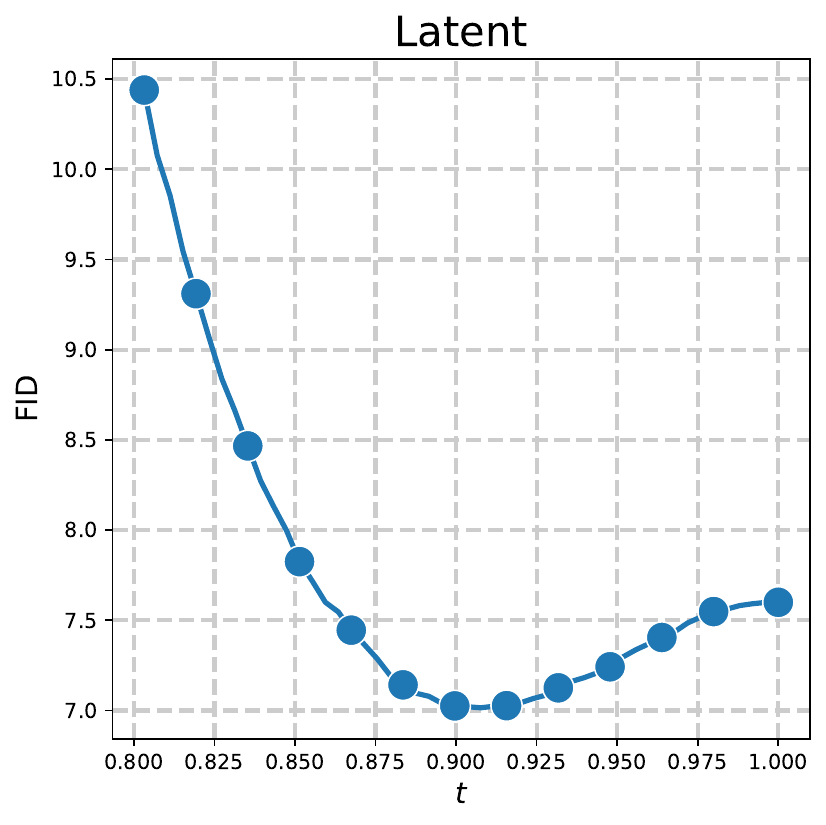}
    \caption{We train AE on CelebA-HQ and a corresponding LDM. In addition, we compare it to the FID curve of pixel diffusion model trained on CelebA. The experiment shows monotonicity in FID curve of pixel diffusion, while the FID curve of LDM achieves a minimum before the final diffusion time.}
    \label{fig:celeba-latent-pixel-fid-comparison}
\end{figure}
\begin{figure}[ht]
\centering
    \includegraphics[width=0.7\linewidth]{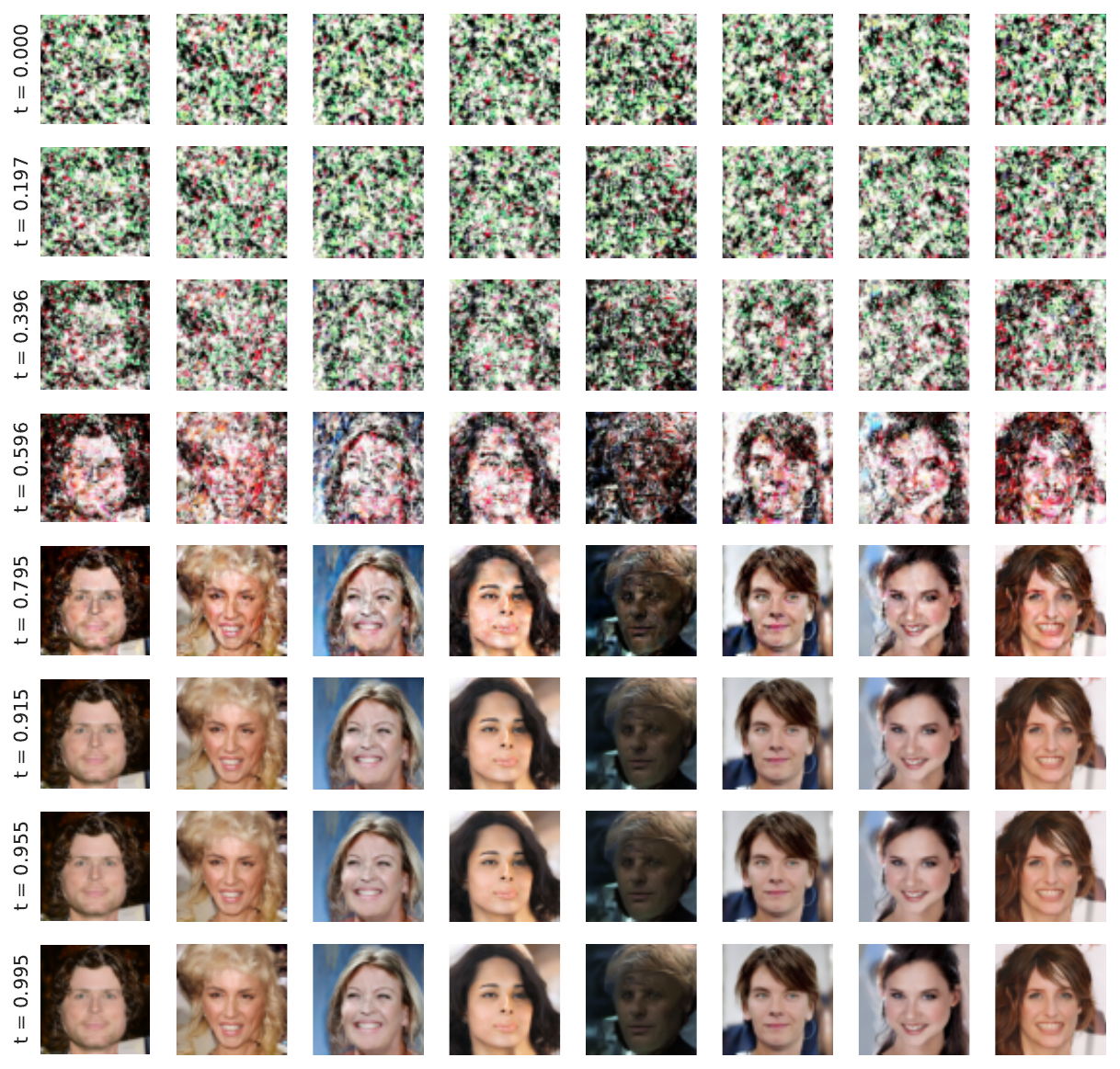}
    \caption{The final steps of LDM do not improve image quality.}
    \label{fig:latent-diffusion}
\end{figure}
\begin{figure}
    \centering
    \includegraphics[width=.7\linewidth]{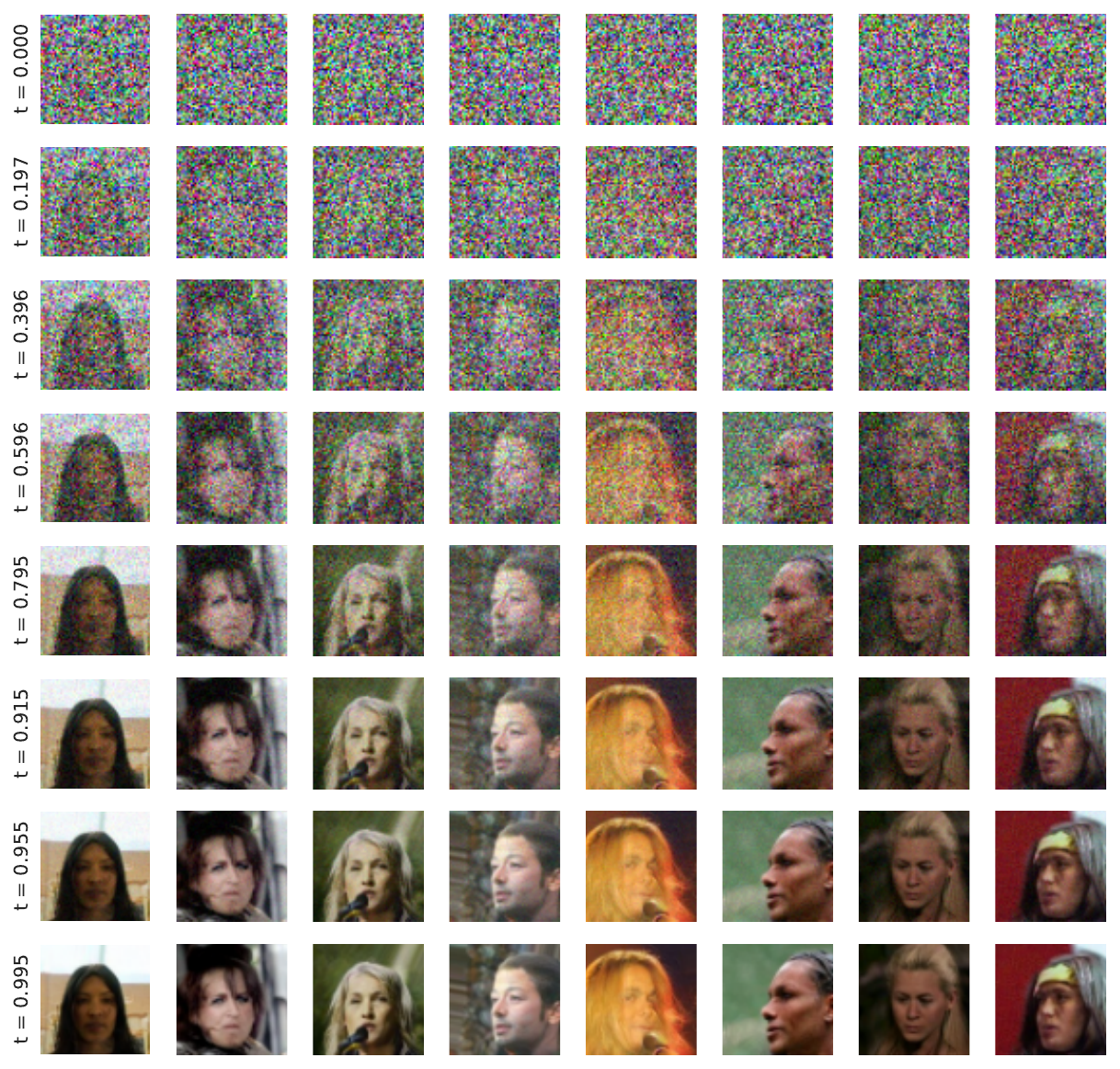}
    \caption{The quality of sample in diffusion on pixel space is still increasing in the final few steps.}
    \label{fig:pixel-diffusion}
\end{figure}

\end{document}